\pgfplotsset{select coords between index/.style 2 args={
		x filter/.code={
			\ifnum\coordindex<#1\fi
			\ifnum\coordindex>#2\fi
		}
}}
\newtheorem{theorem}{Theorem}[chapter]
	\newtheorem{corollary}{Corollary}[chapter]
	\newtheorem{definition}{Definition}[chapter]
	\newtheorem{remark}{Remark}[chapter]
\newtheorem{tempassu}{Temporary Assumption}[chapter]
\newtheorem{fact}{Fact}[chapter]
\newtheorem{assu}{Assumption}[chapter]
\begin{document}

\newcommand{\x}{\bm{x}}
\newcommand{\X}{\bm{X}}
\newcommand{\s}{\bm{s}}
\renewcommand{\S}{\bm{S}}
\newcommand{\vv}{\bm{v}}
\renewcommand{\c}{c}

\newcommand{\e}{\bm{e}}
\newcommand{\y}{\bm{y}}
\newcommand{\w}{\bm{w}}
\newcommand{\U}{{\bm{U}}}
\newcommand{\R}{{\bm{R}}}
\newcommand{\G}{\bm{G}}
\newcommand{\M}{\bm{M}}
\renewcommand{\H}{\bm{H}}

\newcommand{\I}{\bm{I}}
\newcommand{\Y}{\bm{Y}}
\newcommand{\uhat}{{\bm{\u}}}
\newcommand{\Zhat}{{\bm{\hat{Z}}}}
\newcommand{\Z}{{\bm{{Z}}}}
\newcommand{\D}{{\bm{D}}}
\newcommand{\Q}{{\bm{Q}}}
\newcommand{\F}{{\bm{F}}}
\newcommand{\iidsim}{\stackrel{\mathrm{iid}}{\thicksim }}
\newcommand{\n}{{\cal{N}}}
\newcommand{\indepsim}{\stackrel{\mathrm{indep.}}{\thicksim }}
\newcommand{\SE}{\text{{SubsDist}}}
\newcommand{\dist}{\mathrm{dist}}
\newcommand{\tta}{\ta^{\text{trunc}}}
\newcommand{\A}{\bm{A}}
\newcommand{\full}{{\mathrm{full}}}

\newcommand{\Span}{\mathrm{span}}
\newcommand{\rank}{\mathrm{rank}}
\newcommand{\evdeq}{\overset{\mathrm{EVD}}=} 
\newcommand{\svdeq}{\overset{\mathrm{SVD}}=} 
\newcommand{\qreq}{\overset{\mathrm{QR}}=} 
\newcommand{\bi}{\begin{itemize}} \newcommand{\ei}{\end{itemize}}
\newcommand{\ben}{\begin{enumerate}} \newcommand{\een}{\end{enumerate}}
\newcommand{\vs}{\vspace{0.1in}}
\newcommand{\vsl}{\vspace{0.05in}}
\newcommand{\vsm}{\vspace{-0.1in}}

\renewcommand{\implies}{\Rightarrow}
\newcommand{\V}{{\bm{V}}}
\newcommand{\W}{{\bm{W}}}


\newcommand{\cblue}{\color{blue}}
\newcommand{\cbl}{\color{black}}
\newcommand{\cred}{\color{red}}
\newcommand{\skipit}{ }

\newcommand{\trace}{\mathrm{tr}} 

\newcommand{\qfull}{q_\full}
\newcommand{\sub}{{\mathrm{sub}}}

\newcommand{\bbf}{\mathbb{F}}
\newcommand{\dsW}{\mathds{W}}
\newcommand{\dsZ}{\mathds{Z}}

\newcommand{\mtx}[1]{\mathbf{#1}}
\newcommand{\vct}[1]{\mathbf{#1}}
\newcommand{\abs}[1]{\left|#1\right|}
\newcommand{\p}{\bm{p}}
\renewcommand{\j}{\bm{j}}
\newcommand{\uu}{{\bm{u}}}
\newcommand{\tot}{\mathrm{tot}}
\renewcommand{\a}{\bm{a}}
\newcommand{\h}{\bm{h}}
\renewcommand{\b}{\bm{b}}
\newcommand{\B}{\bm{B}}
\renewcommand{\aa}{\bm{a}}

\newcommand{\xhat}{\hat\x}
\newcommand{\Xhat}{\hat\X}

\newcommand{\bhat}{\hat\b}
\newcommand{\Uhat}{\hat\U}

\newcommand{\bP}{{\bm{P}}}

\newcommand{\z}{\bm{z}}
\newcommand{\indic}{\mathbbm{1}}
\newcommand{\one}{\mathbm{1}}

\newcommand{\C}{\bm{C}}
\newcommand{\Chat}{\bm{\hat{C}}}
\newcommand{\cbold}{\bm{c}}

\newcommand{\bz}{\boldsymbol{z}}

\setlength{\arraycolsep}{0.01cm}

\newcommand{\matdist}{\text{mat-dist}}
\newcommand{\Bstar}{{\B^*}}
\newcommand{\bstar}{\b^*}
\newcommand{\tB}{\B^*}
\newcommand{\tb}{\b^*}

\newcommand{\td}{\tilde{\bm{d}}^*}
\newcommand{\init}{{\mathrm{init}}}

\newcommand{\Ustar}{\U^*{}}
\newcommand{\Xstar}{\X^*}
\newcommand{\xstar}{\x^*}
\newcommand{\sstar}{\s^*}
\newcommand{\Sstar}{\S^*}
\newcommand{\Vstar}{\V^*{}}

\newcommand{\deltinit}{\delta_\init}
\newcommand{\deltapt}{\delta_{t}}
\newcommand{\deltaptplus}{\delta_{t+1}}

\newcommand{\bSigma}{{\bm\Sigma^*}}
\newcommand{\tSigma}{\bm{E}_{det}}
\newcommand{\sigmin}{{\sigma_{\min}^*}}
\newcommand{\sigmax}{{\sigma_{\max}^*}}

\newcommand{\HH}{{\bm{D}}}
\newcommand{\GG}{{\bm{M}}}
\newcommand{\SSS}{{\bm{S}}}
\newcommand{\ik}{{ik}}
\newcommand{\J}{\mathcal{J}}

\renewcommand{\P}{\bm{P}}
\newcommand{\proj}{\mathcal{P}}
\newcommand{\E}{\mathbb{E}}
\newcommand{\norm}[1]{\left\|#1\right\|}

\renewcommand{\P}{\bm{U}}
\newcommand{\Phat}{\hat\P} 
\newcommand{\Lam}{\bm\Lambda} 
\renewcommand{\L}{\bm{L}}
\renewcommand{\V}{\bm{V}}

\renewcommand{\l}{\bm{\ell}}
\renewcommand{\v}{\bm{v}}
\newcommand{\tty}{\tilde\y}

\newcommand{\lhat}{\hat\l}

\newcommand{\at}{\a_t}
\newcommand{\yt}{\y_t}
\newcommand{\lt}{\l_t}
\newcommand{\xt}{\x_t}
\newcommand{\vt}{\v_t}
\newcommand{\et}{\e_t}

\newcommand{\T}{\mathcal{T}}
\newcommand{\Tmiss}{{\T_{miss}}}
\newcommand{\Tspar}{{\T_{spar}}}
\newcommand{\outfrac}{\text{\scriptsize{max-outlier-frac}}}
\newcommand{\outfracrow}{\text{\scriptsize{max-outlier-frac-row}}}
\newcommand{\outfraccol}{\text{\scriptsize{max-outlier-frac-col}}}
\newcommand{\missfracrow}{\text{\scriptsize{max-miss-frac-row}}}
\newcommand{\missfraccol}{\text{\scriptsize{max-miss-frac-col}}}

\newcommand{\smin}{s_{min}}
\newcommand{\smax}{s_{max}}

\newcommand{\bea}{\begin{eqnarray}}
\newcommand{\eea}{\end{eqnarray}}

\newcommand{\nn}{\nonumber}
\newcommand{\ds}{\displaystyle}

\newcommand{\snr}{\text{SNR}}

\newcommand{\tk}{\tilde{k}}
\newcommand{\tl}{{\ell}}
\renewcommand{\tot}{L}

\newcommand{\bfpara}[1]{\noindent {\bf #1. }} %
\newcommand{\empara}[1]{\noindent {\em #1. }} %
\newcommand{\para}[1]{\noindent {\bf #1. }} %
\newcommand{\Subsubsection}[1]{ \vspace{-0.14in} \subsubsection{#1}  \vspace{-0.1in} }
\newcommand{\Section}[1]{ \vspace{-0.15in} \section{#1}  \vspace{-0.12in} } 
\newcommand{\Subsection}[1]{ \vspace{-0.15in} \subsection{#1}  \vspace{-0.1in} }   
\newcommand{\Item}{ \vspace{-0.05in} \item \vspace{-0.1in} }
\renewcommand{\forall}{\text{ for all }}

\renewcommand{\subsubsection}{\bfpara}

\newcommand{\tX}{\bm{\mathcal{X}}}
\newcommand{\tXstar}{\bm{\mathcal{X}^*}}
\newcommand{\tG}{\bm{\mathcal{B}}}
\newcommand{\tGstar}{\bm{\mathcal{G}^*}}
\newcommand{\tS}{\bm{\mathcal{S}}}
\newcommand{\tSstar}{\bm{\mathcal{S}^*}}
\newcommand{\tA}{\bm{\mathcal{A}}}
\newcommand{\tY}{\bm{\mathcal{Y}}}
\newcommand{\tZ}{\bm{\mathcal{Z}}}
\newcommand{\ttY}{\bm{\tilde{\mathcal{Y}}}}
\newcommand{\Ph}{\bm{\Phi}}
\newcommand{\Ps}{\bm{\Psi}}

\newcommand{\eps}{\epsilon}

\newcommand{\zero}{\bm{0}}
\renewcommand{\Z}{{\bm{Z}}}
\newcommand{\argmin}{\arg\min}
\newcommand{\Zfull}{\bm{Z}}
\newcommand{\Za}{{\bm{Z}_a}} \newcommand{\Zb}{{\bm{Z}_b}}
\renewcommand{\S}{\bm{S}}

\newcommand{\ymag}{{\y_{mag}}}

\newcommand{\bfA}{\mathbf{C}}  
\newcommand{\bfB}{\mathbf{D}}  
\newcommand{\bfx}{\mathbf{x}}

\newcommand{\calS}{\mathcal{S}}
\newcommand{\calD}{\mathcal{D}}

\renewcommand{\Ustar}{{\U^*}}
\renewcommand{\Bstar}{\B^*}
\renewcommand{\P}{\mathcal{P}}
\renewcommand{\Y}{{\bm{Y}}}
\renewcommand{\C}{\bm{C}}

\newcommand{\Pperp}{\P_{\Ustar,\perp}}

\renewcommand{\xhat}{\x}
\renewcommand{\Xhat}{\X}
\newcommand{\g}{\bm{g}}

\newcommand{\gradU}{\mathrm{gradU}}
\newcommand{\Utilde}{\tilde{\U}}
\newcommand{\sigmamin}{\sigma_{\min}} \newcommand{\sigmamax}{\sigma_{\max}}

\newcommand{\Err}{\mathrm{Err}}

\renewcommand{\ik}{{ki}}
\newcommand{\jk}{{jk}}
\renewcommand{\u}{\bm{u}}
\newcommand{\ustar}{{\u^*}}
\renewcommand{\b}{\bm{b}}
\newcommand{\m}{\bm{m}}
\newcommand{\calU}{\mathcal{U}}

\newcommand{\totl}{\gamma}
\newcommand{\mathcalA}{\mathcal{A}}

\title{AltGDmin: Alternating GD and Minimization \\ for Partly-Decoupled (Federated) Optimization}
\author{Namrata Vaswani 
\\ Iowa State University, Ames, IA, USA, \\ Contact: namrata@iastate.edu}
\maketitle

\begin{abstract}
This article describes a novel optimization solution framework, called alternating gradient descent (GD) and minimization (AltGDmin), that is useful for many problems for which alternating minimization (AltMin) is a popular solution.  AltMin is a special case of the block coordinate descent algorithm that is useful for problems in which minimization w.r.t one subset of variables keeping the other fixed is closed form or otherwise reliably solved. Denote the two blocks/subsets of the optimization variables $\Z$ by $\Za, \Zb$, i.e.,  $\Z = \{\Za, \Zb\}$.
AltGDmin is often a faster solution than AltMin for any problem  for which (i) the minimization over one set of variables, $\Zb$, is much quicker than that over the other set, $\Za$; and (ii) the cost function is differentiable w.r.t. $\Za$. Often, the reason for one minimization to be quicker is that the problem is ``decoupled" for $\Zb$ and each of the decoupled problems is quick to solve. This decoupling is also what makes AltGDmin communication-efficient for federated settings.

Important examples where this assumption holds include (a)  low rank column-wise compressive sensing (LRCS), low rank matrix completion (LRMC), (b) their outlier-corrupted extensions such as robust PCA, robust LRCS and robust LRMC; (c) phase retrieval and its sparse and low-rank model based extensions; (d)  tensor extensions of many of these problems such as tensor LRCS and tensor completion; and (e) many partly discrete problems where GD does not apply -- such as clustering, unlabeled sensing, and mixed linear regression. LRCS finds important applications in multi-task representation learning and few shot learning, federated sketching, and accelerated dynamic MRI. LRMC and robust PCA find important applications in recommender systems, computer vision and video analytics.
\end{abstract}

\tableofcontents

\part{Commonly Used Optimization Algorithms and AltGDmin}

%

%
\chapter{Introduction}
This article describes a novel algorithmic framework, called Alternating Gradient Descent (GD) and Minimization or AltGDmin for short, that is useful for optimization problems that are ``partly decoupled'' \cite{lrpr_gdmin}.
Consider the optimization problem $\min_{\Z} f(\Z)$. This is partly-decoupled if we can split the set of optimization variables $\Z$ into two blocks, $\Z= \{\Za, \Zb\}$, so that the minimization over $\Zb$, keeping $\Za$ fixed, is decoupled. This means that it can be solved by solving many smaller-dimensional, and hence much faster, minimization problems over disjoint subsets of $\Zb$. That over $\Za$, keeping $\Zb$ fixed, may or may not be decoupled. We provide examples below and define this mathematically in Sec. \ref{partlydecoup}. 

For problems for which one of the two minimizations is decoupled, and hence fast, while the other is not, AltGDmin often provides a much faster solution than the well-known Alternating Minimization (AltMin) \cite{csiszar1984information,byrne2011alternating} approach. Even if both problems are decoupled, AltGDmin still often has a communication-efficiency advantage over AltMin when used in distributed or federated settings. This is the case when the data is distributed across the nodes in such a way that the decoupled minimization over a subset of $\Zb$ also depends on the subset of data available at a node; so this can be solved locally. 

Federated learning is a setting where multiple distributed nodes or  entities or clients collaborate to solve a machine learning (ML) problem and where different subsets of the data are acquired at the different nodes. Each node can only communicate with a central server or service provider that we refer to as ``center'' in this article.
Communication-efficiency is a key concern with all distributed algorithms, including federated ones. Privacy is another key concern in federated learning. Both concerns dicate that the data observed or measured at each node/client be stored locally and not be shared with the center. Summaries of it can be shared with the center. The center typically aggregates the received summaries and broadcasts the aggregate to all the nodes \cite{kairouz2021advances}. 
In this article, ``privacy'' only means the following: the nodes' raw data cannot be shared with the center; and the algorithm should be such that the center cannot reconstruct the entire unknown true signal (vector/matrix/tensor). 

One of the challenges in federated learning is developing algorithms that are resilient to adversarial attacks on the nodes; resilience to Byzantine attacks is especially critical.
An important challenge in distributed computing settings (data is available centrally, but is distributed to nodes, e.g., over the cloud, to parallelize and hence speed up the computing) is to have algorithms that are resilient to stragglers (some worker nodes occasionally slowing down or failing)\cite{tandon2017gradient,ramamoorthyMG24}. As will become clear in this article, the design of both attack resilient and straggler resilient modifications of AltGDmin is also efficient. One example of Byzantine attack resilient AltGDmin is studied in \cite{lrcs_byz_icml}. 

\bfpara{Monograph Organization}
This monograph begins by giving some examples of partly decoupled optimization problems and their applications below. In Chapter \ref{algos}, we provide a short overview of some of the popular optimization algorithms - gradient descent (GD), block coordinate descent and AltMin, and nonlinear least squares -- and when these work well. All these are iterative algorithms that need an initialization. We describe common initialization approaches as well. Then, in Chapter \ref{altgdmin}, we precisely define a partly decoupled problem and develop and discuss the AltGDmin algorithmic framework. In the second part of this monograph, in Chapter \ref{guarantees}, we provide the AltGDmin algorithm details, including initialization, for three important LR matrix recovery problems - LR column-wise sensing, LR phase retrieval and LR matrix completion. We also state and discuss the theoretical sample and iteration complexity guarantees that we can prove for these problems. The iteration complexity helps provide total computational and communication complexity bounds.
The third part of this monograph discusses proof techniques. We first provide the general proof approach that can be used to analyze the AltGDmin in Chapter \ref{proof_ideas} and then describe the key ideas for LR problems in Chapter \ref{proof_ideas_lr}. Details are in Chapter \ref{proof_details}. Preliminaries used in these proofs are provided and explained in Chapter \ref{prelims}. This chapter provides a short overview of the most useful linear algebra and random matrix theory topics from \cite{versh_book} and \cite{spectral_init_review}. In the last part of this monograph, Chapter \ref{openques} describes open questions including other problems where AltGDmin or its generalization may be useful.

\section{Partly Decoupled Optimization Examples}
We provide a few examples of partly decoupled problems.

\bfpara{Low Rank Column-wise compressive Sensing (LRCS)}
This problem involves recovering an $n \times q$ rank-$r$ matrix $\Xstar$, with $r \ll \min(n,q)$, from column-wise undersampled (compressive) measurements,  $\y_k:= \A_k \xstar_k$ , $k \in [q]$. The matrices $\A_k$ are dense (non-sparse) matrices that are known. Each $\y_k$ is an $m$-length vector with $m < n$. Let $\Y:=[\y_1, \y_2, \dots, \y_q]$ denote the observed data matrix. 
We can solve this problem by considering the squared loss function. It then becomes a problem of finding a matrix $\X$ of rank at most $r$ that minimizes $\sum_{k=1}^q \|\y_k - \A_k \x_k\|_2^2.$
Suppose that $r$ or an upper bound on it is known. This problem can be converted into an unconstrained, and smaller dimensional, one by factorizing $\X$ as $\X= \U \B$, where $\U$ and $\B$ are matrices with $r$ columns and rows respectively.
Thus, the goal is to solve
\begin{equation}
\argmin_{\U,\B} f(\U,\B) := \argmin_{\U,\B} \sum_{k=1}^q  \|\y_k - \A_k \U \b_k\|_2^2.
\label{lrcs_cost}
\end{equation}
Notice that $\b_k$ appears only in the $k$-th term of the above summation. Thus, if we needed to minimize over $\B$, while keeping $\U$ fixed, the problem decouples column-wise. The opposite is not true. We refer to such a  problem as a partly decoupled problem.

In solving the above problem iteratively, there can be numerical issues because $\U \B = \U \R \R^{-1} \B$ for any $r \times r$ invertible matrix $\R$. The norm of $\U$ could keep increasing over iterations while that of $\B$ decreases or vice versa. To prevent this, either the cost function is modified to include a norm balancing term, e.g., as in \cite{rpca_gd}, or one orthonormalizes the estimate of $\U$ after each update.

Three important practical applications where the LRCS problem occurs include (i) federated sketching  \cite{one_sketch_all,sketch_1,sketch_2,hughes_icip_2012,hughes_icml_2014,lee2019neurips,cov_sketch}, (ii) accelerated (undersampled) dynamic MRI with the low rank (LR) model on the image sequence, and (iii) multi-task linear representation learning to enable few shot learning \cite{du2020few,lrcs_byz_icml,collins2021exploiting,netrapalli}. In fact, some works refer to the LRCS problem  as multi-task representation learning. (iv) The LRCS problem also occurs in for parameter estimation in multi-task linear bandits \cite{lin2024fast}.

\bfpara{Low Rank Phase Retrieval (LRPR)} This is the phaseless extension of LRCS \cite{lrpr_it,lrpr_best,lrpr_gdmin} but it was studied in detail before LRCS was studied. This involves solving
\begin{equation}
\argmin_{\U,\B} f(\U,\B) := \argmin_{\U,\B} \sum_{k=1}^q  \|\y_k - |\A_k \U \b_k| \|_2^2
\label{lrpr_cost}
\end{equation}
where $|.|$ computes the absolute value of each vector entry. LRPR finds applications in dynamic Fourier ptychography \cite{holloway,TCIgauri}.

\bfpara{LR matrix completion (LRMC)}
In this case, the cost function is partly decoupled w.r.t. both $\U$ and $\B$ (keeping the other fixed). This involves recovering a LR matrix from a subset of its observed entries. Letting $\Omega$ denote the set of observed matrix entries, and letting $\P_{\Omega}$ denote the linear projection operator that returns a matrix of size $n \times q$ with the unobserved entries set to zero,  this can be expressed as a problem of learning $\Xstar$ from $\Y:= \P_{\Omega}(\Xstar)$. Letting the unknown $\X$ as $\X = \U \B$ as above, the optimization problem to solve now becomes
\begin{equation}
\argmin_{\U,\B} f(\U,\B) := \|\Y - \P_{\Omega}(\Xstar)\|_F^2 =  \sum_{k=1}^q \|\y_k - \P_{\Omega_k} (\U \b_k)\|_2^2  =   \sum_{j=1}^n \|\y^j - \P_{\Omega^j} (\u^j{}^\top \B)\|_2^2
\label{lrmc_cost}
\end{equation}
with $\B=[\b_1, \b_2, \dots, \b_k, \dots \b_q]$, $\U^\top = [\u_1, \u_2, \dots, \u_j, \dots, \u_n]$, $\Omega_k:=\{ j: (j,k) \in \Omega \}$ and $\Omega^j:=\{ k: (j,k) \in \Omega \}$.
Notice that the above problem is decoupled over $\B$ for a given $\U$, and vice-versa.
 LRMC finds important applications in recommender systems' design, survey data analysis, and video inpainting \cite{matcomp_candes}. LRMC also finds applications in parameter estimation for reinforcement learning; in particular for filling in the missing entries of its state transition probability matrix.

\bfpara{Other Partly Decoupled Examples}
Other examples of partly decoupled problems include non-negative matrix factorization, sparse PCA,  robust PCA and extensions (robust LRCS and robust LRMC), tensor LR slice-wise sensing and its robust extension, and LR tensor completion; and certain partly discrete problems -- clustering, shuffled or unlabeled sensing, and mixed linear regression. We describe these in the open questions chapter, Chapter \ref{openques}.

\subsection{Detailed description of some applications of above problems}
\bfpara{Why LR model}
Medical image sequences change slowly over time and hence these are well modeled as forming a low-rank matrix with each column of the matrix being one vectorized image \cite{dyn_mri1,lrpr_gdmin_mri_jp}. The same is often also true for similar sets of natural images and videos \cite{rpca,rrpcp_dynrpca}. The matrix of user ratings of different products, e.g., movies, is modeled as a LR matrix under the commonly used hypothesis that the ratings are explained by much fewer factors than the number of users, $q$, or products, $n$ \cite{matcomp_candes}.
In fact, many large matrices are well modeled as being LR \cite{lowrank_bigdata};  these model any image sequence or product ratings or survey dataset, in which most of the differences between the different images or ratings or survey data, $q$, are explained by only a small number $r$ of factors.

\bfpara{MRI}
In MRI, which is used in medicine for cross-sectional imaging of human organs, after some pre-processing, the acquired data can be modeled as the 2D discrete Fourier transform (FT) of the cross-section being imaged. This is acquired one FT coefficient (or one row or line of coefficients) at a time \cite{cscompressible,sparsedynamicMRI_lustig}. The choice of the sampled coefficients can be random or it may be specified by carefully designed trajectories. The goal is reconstruct the image of the cross-section from this acquired data.  If we can reconstruct accurately from fewer samples, it means that the acquisition can be speeded up. This is especially useful for dynamic MRI because it can improve the temporal resolution for imaging the changes over time, e.g. the beating heart.  Accelerated dynamic MRI involves doing this to recover a sequence of $q$ images, $\xstar_k$, $k \in [q]$, say, of the beating heart or of brain function as brain neurons respond to a stimuli, or of the vocal tract (larynx) as a person speaks, from undersampled DFT measurements $\y_k$, $k \in [q]$. Here $\xstar_k$ is a vectorized image.
The matrices $\A_k$ are the partial Fourier matrices represented the 2D DFT (or sometimes the FT in case of radial sampling) computed at the specified frequencies.

\bfpara{Multi-task Learning}
Multi-task  representation learning refers to the problem of jointly estimating the model parameters for a set of related tasks. This is typically done by learning a common lower-dimensional ``representation" for all of their feature vectors. This learned representation can then be used for solving the meta-learning or learning-to-learn problem: learning model parameters in a data-scarce environment. This strategy is referred to as ``few-shot'' learning. In recent work \cite{du2020few}, a very interesting low-dimensional linear representation was introduced and the corresponding low rank matrix learning optimization problem was defined. This linear case will be solved if we can solve \eqref{lrcs_cost}. Simply said this can be understood as a problem of jointly learning the coefficients' for $q$ related linear regression problems, each with their own dataset $\A_k$, and with the regression vectors $\xstar_k$ being correlated (so that low rank is a good model on the matrix formed by these vectors, $\Xstar$). Once the ``common representation'' (the column span subspace matrix $\U$) can be estimated, we can solve a new linear regression problem that is related (correlated) with these hold ones by only learning a new $r$-dimensional vector $\b_k$ for it.

\bfpara{Federated Sketching}
For the vast amounts of data acquired on smartphones/other devices, there is a need to compress/sketch it before it can be stored or transmitted. The term ``sketch" refers to a compression approach, where the compression end is very inexpensive  \cite{one_sketch_all,sketch_1,sketch_2,hughes_icip_2012,hughes_icml_2014,lee2019neurips,cov_sketch}. A common approach to sketching, that is especially efficient in distributed settings, is to multiply each vectorized image by a different independent $m \times n$ random matrix (typically random Gaussian or Rademacher matrix) with $m < n$, and to store or transmit this sketch.

\chapter{Commonly used optimization algorithms} 
\label{algos}


\section{The optimization problem and its federated version}
Our goal is to solve
\[
\arg\min_\Z  f(\Z; \calD)
\]
Here $\calD$ is the available data.

\bfpara{Federation} We assume that there are a total of $\gamma$ distributed nodes, and that disjoint subsets of data are observed / sensed / measured at the different nodes. We use $\calD_\ell$ to denote the subset of the data available at node $\ell$. Thus $\cup_{\ell=1}^\gamma \calD_\ell = \calD$ is all the available data and $ \calD_\ell$s are disjoint.
We assume that $f(\Z; \calD)$ is a sum of $\gamma$ functions, each of which depends on a subset of the data $\calD_\ell$, i.e., that
\[
f(\Z; \calD) = \sum_{\ell=1}^\gamma f_\ell (\Z ; \calD_\ell)
\]
In the partly decoupled problems that we study, $f_\ell$ is a function of only a subset of $\Z$, e.g., in LRCS, $f_\ell$ depends only on $\U, \b_k$. 
We clarify this later.

%


\section{Gradient Descent (GD)}\label{gd}
The simplest, and most popular, class of solutions for solving an unconstrained (convex or non-convex) optimization problem is gradient descent (GD) \cite{cauchy1847,boyd2004convex} and its modifications, the most common being various versions of stochastic GD. GD starts with an initial guess and attempts to move little steps in the direction opposite to that of the gradient of the cost function at the previous iterate.
Pseudo-code for simple GD is as follows.
\bi
\item Initialize $\Z$ to $\hat\Z$.
\item Run the following steps $T$ times, or until a stopping criterion is reached.
\ben
\item Update $\Z$ by one GD step: $\hat\Z \leftarrow \hat\Z - \eta \nabla_\Z f(\hat\Z; \calD)$. Here $\eta$ is the GD step size, also often referred to as the learning rate.

\een
The stopping criterion usually involves checking if $\|\nabla_\Z f(\hat\Z)\|/||\hat\Z||$ is small enough.

\item Federated setting: At each algorithm iteration, each node $\ell$ computes the partial gradient $\nabla_\Z f_\ell (\hat\Z; \calD_\ell)$ and sends it to center, which computes $\nabla f = \sum_\ell \nabla f_\ell$ and the GD update.

\ei

For strongly convex cost functions, GD provably converges to the unique global minimizer of $f(\Z)$ starting from any initial guess.
For convex functions, one can prove that it converges to {\em a} global minimizer (there are multiple global minimizers in this case).
For non-convex functions, convergence to a global minimizer cannot be guaranteed. One can only show that GD will converge to a  {\em a} stationary point of the cost function under mild assumptions. All above results require that the GD step size is small enough \cite{boyd2004convex}.
%

In signal processing and machine learning, the goal is to learn the ``true solution'' which is one of many local minimizers of the specified cost function. Henceforth, we refer to this as the ``desired minimizer''.
For certain classes of non-convex cost functions that are ``nice'', such as those that arise in phase retrieval \cite{wf,twf} or in various LR matrix recovery problems, one can prove results of the following flavor. If the available number of data samples is large enough,  if the step size is small enough, and if the initialization is within a certain sized window of the desired minimizer, then, the GD estimate will converge to the desired minimizer, with high probability (w.h.p.), e.g., see \cite{wf,twf,rpca_gd}. Such results are also non-asymptotic and provide an order-wise bound on the iteration complexity (number of iterations needed by the algorithm to get within $\epsilon$ normalized distance of the desired minimizer).  However, in some cases, such results requires a very small step size, e.g., for phase retrieval in \cite{wf}. This, in turns, means that GD has high iteration complexity. In other cases, such as for LRCS, it is not possible to show that a GD algorithm converges at all \cite{lrpr_gdmin}. 






\section{Block Coordinate Descent and Alternating Minimization (AltMin)}\label{altmin}
Coordinate descent involves minimizing over one scalar optimization variable at a time, keeping others fixed, and repeating this sequentially for all variables. Block Coordinate Descent or BCD involves doing this for blocks (subsets) of variables, instead of one variable at a time. AltMin is a popular special case of BCD that splits the variables into two blocks $\Z = \{\Za, \Zb\}$. This is extensively used and studied theoretically because of its simplicity \cite{csiszar1984information,byrne2011alternating}.
It solves $\arg\min_\Z  f(\Z) = \arg\min_{ \{\Za,\Zb \} } f(\Za,\Zb)$ using an iterative algorithm that starts with initializing $\Za$, and then alternatively updates $\Zb$ and $\Za$ using minimization over one of them keeping the other fixed.  The following is pseudo-code for AltMin.
\bi
\item Initialize $\Za$ to $\hat\Za$.
\item Alternate between the following two steps $T$ times (or until a stopping criterion is reached).
\ben
\item Update $\Zb$ keeping $\Za$ fixed: $\hat\Zb \leftarrow \min_{\Zb} f(\hat\Za, \Zb)$
\item Update $\Za$ keeping $\Zb$ fixed: $\hat\Za \leftarrow \min_{\Za} f(\Za, \hat\Zb)$
\een
\item Federated setting: This is not easy and has to be considered on a problem-specific basis. For most problems it is not efficient. 
\ei
BCD is a generalization of this algorithm to the case when $\Z$ needs to be split into more than two subsets for the individual optimizations to be closed form or otherwise reliably solvable.

AltMin works well, and often can be shown to provably converge, for  problems in which the two minimizations can either be solved in closed form, or involve use of a provably convergent algorithm. Bilinear problems, such as the LRCS and LRMC problems described earlier, are classic examples of settings in which each of the minimization problems is a Least Squares (LS) problem, and hence, has a closed form solution \cite{lowrank_altmin,lrpr_best}. LRPR \cite{lrpr_best} is an example problem in which one minimization is LS while the other is a standard phase retrieval problem, with many provably correct iterative algorithms, e.g., \cite{twf}.
A second class of problems where AltMin is used, while GD is not even applicable, is those for which the cost function is not differentiable w.r.t. some of  the variables. Clustering is one example of such a problem; the k-means clustering algorithm is an AltMin solution.

As explained above in Sec. \ref{gd}, for some of the problems for which both AltMin and GD are applicable, either GD cannot be shown to converge or it requires a very small step size to provably converge, making its iteration complexity too high. For certain classes of ``nice'' problems such as the LR problems described earlier, when initialized carefully, AltMin makes more progress towards the minimizer in each iteration, and hence converges faster: it can be shown to have an iteration complexity that depends logarithmically on the final error level.
However, typically, AltMin is much slower per iteration than GD. An exception is problems in which both the minimization problems of AltMin are decoupled and hence very fast. 

Moreover, a federated or distributed modification of AltMin is almost never efficient. This is the case even when both minimizations are decoupled like for LRMC. The reason is one of the minimization steps will require using data from multiple nodes. This will either require use of multiple GD iterations to solve the minimization problem (slow and communication-inefficient) or it will require all nodes to send their raw data to the center which distributes it (communication-inefficient and not private either). 





\section{Non-Linear Least Squares (NLLS)} 
The Non-Linear Least Squares (NLLS) approach was originally developed within the telecommunications literature for frequency estimation and related problems (all low-dimensional problems) \cite{kay_book}. This also splits $\Z$ into two blocks, $\Z = \{\Za,\Zb\}$ and solves problems in which, for any value of $\Za$, the minimization over $\Zb$ is an over-determined least squares (LS) problem.
For such problems, the NLLS approach substitutes a closed form expression for $\hat\Zb$ in terms of $\Za$ into the original cost function. It then minimizes the new cost function over $\Za$ using GD or one of its modifications such as the Newton method.
To be precise, it uses GD (or Newton's method or any solver) to solve
\[
\min_{\Za} f(\Za,\hat\Zb(\Za)) 
\]
where $\hat\Zb(\Za) = \argmin_{\Zb} f(\Za,\Zb)$  is the closed form for the LS solution for $\Zb$ in terms of $\Za$.


NLLS is easy to use for problems for which it is easy to compute the gradient of  $\tilde{f}(\Za):= f(\Za,\hat\Zb(\Za))$ w.r.t. $\Za$. However, for problems like LRCS, the new cost function is $\tilde{f}(\U):= \sum_k \|\y_k - \A_k \U (\A_k \U)^\dag \y_k\|^2$, with $\M^\dag:= (\M^\top \M)^{-1} \M^\top$. The gradient of this new cost function does not have a simple expression. 
Moreover, it is expensive to compute, and it makes the algorithm too complicated to analyze theoretically. To our knowledge, guarantees do not exist for NLLS.

\section{Algorithm Initialization}
\label{algoinit}
All iterative algorithms require an initialization. If the cost function is strongly convex, it has a unique minimizer. In this case, any initialization will work. For all other cases, the solution that an iterative algorithm converges to depends on the initialization. 

There are a few possible ways that an algorithm can be initialized. The most common approach is random initialization.
In this case, one runs the entire algorithm with multiple random initializations and stores the final cost function value for each. The output corresponding to the initialization that results in the smallest final cost is then chosen. 

In some other settings, some prior knowledge about $\Za$ is available and that is used as the initialization. For example, in multi-modal imaging, an approximate image estimate may be available from one source and that can serve as an initialization.

For a large number of structured signal (vector, matrix, or tensor) recovery problems and for phase retrieval problems,  one can come up with a carefully designed spectral initialization:  one computes the top, or top few, singular vectors of an appropriately defined matrix, that is such that the top singular vector(s) of its expected value are equal to, or close to, the unknown quantity of interest, or to a part of it.



\chapter{AltGDmin for Partly Decoupled Optimization Problems} \label{altgdmin}
We first precisely define a partly decoupled problem and then develop the AltGDmin framework.

\section{Partly-Decoupled Optimization: Precise Definition}\label{partlydecoup}
Consider an optimization problem $\arg\min_\Z  f(\Z)$. Suppose, as before, that $\Z$ can be split into two blocks $\Z = \{ \Za,\Zb\}$, such that optimization over one keeping the other fixed can be correctly solved (has a closed form or provably correct iterative solution).
We say this problem is partly decoupled (is decoupled for $\Zb$) if
\[
f(\Z; \calD) := f(\Za,\Zb; \calD) = \sum_{\ell=1}^\gamma f_\ell (\Za, (\Zb)_\ell; \calD_\ell)
\]
with $(\Zb)_\ell$ being disjoint subsets of the variable set $\Zb$, e.g., in case of the LR problems described earlier, these are different columns of the matrix $\B$.

Thus, partial decoupling implies that the minimization over $\Zb$ (keeping $\Za$ fixed at its previous value, denoted $\hat\Za$) can be solved by solving $\gamma$ smaller dimensional optimization problems, i.e.,
\[
\min_{\Zb} f(\hat\Za,\Zb; \calD) = \sum_{\ell=1}^\gamma \min_{(\Zb)_\ell} f_\ell (\hat\Za, (\Zb)_\ell; \calD_\ell)
\]
The computation cost of most optimization problems is more than linear, and hence, the $\gamma$ smaller dimensional problems are quicker to solve, than one problem that jointly optimizes over all of $\Zb$. Moreover, notice that the minimization over $(\Zb)_\ell$ only depends on the data subset $\calD_\ell$. Thus, if all of $\calD_\ell$ is available at a node, then the minimization over $(\Zb)_\ell$ can be solved locally at the node itself. 

\begin{remark}
It is possible that there are partly decoupled optimization problems for which the cost function is not just a sum of simpler cost functions, but is some other composite function. We attempt to define this most general case in Appendix \ref{partly_decoup_appendix}
\end{remark}

\section{Alternating GD and Minimization (AltGDmin)}
For partly decoupled problems, in recent work \cite{lrpr_gdmin},  we introduced the following  {\em Alternating GD and Minimization (AltGDmin)}  algorithmic framework. 
\bi
\item Initialize $\Za$ to $\hat\Za$. Approaches discussed in Sec. \ref{algoinit} can be used.
\item Alternate between the following two steps $T$ times, or until a stopping criterion is reached.
\ben
\item Update $\Zb$ by minimization: $\hat\Zb \leftarrow \min_{\Zb} f(\hat\Za, \Zb)$.
 Because of the decoupling, this simplifies to
 \[
(\hat\Zb)_\ell \leftarrow \argmin_{(\Zb)_\ell} f_\ell (\hat\Za, (\Zb)_\ell) \forall \ell \in [\gamma]
\]
\ben
\item Federated setting: Each node $\ell$ solves the above problem locally. No data exchange needed.
\een

 \item Update $\Za$ by GD:
 \[
 \hat\Za  \leftarrow \hat\Z_a -  \eta  \sum_{\ell=1}^\gamma \nabla_{\Za} f_\ell (\Za, (\Zb)_\ell; \calD_\ell)
\]
 Here $\eta$ is the GD step size.
\ben
\item Federated setting: Each node $\ell$ computes the partial gradient $\nabla_{\Za} f_\ell (\hat\Za, (\hat\Zb)_\ell; \calD_\ell)$ and sends it to the center, which computes $\nabla f = \sum_\ell \nabla f_\ell$ and the GD update.
\een

\een

\ei

\bfpara{Time complexity per iteration: centralized}
The time cost of gradient computation w.r.t. $\Za$ is much lower than that of solving a full minimization w.r.t. it.  In addition, if the time cost of solving $\min_{\Zb} f(\hat\Za, \Zb)$ is comparable to that of computing the gradient w.r.t. $\Za$, then, per iteration, AltGDmin is as fast as GD, and much faster than AltMin. This is the case for the LRCS problem, for example.


\bfpara{Communication complexity}
If the data is federated as assumed earlier (data subset $\calD_\ell$ is at node $\ell$), then  one can develop an efficient distributed federated implementation that is also more communication-efficient per iteration than AltMin, and comparable to GD. Node $\ell$ updates $(\Zb)_\ell$ locally and computes its partial gradient which it shares with the center.
%
The center needs to only sum these, implement GD (just a subtraction), and (if needed) process the final output, e.g., orthonormalize the columns of $\Za=\U$ in case of LR recovery problems. This step is quick, of order $nr^2$ since $\U$ is an $n \times r$ matrix.


\bfpara{Iteration Complexity and Sample Complexity}
It is often possible to also prove that the AltGDmin iteration complexity is only slightly worse than that of AltMin and much better than that of GD; and this is true under sample complexity lower bounds that comparable to what AltMin or GD need. The reason is the minimization over $\Zb$ in each iteration helps ensure sufficient error decay with iteration, even with using a constant GD step size.
For example, we have proved this for LRCS, LRPR, and LRMC; see Sec. \ref{guarantees} and Table \ref{table_compare}. This claim treats the matrix condition number as a numerical constant. We should mention here  that, the GD algorithm for $\U,\B$, referred to as Factorized GD (FactGD), does not provably converge  for LRCS or LRPR; the reasons are explained in Sec. \ref{guarantees}. Hence we do not have a bound on its iteration complexity. FactGD does converge for LRMC but its iteration complexity is $r$ times worse than that of AltGDmin or AltMin.

%


\bfpara{Overall Computation and Communication Complexity and Sample Complexity}
As explained above, for many partly decoupled problems, one can prove that AltGDmin is as fast per iteration as GD, while having iteration complexity that is almost as good as that of AltMin with a  sample complexity bound that is comparable to that of AltMin. This makes it one of the fastest algorithms in terms of total time complexity. In terms of communication cost, its per iteration cost is usually comparable to that of GD, while its iteration complexity is better, making it the most communication efficient.

As an example, for LRCS, AltGDmin is much faster and much more communication-efficient than AltMin. This is true both in terms of order-wise complexity and practically in numerical experiments.
For LRMC, which is partly decoupled for both $\Zb$ and for $\Za$, all of AltGDmin, AltMin and GD have similar order-wise time complexity. However communication-complexity of AltGDmin is the best. Consequently, in numerical simulations on federated AWS nodes, AltGDmin is overall the fastest algorithm for large problems; see \cite{lrmc_gdmin}.

\bfpara{Non-differentiable cost functions} Another useful feature of AltGDmin is that it even applies for settings for which the cost function is not differentiable w.r.t. the decoupled set of variables. Some examples include clustering  and unlabeled/shuffled sensing. We describe these in the open questions section, Sec. \ref{examples_not_fully_diff}. 

\bfpara{Byzantine-Resilient or Straggler-Resilient Modifications} Federated algorithms are often vulnerable to attacks by adversaries. One of the most difficult set of attacks to deal with is Byzantine attacks. 
Because AltGDmin involves exactly one round of partial gradients exchange per iteration, designing Byzantine resilient modifications for AltGDmin is easy to do and the resulting algorithm retains its efficiency properties. In recent work \cite{altgdmin_icml}, we developed a Byzantine-resilient AltGDmin solution for federated LRCS. We postpone the discussion of these modifications to a later review.

In distributed computing, resilience to straggling nodes is an important practical requirement. Straggler resilient GD using the ``gradient coding'' approach has been extensively studied \cite{tandon2017gradient,ramamoorthyMG24}. These approaches are directly applicable also for AltGDmin.

\part{AltGDmin for Partly-Decoupled Low Rank (LR) Recovery Problems: Algorithms \& Guarantees}

\chapter{AltGDmin for three LR matrix recovery problems}
\label{guarantees}


\section{Notation}
We use $\|.\|_F$ to denote the Frobenius norm,  $\|.\|$ without a subscript to denote the (induced) $l_2$ norm, $^\top$ to denote matrix or vector transpose, and $\M^\dag := (\M^\top \M)^{-1} \M^\top$. For a tall matrix $\M$, $QR(\M)$ orthonormalizes $\M$. We use $diag(\bm{v})$ to create a diagonal matrix with entries given by entries of vector $\bm{v}$. 
For two $n \times r$ matrices $\U_1, \U_2$ that have orthonormal columns, we use
\[
\SE_2(\U_1, \U_2) : = \|(\I  - \U_1 \U_1^\top) \U_2\| , \ \SE_F(\U_1, \U_2) : = \|(\I  - \U_1 \U_1^\top) \U_2\|_F
\]
as two measures of Subspace Distance (SD). Clearly, $\SE_F \le \sqrt{r} \SE_2$.

We reuse the letters $c,C$ to denote different numerical constants in each use with the convention that $c < 1$ and $C \ge 1$. 


\section{LRCS, LRPR, and LRMC Problems}
In all three problems, the goal is to recover an $n \times q$ rank-$r$ matrix $\Xstar =[\xstar_1, \xstar_2, \dots, \xstar_q]$, with $r \ll \min(n,q)$, from different types of under-sampled linear or element-wise nonlinear functions of it.
Let
	$
	\Xstar \svdeq \Ustar  \bSigma \Vstar: = \Ustar \Bstar
	$
	denote its reduced (rank $r$) SVD, and $\kappa:= \sigmax/\sigmin$ the condition number of $\bSigma$. We let $\Bstar:= \bSigma \Vstar$. 

\subsection{LRCS problem}
The goal is to recover an $n \times q$, rank-$r$, matrix $\Xstar =[\xstar_1, \xstar_2, \dots, \xstar_q]$ from independent linear projections of it, i.e., from 
\bea
\y_k := \A_k \xstar_k, \ k  \in [q]
\label{ykvec}
\eea
where each $\y_k$ is an $m$-length vector, with $m<n$,  $[q]:=\{1,2,\dots, q\}$, and the measurement/sketching matrices $\A_k$ are mutually independent and known. 
For obtaining theoretical guarantees, each $\A_k$ is assumed to be random-Gaussian: each entry of it is independent and identically distributed (i.i.d.) standard Gaussian.

Since no measurement $\y_\ik$ is a global function  of the entire matrix, $\Xstar$, we need the following assumption to make our problem well-posed (allow for correct interpolation across columns). This assumption is a subset of the ``incoherence assumption'' introduced for correctly solving the LR matrix completion problem \cite{matcomp_candes,optspace,lowrank_altmin}.
	
\begin{assu}[$\mu$-incoherence of right singular vectors] \label{right_incoh}
Assume that $\|\bstar_k\|^2 \le \mu^2 r \sigmax^2 / q$ for a numerical constant $\mu$.
\end{assu}

\subsection{LRPR problem} In LRPR, which is a generalization of LRCS, the goal is to recover $\Xstar$ from undersampled phaseless linear projections of its columns, i.e., from $\z_k:=|\y_k|$, $k \in [q]$. Here $|.|$ of a vector takes the magnitude of each element of the vector. If the vector is real-valued, then this just means that  the sign is not measured. In case of Fourier ptychography, $\y_k$ are complex-valued and in that case, one takes the absolute value of each complex number entry.
LRPR also needs Assumption \ref{right_incoh} for the same reason.

\subsection{LRMC problem}
LRMC involves recovering an $n \times q$ rank-$r$ matrix $\Xstar =[\xstar_1, \xstar_2, \dots, \xstar_q]$ from a subset of its entries. Entry $j$ of column $k$, denoted $\Xstar_\jk$, is observed, independently of all other observations, with probability $p$. Let $\xi_\jk \iidsim \mathrm{Bernoulli}(p)$ for $j \in [n], k \in [q]$. Then, the set of observed entries, denoted by $\Omega$, is
\[
\Omega := \{(j,k): \xi_\jk = 1\}
\]
By setting the unobserved entries to zero, the observed data matrix $\Y \in \Re^{n \times q}$ can be defined as
\begin{equation}
\Y_\jk := \begin{cases}
\Xstar_\jk &   \text{ if } (j,k) \in \Omega, \\ 
0 & \text{ otherwise}.
\end{cases}
\ \ \text{or, equivalently,} \ \
\Y := \Xstar_\Omega
\label{eq:msrmnts}
\end{equation}
Here and below, $\M_{\Omega}$ refers to the matrix $\M$ with all entries whose indices are not in the set $\Omega$ are zeroed out; while the rest of the entries remain unchanged.

We use $\Omega_k := \{j \in [n] \mid  \xi_\jk = 1\}$ to denote the set of indices of the observed entries in column $k$. To easily explain the AltGDmin algorithm idea, we define a diagonal 1-0 matrix  $\S_k \in \Re^{n \times n}$ as
\[
\S_k:= \mathrm{diag}([\xi_\jk, \ j \in [n]])
\]
With this, our goal is to learn $\Xstar$ from
\[
\y_k := \S_k \xstar_k, \ k \in [q]
\]
\begin{remark}
In the above, we let the matrix $\S_k$ be an $n \times n$ diagonal matrix with entries being 1 or 0, only for ease of notation. It contains a lot of zero entries. The expected number of nonzero rows (diagonal entries only) in $\S_k$ is $pn$. 
\end{remark}
We need the following assumption on the singular vectors of $\Xstar$; this is a way to guarantee that the rows and columns of $\Xstar$ are dense (non-sparse) \cite{matcomp_candes,optspace,lowrank_altmin}. This helps ensure that one can correctly interpolate (fill in) the missing entries even with observing only a few entries of each row or column.

\begin{assu}[$\mu$-incoherence of singular vectors of $\Xstar$]  
Assume  row norm bounds on $\Ustar$: $\max_{j \in [n]}\| \ustar{}^j\| \leq \mu \sqrt{r/n}$, and 
column norm  bounds on $\V^*$: $\max_{k \in [q]} \|  \v^*_k \| \leq \mu \sqrt{r/q}$. 
Since $\Bstar= \bm{\Sigma^*} {\V^*}$, this implies that $\| \bstar_k \| \le \mu \sqrt{r/q} \sigmax$.
\label{incoh}
\end{assu}

\section{Federation}

We assume that there are a total of $\gamma$ nodes, with $\gamma \le q$ and each node has access to a different subset of the columns of the observed data matrix $\Y$ and the corresponding matrices $\A_k$ (or enough information to define them). This type of federation where the columns are distributed is often referred to as ``vertical federation''.
In case of LRCS, $\Y$ is $m \times q$. In case of LRMC, $\Y$ is $n \times q$ with a lot of zero entries. All nodes can only communicate with a central node or ``center''.

We use $\calS_\ell$ to denote the subset of columns of $\Y$ available at node $\ell$. The sets $\calS_\ell$ form a partition of $[q]$. i.e., they are mutually disjoint and $\cup_{\ell=1}^\gamma \calS_\ell = [q]$. Thus, the data at node $\ell$,
\[
\calD_\ell = \{\y_k, \A_k, \ k \in \calS_\ell\}
\]
To keep notation simple, we assume $q$ is a multiple of $\gamma$ and that $|\calS_\ell| = q/\gamma$. Our discussion of complexities assumes $\gamma \ll q$ and treats $\gamma$ as a numerical constant. Thus order $|\Omega|/\gamma$ is equal to order $|\Omega|$ with $|\Omega|\ge (n+q) r$ (the number of samples needs to be larger than the number of unknowns in rank $r$ matrix).

For LRCS with $\A_k$ being random Gaussian, the storage (or communication in case of distributed computing) required is significant, it is $mn q/\gamma$ per node. For LRMC, $\S_k$ is fully specified by just the set observed indices $\cup_{k \in \calS_\ell} \Omega_k$. This is much cheaper to store or transmit with a cost of only $|\Omega|/\gamma$. The same is true for the LRCS problem for the MRI application where $\A_k$ is a partial Fourier matrix; in this case only the observed frequency locations need to be stored or transmitted. 

\section{AltGDmin for LRCS: algorithm and guarantees}\label{altgdmin_lrcs}

AltGDmin for LRCS was introduced and studied in parallel works \cite{lrpr_gdmin,collins2021exploiting,netrapalli} and follow-up work \cite{lrpr_gdmin_2}. In \cite{lrpr_gdmin,lrpr_gdmin_2}, we referred to the problem as LR column-wise compressive sensing (LRCS), while \cite{collins2021exploiting,netrapalli} referred to the same problem as multi-task linear representation learning. The initialization introduced in \cite{lrpr_gdmin,lrpr_gdmin_2} is  the best one (needs fewer samples for a certain accuracy level). The best sample complexity guarantee for AltGDmin is the one proved in our recent work \cite{lrpr_gdmin_2}. 

\subsection{AltGDmin-LRCS Algorithm}
We first summarize the algorithm and then explain each step. Our development follows \cite{lrpr_gdmin,lrpr_gdmin_2}.
AltGDmin for the LRCS problem involves minimizing
$f(\U,\B): = \sum_{k=1}^q \|\y_k - \U \b_k\|^2$ over $\U,\B$. Clearly, this is  decoupled for columns of $\B$ (with holding $\U$ fixed). Thus, we use  $\Za \equiv \U, \ \Zb \equiv \B$. It proceeds as follows.
\ben
\item {\em Spectral initialization:} 
We initialize $\U$ by computing the top $r$ singular vectors of the following matrix
\[
\X_0 := \sum_k \A_k^\top \y_{k,trnc} \e_k^\top, \ \y_{k,trnc}:= \mathrm{trunc}(\y_k,\alpha) 
\]
Here $ \alpha:= \tilde{C} \sum_k \|\y_k\|^2 / mq$ with $\tilde{C}:=9\kappa^2\mu^2$, and the function $\mathrm{trunc}$ truncates (zeroes out) all entries of the vector $\y_k$ with magnitude greater than $\sqrt\alpha$, i.e.,  for all $j\in [n]$, $\mathrm{trunc}(\y,\alpha)_j = (\y)_j  \indic_{|\y_j| \le  \sqrt\alpha}$, with $\indic$ being the indicator function.


\item At each iteration, update $\B$ and $\U$ as follows:
\ben
\item {\em Minimization for $\B$:} keeping $\U$ fixed, update $\B$ by solving $\min_{\B} f(\U, \B)$. Due to the form of the LRCS model, this minimization decouples across columns, making it a cheap least squares problem of recovering $q$ different $r$ length vectors. It is solved as $\b_k = (\A_k \U)^\dag \y_k$ for each $k \in [q]$. 

\item {\em GD for $\U$:} keeping $\B$ fixed, update $\U$ by a GD step, followed by orthonormalizing its columns: 
    $\U^+  = QR(\U - \eta \nabla_{\U} f(\U,\B))$. Here $QR(.)$ orthonormalizes the columns of its input.
\een
\een

\bfpara{Computation cost}
The use of minimization to update $\B$ at each iteration is what helps ensure that we can show exponential error decay with a constant step size. At the same time, due to the column-wise decoupled nature of LRCS, the time complexity for this step is only as much as that of computing one gradient w.r.t. $\U$. Both steps need time\footnote{The LS step time is $\max(q \cdot mnr, q \cdot mr^2)=mqnr$ (maximum of the time  needed for computing $\A_k \U$ for all $k$, and that for obtaining $\b_k$ for all $k$)  while the GD step time is $\max( q \cdot mnr, nr^2) =mqnr$ (maximum of the time needed for computing the gradient w.r.t. $\U$, and time for the QR step).}of order $mqnr$. This is only $r$ times more than ``linear time'' (time needed to read the algorithm inputs, here $\y_k,\A_k$'s). To our knowledge, $r$-times linear-time is the best known time complexity for any algorithm for any LR matrix recovery problem. Moreover, due to the use of the $\X=\U\B$ factorization, AltGDmin is also communication-efficient. Each node needs to only send $nr$ scalars (gradients w.r.t $\U$) at each iteration.%

\bfpara{Understanding the Initialization step}
To understand the initialization step, note the following. It can be shown that $\E[\X_0] = \Xstar \D(\alpha)$ where $\D(\alpha)$ is a diagonal $q \times q $ matrix with $\sigmamin(\D) \ge 0.9$ with high probability (w.h.p.) \cite{lrpr_gdmin_2,lrpr_gdmin}. Thus, $\E[\X_0]$ is a rank $r$ matrix with column-span equal to that of $\Ustar$ (or $\Xstar$). Furthermore, it is easy to see that
\[
\X_0 = \sum_{k=1}^q \sum_{i=1}^m \a_\ik (\a_\ik^\top \xstar_k) \indic_{ (\a_\ik^\top \xstar_k)^2 \le \alpha }
\]
Using concentration bounds and linear algebra results\footnote{Using sub-exponential Bernstein inequality to lower and upper bound $\alpha$; and using the sub-Gaussian Hoeffding inequality and an easy epsilon-net argument  \cite{versh_book} to bound $\|\X_0 - \E[\X_0]\|$, one can argue that, w.h.p., $\X_0$ is close to its expected value if $mq$ is large enough. This, along with using the Wedin $\sin \theta$ theeorem \cite{spectral_init_review}, and lower bounding the smallest entry of $\D(\alpha)$, helps bound subspace distance (SD) between $\U_0$ and $\Ustar$.}, it can be shown that, w.h.p., $\X_0$ is a good approximation of its expected value and hence, in terms of subspace distance, $\U_0$ is a good approximation of $\Ustar$ (column span of $\Xstar$).

Sample-splitting is assumed, i.e., each new update of $\U$ and $\B$ uses a new independent set of measurements and measurement matrices, $\y_k, \A_k$.

\subsection{Federated implementation}
%
%
Consider the GDmin steps. Update of $\b_k$s and $\x_k$s is done locally at the node that stores the corresponding $\y_k$.  For gradient w.r.t. $\U$ computation, the partial sums over $k \in \mathcal{S}_\tl$ are computed at node $\tl$ and transmitted to the center which adds all the partial sums to obtain $\nabla_\U f(\U,\B)$. GD step and QR are done at the center. The updated $\U$ is then broadcast to all the nodes for use in the next iteration.
The per node time complexity is thus $mnr q_\tl$ at each iteration. The center only performs additions and a QR decomposition, which is an order $nr^2$ operation, in each iteration. The communication cost is order $nr$ per node per iteration. 
%

The initialization step can be federated by using the Power Method \cite{golub89,npm_hardt} to compute the top $r$ eigenvectors of $\Xhat_0 \Xhat_0{}^\top$. Power method starts with a random initialization and runs the iteration $\Uhat_0 \leftarrow QR(\Xhat_0 \Xhat_0{}^\top \Uhat_0)$.
%
 Any power method guarantee, e.g., \cite{npm_hardt} can be used to guarantee that its output is within a subspace distance $\delta_0$ to the span of the top $r$ singular vectors of $\X_0$ within order $\log(1/\delta_0)$ iterations. 
The communication complexity is thus just $nr$ per node per iteration. The number of iterations needed is only order $\log r$ because $\U_0$ only needs to be order $1/r$ accurate. 

\bfpara{Communication cost} The total communication cost is order $\max(nr \log r, nr \cdot T)$ where $T$ is the total number AltGDmin iterations needed to achieve $\eps$ accuracy. We show below that $T = C \kappa^2 \log(1/\eps)$ suffices. For accurate solutions $\eps < \exp(-r)$ and hence the total communication cost is order $nr T = \kappa^2 nr \log(1/\eps)$.

\bfpara{Privacy} Observe from above that the information shared with the center is not sufficient to recover $\Xstar$ centrally. It is only sufficient to estimate $\Span(\Ustar)$.
The recovery of the columns of $\B$, $\tb_k$, is done locally at the node where the corresponding $\y_k$ is stored, thus ensuring privacy. 

\subsection{Theoretical guarantees}
We provide below the best known guarantee for LRCS; this is taken from \cite{lrpr_gdmin_2}. We state the noise-free case result here for simplicity.
Let $m_0$ denote the total number of samples per column needed for initialization and let $m_1$ denote this number for each GDmin iteration. Then, the total sample complexity per column is $m = m_0  + m_1 T$.
Our guarantee given next provides the required minimum value of $m$.
\begin{theorem} [AltGDmin-LRCS \cite{lrpr_gdmin_2}]
Assume that Assumption \ref{right_incoh} holds. Set $\eta = 0.4 / m \sigmax^2 $ and $T = C \kappa^2 \log( 1 /\eps)$.
If
\[
m q \ge C \kappa^4 \mu^2 (n+q) r ( \kappa^4 r + \log (1/\eps) ) 
\]
and $m \ge C \max(\log n, \log q, r) \log (1/\eps)$, then, with probability (w.p.) at least $1 - n^{-10}$,
\[
\SE_2(\U,\Ustar) \le \eps \text{ and } \|\x_k - \xstar_k\| \le \eps \|\xstar_k\| \ \text{for all $k \in [q]$}. 
\]
%
The time complexity is $mqnr \cdot T =   mqnr \cdot \kappa^2 \log( 1 /\eps)$. The communication complexity is $nr\cdot T = nr  \cdot \kappa^2 \log( 1 /\eps)$ per node.
\label{main_res}
\end{theorem}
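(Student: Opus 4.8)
The plan is the standard two-phase argument for spectral-initialization-plus-iterate methods. First show the initialization $\U_0$ satisfies $\SE_2(\U_0,\Ustar)\le\delta_0$ for a sufficiently small $\delta_0$ (of order $1/(\kappa^2\sqrt r)$, the size at which the iteration analysis closes). Then establish a deterministic-looking one-step contraction $\SE_2(\U_{t+1},\Ustar)\le(1-c/\kappa^2)\,\SE_2(\U_t,\Ustar)$ that holds on the event that the concentration bounds needed for that iteration's fresh sample batch are in force. Iterating the contraction $T=C\kappa^2\log(1/\eps)$ times gives $\SE_2(\U_T,\Ustar)\le\eps$; the per-column bound $\|\x_k-\xstar_k\|\le\eps\|\xstar_k\|$ then follows from a short look at the final least-squares step. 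Assumption \ref{right_incoh} is used everywhere: it is exactly what keeps any single column $\xstar_k$ from dominating the various operator-norm concentration bounds. A pleasant simplification relative to LRMC is that, because the $\A_k$ are dense Gaussian matrices, no incoherence of the iterates $\U_t$ needs to be propagated --- orthonormality of the columns of $\U_t$ (guaranteed by the QR step) is all that the least-squares step requires.

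For the initialization I would use the two facts flagged in the text. First, $\E[\X_0]=\Xstar\D(\alpha)$ with $\D(\alpha)$ diagonal and $\sigma_{\min}(\D(\alpha))\ge0.9$ with high probability, so $\E[\X_0]$ is rank $r$ with column span $\Span(\Ustar)$ and $\sigma_r(\E[\X_0])\gtrsim\sigmin$. Second, a deviation bound $\|\X_0-\E[\X_0]\|\lesssim\sigmax\sqrt{\kappa^2\mu^2 r(n+q)/(mq)}$, obtained by: (a) controlling the random truncation level $\alpha=\tilde C\sum_k\|\y_k\|^2/(mq)$ from above and below with the sub-exponential Bernstein inequality, which is needed because $\alpha$ is itself data-dependent; (b) writing $\X_0-\E[\X_0]=\sum_{k,i}\big(\a_{ki}(\a_{ki}^\top\xstar_k)\indic_{(\a_{ki}^\top\xstar_k)^2\le\alpha}-\E[\cdot]\big)$ as a sum of independent bounded terms and bounding its operator norm with the sub-Gaussian Hoeffding inequality together with an $\eps$-net over pairs of unit vectors in $\Re^n$ and $\Re^q$ (this is where $n+q$ and $\log n$ enter). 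Wedin's $\sin\theta$ theorem then gives $\SE_2(\U_0,\Ustar)\lesssim\|\X_0-\E[\X_0]\|/\sigma_r(\E[\X_0])\lesssim\kappa^2\sqrt{\mu^2 r(n+q)/(mq)}$, which is at most $\delta_0=c/(\kappa^2\sqrt r)$ precisely under the $\kappa^4 r$ term of the stated $mq$ lower bound.

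For the induction step, fix $\U=\U_t$ with $\SE_2(\U,\Ustar)=\delta_t\le\delta_0$ and set $\Pperp:=\I-\Ustar\Ustar^\top$. Since $\A_k$ is Gaussian and $\U$ has orthonormal columns, $\A_k\U$ is an $m\times r$ standard Gaussian matrix, well-conditioned for all $k$ simultaneously once $m\gtrsim r+\log q$, so $\b_k=(\A_k\U)^\dag\A_k\xstar_k$ is well-defined; writing $\xstar_k=\U\U^\top\xstar_k+(\I-\U\U^\top)\xstar_k$ gives $\b_k=\U^\top\xstar_k+(\A_k\U)^\dag\A_k(\I-\U\U^\top)\xstar_k$, and since $\A_k\U$ and $\A_k(\I-\U\U^\top)$ are independent the perturbation term has norm $O(\delta_t\|\bstar_k\|)$, so $\B$ is close to $\U^\top\Xstar$ and the singular values of $\B$ stay within a small multiplicative factor of those of $\Xstar$ (using $\delta_0$ small). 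For the GD step, $\nabla_\U f(\U,\B)=\sum_k\A_k^\top\A_k(\U\b_k-\xstar_k)\b_k^\top$, whose heuristic expectation is $m(\U\B-\Xstar)\B^\top$, so $\hat\U^+:=\U-\eta\nabla_\U f=\U(\I-\eta m\B\B^\top)+\eta m\Xstar\B^\top+\bm{E}$, with $\bm{E}$ the gradient-concentration error. Left-multiplying by $\Pperp$ kills the $\Xstar$ term, leaving $\Pperp\hat\U^+=\Pperp\U(\I-\eta m\B\B^\top)+\Pperp\bm{E}$; with $\eta$ as in the theorem and the singular-value control on $\B$ one gets $\|\I-\eta m\B\B^\top\|\le1-c'/\kappa^2$, hence $\|\Pperp\U(\I-\eta m\B\B^\top)\|\le(1-c'/\kappa^2)\delta_t$, while one also checks (the $\eta m\Xstar\B^\top$ boost compensating the contraction inside $\Span(\Ustar)$) that $\sigma_{\min}(\hat\U^+)\ge1-O(\delta_t)-\|\bm{E}\|$. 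Writing the QR step as $\hat\U^+=\U_{t+1}\R_{QR}$ gives $\SE_2(\U_{t+1},\Ustar)=\|\Pperp\hat\U^+\R_{QR}^{-1}\|\le\|\Pperp\hat\U^+\|/\sigma_{\min}(\hat\U^+)$, so if $\|\bm{E}\|\le(c/\kappa^2)\delta_t$ --- which, after substituting the above form of $\b_k$ into the quadratic forms and bounding the resulting leading, cross, and higher-order pieces separately, requires $m_1 q\gtrsim\kappa^{O(1)}\mu^2(n+q)r$ and $m_1\gtrsim\max(\log n,\log q,r)$ --- then $\SE_2(\U_{t+1},\Ustar)\le(1-c/\kappa^2)\delta_t$. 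Summing the per-iteration fresh-sample budget over $T\asymp\kappa^2\log(1/\eps)$ sample-split iterations and adding the initialization budget produces both the $mq$ and the $m$ lower bounds in the theorem (the $\log(1/\eps)$ factor in the $m$ bound coming from $m=m_0+m_1T$ with $m_1\gtrsim\max(\log n,\log q,r)$).

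Finally, with $\SE_2(\U_T,\Ustar)\le\eps$, decompose $\x_k-\xstar_k=(\U_T\U_T^\top\xstar_k-\xstar_k)+\U_T(\A_k\U_T)^\dag\A_k(\I-\U_T\U_T^\top)\xstar_k$; the first term equals $-(\I-\U_T\U_T^\top)\Ustar\bstar_k$ and has norm $\le\SE_2(\U_T,\Ustar)\|\bstar_k\|\le\eps\|\xstar_k\|$ (using $\|\bstar_k\|=\|\xstar_k\|$), and the second is $O(\eps\|\xstar_k\|)$ by the same Gaussian-pseudoinverse concentration used in the min-step analysis, so after rescaling $\eps$ by a constant the per-column bound follows. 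The time complexity $mqnr\cdot T$ and the per-node communication complexity $nr\cdot T$ are read directly off the per-iteration cost analysis given earlier in this section. I expect the single hardest step to be the GD-step concentration bound $\big\|\sum_k\A_k^\top\A_k(\U\b_k-\xstar_k)\b_k^\top-m\sum_k(\U\b_k-\xstar_k)\b_k^\top\big\|\le(c/\kappa^2)\delta_t$: since $\b_k$, and hence the residual $\U\b_k-\xstar_k$, is built from the very same $\A_k$ that also appears in the outer quadratic form, the summands are not independent quadratic forms in fixed vectors and no black-box matrix concentration inequality applies directly. The resolution is to plug in $\b_k=\U^\top\xstar_k+(\A_k\U)^\dag\A_k(\I-\U\U^\top)\xstar_k$, expand the product, and bound the leading term, the cross terms, and the higher-order terms separately, using Assumption \ref{right_incoh} at each stage to keep any one column from inflating the operator norm; this is where much of the $\kappa$- and $\mu$-dependence of the per-iteration sample complexity comes from.
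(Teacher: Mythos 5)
Your proposal follows the paper's proof strategy essentially step for step: the same spectral-initialization analysis ($\E[\X_0]=\Xstar\D(\alpha)$ with $\sigmamin(\D)\ge 0.9$, sub-exponential Bernstein to control the data-dependent threshold $\alpha$, sub-Gaussian Hoeffding plus an epsilon-net for $\|\X_0-\E[\X_0]\|$, then Wedin), the same basin size $\delta_0 \asymp 1/(\kappa^2\sqrt r)$, the same least-squares perturbation identity $\b_k=\U^\top\xstar_k+(\A_k\U)^\dag\A_k(\I-\U\U^\top)\xstar_k$ giving $\max_k\|\b_k-\g_k\|\lesssim\delta_t\|\bstar_k\|$, and the same one-step decomposition $\Pperp \Utilde^+=\Pperp\U(\I-\eta\B\B^\top)+\eta\Pperp(\E[\gradU]-\gradU)$ divided by a lower bound on $\sigmamin(\Utilde^+)$, yielding the $(1-c/\kappa^2)$ contraction and $T=C\kappa^2\log(1/\eps)$.

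The one place you go astray is in identifying the ``single hardest step.'' Under the sample-splitting convention the paper adopts --- and which you yourself invoke both for the sample-budget accounting and, implicitly, whenever you use $\E[\gradU]=m(\X-\Xstar)\B^\top$ --- the $\b_k$ entering the gradient are computed from a batch independent of the $\A_k$'s appearing in the gradient sum. Conditioned on the past, the residuals $\x_k-\xstar_k$ and the $\b_k$ are then fixed vectors, the gradient is a sum of independent terms $\a_{ki}(\a_{ki}^\top(\x_k-\xstar_k))\b_k^\top$, and Corollary \ref{subexpo_epsnet} applies as a black box; the genuine work is in the min-step analysis, which supplies the tight bound on $\max_k\|\x_k-\xstar_k\|\,\|\b_k\|$ (via incoherence of $\B$, inherited from Assumption \ref{right_incoh}) that controls the sub-exponential norms of the summands. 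The expansion of $\b_k$ inside the quadratic form that you propose as the ``resolution'' is what one would need \emph{without} splitting; but in that regime the identity $\E[\gradU]=m(\X-\Xstar)\B^\top$ on which your whole decomposition rests would itself fail, so your final paragraph is internally inconsistent with the rest of your argument. Apply sample splitting uniformly and the step you flag as hardest dissolves; the remainder of your proposal then matches the paper's intended proof.
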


\begin{remark}\label{gen_eta}
More generally, for any $\eta = c_\eta /(m \sigmax^2)$ with $c_\eta \le 0.8$, one can show that $\SE_2(\U^+, \Ustar) \le (1 - \frac{c c_\eta}{\kappa^2}) \SE_2(\U, \Ustar)$. In short, the above result applies, with only changes to numerical constants.
\end{remark}


\subsection{Discussion}
Existing approaches for LRCS include the AltMin solution studied in our work on LR phase retrieval (LRCS is a special case of LRPR) \cite{lrpr_best,lrpr_it,lrpr_icml} and the convex relaxation studied in \cite{lee2019neurips}. For reasons explained in detail in \cite{lrpr_gdmin}, for LRCS, there does not seem to be a way to guarantee convergence of either of the GD algorithms that have been studied for LRMC and robust PCA -- Factorized GD (FactGD) and Projected GD  (PGD) \cite{rpca_gd,rmc_gd}.
Factorized GD is GD for $\U,\B$ for the cost function $f(\U,\B) + \lambda \|\U^\top \U - \B \B^\top \|_F$ (the second term is a norm balancing term). PGD is GD for $\X$, with each GD step followed by projection onto the set of rank $r$ matrices (by SVD). The reason is: to show convergence, we need to bound the norm of the gradient w.r.t. $\U$ or $\X$ of $f(\U,\B)$ or $f(\X)$, and show that it decays with iterations, under the desired roughly $nr^2$ sample complexity. To obtain this bound, one needs a tight bound on the column-wise recovery error $\max_k \|\x_k - \xstar_k\|$. This is not possible to get for either FactGD or PGD because, for both, the estimates of $\x_k$ are coupled (PGD) or coupled given $\U$ (FactGD)
\footnote{
Consider FactGD. The gradient w.r.t $\U$ of $f(\U,\B)$ is $\sum_{k=1}^q \A_k^\top \A_k (\xstar -  \x_k) \b_k^\top$. To bound the norm of its deviation from its expected value, we need a small enough bound on the sub-exponential norm of each summand \cite[Chap 2]{versh_book}; this requires a small enough bound on the column-wise error $\max_k \|\x_k - \xstar_k\|$, here $\x_k = \U \b_k$. It is not possible to get a tight bound on this quantity for FactGD because its estimates of the different $\b_k$s are coupled, due to the gradient term coming from the second norm balancing term. Consider PGD. The gradient w.r.t. $\X$ is $\sum_{k=1}^q \A_k^\top \A_k (\xstar -  \x_k)$; bounding it again requires a bound on $\max_k \|\x_k - \xstar_k\|$. The estimates $\x_k$ are coupled for different $k$ because of the rank $r$ projection step.
}.%

AltGDmin is the fastest and most communication-efficient compared to both of AltMin and convex relaxtion. Convex relaxation (mixed norm minimization) is known to be much slower.  Its time complexity is not discussed in the paper, however, it is well known that solvers for convex programs are much slower when compared to direct iterative algorithms: they either require number of iterations proportional to $1/\sqrt{\eps}$ or the per-iteration cost has cubic dependence on the problem size, here $(nr)^3$.
AltMin is also slower than AltGDmin, both in terms of theoretical complexity and experimentally, because, for updating both $\U$ and $\B$, it requires solving a minimization problem keeping the other variable fixed. The minimization step for $\U$ is the slow one. 
The same is true for its communication cost. The minimization step for updating $\U$ needs to use multiple GD iterations instead of just one in case of AltGDmin, or it needs to share matrices of size $nr \times nr$ (even more inefficient). This is why both the time and communication cost of AltMin depend on $\log^2(1/\eps)$ instead of just $\log(1/\eps)$ is case of AltGDmin.

%
%
%
%
%

\renewcommand{\c}{\bm{c}}
\section{AltGDmin for LRPR: algorithm and guarantees}
To explain the ideas simply here, we consider the real-valued case. This means we do need to worry about complex conjugation.
The phaseless measurements $\z_k$ can be rewritten as
\[
\z_k = diag( \c_k^*) \y_k = diag(\c_k^*) \A_k \Ustar \bstar_k
\]
where $\c_k^*$ is a vector of signs/phases of $\y_k$ and $diag$ converts this into a diagonal matrix. Thus, the cost function to minimize now becomes
\[
f(\U,\B,\{\c_k, k \in [q]\}): = \sum_{k=1}^q \|\z_k - diag(\c_k) \A_k \U \b_k\|^2
\]
Clearly this problem is again decoupled with $\Za = \U$ and $\Zb = \{\c_k, \b_k, k \in [q]\}$. Notice also that when $\U$ is fixed, solving for $\{\b_k, \c_k\}$ is a standard $r$-dimensional PR problem with many fast and provably correct solutions, e.g., \cite{wf,twf}. The cost of $r$-dimensional PR is order $mr \log(1/\eps)$ and the cost of computing $\A_k \U$ is $mnr$,  per column. Thus, the total cost of standard PR for all columns is just $q \max(mnr, mr \log(1/\eps))$. Typically the first term dominates. Gradient computation cost is still $mnrq$. Thus the total cost of AltGDmin iterations is $mqnr \cdot T$ with $T$ bounded in the result below.

The initialization in this case also needs to be different. We initialize $\U$ by computing the top $r$ singular vectors of \cite{lrpr_best}
\[
\M = \sum_k \A_k^\top \z_{k,trunc} \z_{k,trunc}^\top \A_k
\]
with the truncation done exactly as explained above for LRCS (truncation only uses magnitudes of observations).
We can prove the following \cite{lrpr_gdmin}.
\begin{theorem}[AltGDmin-LRPR \cite{lrpr_gdmin}]
Assume that Assumption \ref{right_incoh} holds. Set $\eta = 0.4 / m \sigmax^2 $ and $T = C \kappa^2 \log( 1 /\eps)$.
If
\[
m q \ge C \kappa^6 \mu^2 (n+q) r^2 ( \kappa^4 r + \log (1/\eps) ) 
\]
and $m \ge C \max(\log n, \log q, r) \log (1/\eps)$, then,  the conclusions of Theorem \ref{main_res} hold with $\SE_2$ replaced by $\SE_F$.
The time complexity is $mqnr \cdot T =   \max (mqnr, mq r \log(1/\eps)) \cdot \kappa^2 \log( 1 /\eps)$. The communication complexity is $nr\cdot T = nr  \cdot \kappa^2 \log( 1 /\eps)$ per node.
\label{main_res_pr}
\end{theorem}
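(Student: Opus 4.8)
The plan is to mirror the proof of Theorem~\ref{main_res} (LRCS), organized as an induction over the iterations that additionally carries along the per-column phase variables $\bm c_k$. Let $\U_t$ denote the estimate of $\Span(\Ustar)$ after $t$ iterations and set $\delta_t := \SE_F(\U_t,\Ustar)$. I would maintain the inductive claim that $\delta_0 \le c/\kappa^2$ and $\delta_{t+1} \le (1 - c_0/\kappa^2)\,\delta_t$ for every $t$, which after $T = C\kappa^2 \log(1/\eps)$ steps gives $\SE_F(\U_T,\Ustar) \le \eps$; the columnwise bound then drops out of the $\B$-update analysis below. As usual, sample splitting makes the fresh batch $\{\y_k,\A_k\}$ used at iteration $t$ independent of $\U_t$. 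For the base case I would run the initialization analysis of \cite{lrpr_best} on $\M = \sum_k \A_k^\top \z_{k,trunc}\z_{k,trunc}^\top \A_k$: conditioned on the truncation level $\alpha$ lying in a good interval (guaranteed by a sub-exponential Bernstein bound), $\E[\M]$ has the form $\beta_1 \I + \Ustar \bm D \Ustar^\top$ with $\beta_1>0$ and $\sigmin(\bm D)$ of order $\sigmin^2$, so its top-$r$ eigenspace is $\Span(\Ustar)$ with spectral gap of order $\sigmin^2$; truncation is essential because the summands $(\a_{ik}^\top\xstar_k)^2 \a_{ik}\a_{ik}^\top$ are heavy-tailed. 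A truncated Bernstein inequality together with an $\eps$-net over the $r$-dimensional range controls $\|\M-\E[\M]\|$, and the Davis--Kahan / Wedin $\sin\theta$ theorem then yields $\SE_2(\U_0,\Ustar) \le c/(\kappa^2\sqrt r)$, hence $\SE_F(\U_0,\Ustar) \le \sqrt r\,\SE_2(\U_0,\Ustar) \le c/\kappa^2$, once $mq$ exceeds the stated initialization threshold of order $\kappa^{10}\mu^2(n+q)r^3$. The extra factor $r$ relative to LRCS comes precisely from needing an $\SE_F$ rather than an $\SE_2$ bound of size $c/\kappa^2$: this costs $\sqrt r$, and it enters the sample complexity squared.

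\emph{Inductive step, the $\B$ and phase update.}
Fix $\U = \U_t$. Since $\U_t$ is independent of the fresh $\A_k$ and has orthonormal columns, $\A_k\U_t$ is an $m\times r$ standard Gaussian, and the $k$-th subproblem is an $r$-dimensional phase-retrieval problem with design $\A_k\U_t$, ground truth $\tilde\b_k := \U_t^\top\xstar_k$, and additive noise $\A_k w_k$ where $w_k := (\I - \U_t\U_t^\top)\xstar_k$ has $\|w_k\| \le \SE_2(\U_t,\Ustar)\|\bstar_k\| \le \delta_t\|\bstar_k\|$. Invoking a noisy-PR guarantee for a provably correct $r$-dimensional solver such as \cite{twf}, with a union bound over $k\in[q]$ (valid once $m \gtrsim \max(\log q, r)\log(1/\eps)$ so the solver's internal error is negligible), yields $\dist(\b_k,\tilde\b_k) \le C\delta_t\|\bstar_k\|$ for every $k$, i.e., after fixing the irrelevant per-column global phase (absorbing it into $\bm c_k$ and hence into $\x_k := \U_t\b_k$), $\|\b_k - \tilde\b_k\| \le C\delta_t\|\bstar_k\|$. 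Since $\x_k - \xstar_k = \U_t(\b_k - \tilde\b_k) - w_k$, this gives $\|\x_k - \xstar_k\| \le C\delta_t\|\bstar_k\|$ for all $k$, and via Assumption~\ref{right_incoh} a Frobenius bound $\|\B - \tilde\B\|_F \lesssim \delta_t\sqrt r\,\sigmax$ with $\tilde\B = [\tilde\b_1,\dots,\tilde\b_q]$. I would also invoke the standard phase-retrieval fact that a columnwise estimate this accurate has a small phase-mismatch set $\T_k := \{i: \hat{\bm c}_{k,i} \neq \bm c^*_{k,i}\}$ with $|\T_k| \lesssim \delta_t^2 m$ w.h.p., so that $\tty_k := \mathrm{diag}(\hat{\bm c}_k)\z_k = \A_k\xstar_k + \e_k$ with $\e_k$ supported on $\T_k$ and $\|\e_k\|$ suitably small.

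\emph{Inductive step, the GD step for $\U$.}
Write the gradient as $\nabla_\U f(\U_t,\B) = \sum_k \A_k^\top(\A_k\U_t\b_k - \tty_k)\b_k^\top = \sum_k \A_k^\top\A_k(\x_k - \xstar_k)\b_k^\top - \sum_k \A_k^\top\e_k\b_k^\top$. The first sum I would handle exactly as in the LRCS GD analysis: substitute the deterministic proxy $\tilde\b_k$ for $\b_k$ (crudely bounding the residual $\b_k - \tilde\b_k$ from the step above), take the conditional expectation over $\A_k$ to extract the descent direction, and use a sub-exponential Bernstein bound with an $\eps$-net to control the deviation; for $\eta = 0.4/(m\sigmax^2)$ and under the stated sample complexity this shows (as in Remark~\ref{gen_eta}) that, after the $QR$ step, this part alone leaves a perpendicular component of size at most $(1 - c_0/\kappa^2)\delta_t$. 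The second sum is a perturbation: using $|\T_k| \lesssim \delta_t^2 m$, the smallness of $\|\e_k\|$, and $\|\b_k\| \lesssim \|\bstar_k\|$ with incoherence, I would show $\|\sum_k \A_k^\top\e_k\b_k^\top\| = o(\delta_t\, m\sigmax^2/\kappa^2)$ w.h.p., so the whole update obeys $\delta_{t+1} \le (1 - c_0/\kappa^2)\delta_t$. This closes the induction; feeding $\delta_T \le \eps$ back into the $\B$-update bound gives the columnwise conclusion and, together with $T = C\kappa^2\log(1/\eps)$, the complexity statements.

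\emph{Main obstacle.}
I expect the crux to be the phase-error term $\sum_k \A_k^\top\e_k\b_k^\top$: $\e_k$ is supported on the random set $\T_k$, which is itself a function of $\A_k$ through $\x_k$, so it cannot be bounded by a naive matrix-Bernstein argument and instead needs a careful decoupling -- controlling its norm on the event that every $|\T_k|$ is small and that the ``small'' entries $\a_{ik}^\top\xstar_k$ are suitably spread out. This, I believe, is what ultimately forces the extra factor of $r$ (and the additional powers of $\kappa$) in the sample complexity relative to Theorem~\ref{main_res}. A secondary technical nuisance is the within-iteration dependence of $\b_k$ on $\A_k$: unlike the closed-form least-squares $\b_k = (\A_k\U_t)^{\dagger}\y_k$ of LRCS, here $\b_k$ comes from an iterative PR solver with no closed form, which is exactly why the gradient expectation is routed through the deterministic proxy $\tilde\b_k = \U_t^\top\xstar_k$ with the remainder pushed into the perturbation terms; making that remainder small is an extra demand on $m$. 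The heavy-tailed initialization analysis of $\M$ (needing truncation and a truncated Bernstein bound rather than the cleaner sub-Gaussian Hoeffding argument used for $\X_0$ in LRCS) is a third, more routine, source of work.
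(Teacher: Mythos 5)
Your proposal follows essentially the same route as the paper: spectral initialization via the truncated matrix $\M$ analyzed with Wedin's theorem, the $\B$-step treated as $q$ decoupled noisy $r$-dimensional phase retrieval problems, and the GD step decomposed into the clean LRCS gradient plus a phase-mismatch error term $\Err$ that must be shown to satisfy $\|\Err\| \lesssim \delta_t \sigmin^2$ so that the noise-free recursion of Sec.~\ref{noisygrad_proof} (case 1) still yields geometric decay. The paper outsources the $\Err$ bound to the AltMin-LRPR analysis of \cite{lrpr_best}, whereas you sketch how it would be proved via the phase-mismatch sets $\T_k$; you also correctly identify the dependence of $\T_k$ on $\A_k$ and the use of $\SE_F$ in place of $\SE_2$ as the sources of the extra factor of $r$.
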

\begin{remark}
We can use any  $\eta = c_\eta /(m \sigmax^2)$ with $c_\eta \le 0.8$, see Remark \ref{gen_eta}. 
\end{remark}

Notice that the only change in the above result compared to LRCS is an extra factor of $r$ in the sample complexity. This trend is well-known from other work on structured phase retrieval \cite{cai,lrpr_it,lrpr_best}.
The rest of the discussion is the same as in case of LRCS. AltGDmin is much faster than AltMin. FactGD or PGD do not provably converge for the same reasons. The proof strategy for this case involves interpreting the gradient w.r.t. $\U$ as a noisy version of the LRCS case. The overall idea for handling this case is provided in Sec. \ref{noisygrad_proof}.  



\section{AltGDmin for LRMC: algorithm and guarantees}
AltGDmin for LRMC was studied in \cite{lrmc_gdmin}. 
There are two differences between LRMC and LRCS. The first is that LRMC measurements are row-wise and column-wise local while those for LRCS are global functions of each column. This is why LRMC needs incoherence of left and right singular vectors of $\Xstar$, and needs to prove this for each estimate $\X=\U\B$ at each iteration. LRCS needs this only for right singular vectors. The second is that the measurements are bounded and this is why the initialization does not need a truncation step.

The goal is to minimize
\begin{equation}
\min_{\substack{\check\B,\, \check\U  \colon \check\U^\top \check\U = \I  }}  f(\check\U,\check\B), \ f(\check\U,\check\B):= \| (\Y - \check\U \check\B)_{\Omega}  \|_F^2 \label{eq:obj}
\end{equation}
As before, we impose the orthornormal columns constraint on $\check\U$ as one way to ensure that the norm  of $\U$  does not keep increasing or decreasing continuously with algorithm iterations, while that of $\B$ decreases or increases.

As explained earlier in \eqref{lrmc_cost}, this cost function is partly decoupled for $\B$ as well as for $\U$. This means that we could pick either of the two to serve as $\Zb$; the choice depends on how the data is federated. In fact, since the LRMC problem is symmetric w.r.t. rows and columns, one can always assume vertical federation as stated earlier and, if needed, transpose the matrices to satisfy the assumption.

Conceptually, the only difference for the AltGDmin algorithm in this case is in the initialization step. However, its efficient implementation requires some careful work. The analysis to derive the theoretical guarantees needs significant extra work as well. Most importantly, it requires showing incoherence of $\U$ at each iteration including the initialization. For the iterations, this can be proved; we explain the main in ideas in Sec. \ref{proof_ideas_lr} and \ref{proof_details}. For the initialization of $\U$, we need to ensure this by construction. We do this by adapting the idea of \cite{rpca_gd}. We first compute the top $r$ singular vectors of $\Y$; denote the matrix formed by these singular vectors by $\U_{00}$. We then project $\U_{00}$ onto the space of row incoherent matrices,  ${\calU}: = \{ \check\U: \|{\check\u}^j \| \le \mu \sqrt{r/n} \}$ to obtain $\Pi_{\calU}(\U_{00})$. We finally obtain $\U_0$ by orthonormalizing it by QR. Here,
\bea
[\Pi_{\calU}(\M)]^j = \m^j \cdot \min\left(1, \frac{\mu \sqrt{r/n} }{\| \m^j \|} \right), \ \forall  j \in [n]
\label{proj_incoh}
\eea
In words,  if a row of $\M$ has $\ell_2$ norm that is more than the threshold $\mu \sqrt{r/n}$, then one renormalizes the row so that its norm equals the threshold. If the norm is less than this threshold, then we do not change it. Clearly this is an order $nr$ time operation.
In summary,  $\U_0:= QR(\Pi_{\calU}(\U_{00}))$ with $\U_{00}$ being the top $r$ left singular vectors of $\Y$.

The rest of the AltGDmin algorithm is conceptually similar to that for LRCS, we use $\Za=\U$ and $\Zb = \B$. However, its efficient implementation is very different and hence, so is its time and communication complexity. Briefly, the reason is that $\S_k$ is just a row selection matrix. Thus, for example, $\S_k \U$ is actually implemented by sub-selecting the rows of $\U$ and not by matrix multiplication. A lot of other steps use similar ideas for efficient implementation. We summarize the complexities in Table \ref{table_compare}. We can prove the following for AltGDmin-LRMC

\begin{theorem}[AltGDmin-LRMC \cite{lrmc_gdmin}]
\label{thrm}
Pick an $\eps < 1$.
Assume that Assumption \ref{incoh} holds, and that, entries of $\Xstar$ are observed independently of other entries with probability $p$. Set $\eta = 0.5/(p \sigmax^2)$ and $T = C\kappa^2 \log(1/\eps)$.
If  $n q p > C  \kappa^6 \mu^2 \max(n,q) r^2 \log \max(n,q) \log({1}/{\epsilon})$, then, with probability (w.p.) at least $1 - 4T/\min(n,q)^3$,
\begin{equation}
\SE_F(\U^{(T)},\Ustar)  \leq \eps \text{ and } \|\X^{(T)}  - \Xstar\|_F \le \eps \|\Xstar\|.
\end{equation}
%
The total per-node computation complexity of federated AltGDmin is $C\kappa^2 \log(1/\eps) \cdot \max(n,|\Omega|)r^2 \cdot \frac{1}{\gamma} $ and its total per-node communication complexity is  $C\kappa^2 \log(1/\eps) \cdot nr$. 
\end{theorem}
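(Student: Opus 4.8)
The plan is to follow the standard two-phase analysis for spectral-initialization + alternating-descent schemes, adapted to the matrix completion setting, and to run everything under the sample-splitting convention (each iteration uses fresh Bernoulli-observed entries, so the error-decay analysis conditions only on the current iterate). Phase one is the initialization bound: I would show that $\SE_F(\U_0,\Ustar) \le \delta_0$ for some small constant $\delta_0$ (say $\delta_0 = c/\kappa^2$), and simultaneously that $\U_0$ is $\mu$-incoherent by construction. Phase two is the per-iteration contraction: assuming $\SE_F(\U^{(t)},\Ustar) \le \delta_t$ with $\delta_t$ small and $\U^{(t)}$ incoherent, I would show the minimization step for $\B$ produces a $\B$ that is accurate column-wise, then that the GD step for $\U$ with $\eta = 0.5/(p\sigmax^2)$ yields $\SE_F(\U^{(t+1)},\Ustar) \le (1 - c/\kappa^2)\delta_t$, and — crucially — that $\U^{(t+1)}$ (after the QR step) remains $\mu$-incoherent. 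Iterating $T = C\kappa^2\log(1/\eps)$ times drives $\SE_F$ below $\eps$; the bound $\|\X^{(T)} - \Xstar\|_F \le \eps\|\Xstar\|$ then follows from $\SE_F(\U^{(T)},\Ustar)\le\eps$ together with the accuracy of the last $\B$ update, using $\|\X - \Xstar\|_F \lesssim \sigmax \cdot \SE_F(\U,\Ustar) + (\text{column error})$ and $\sigmax \le \sqrt{r}\cdot(\cdots)$, plus a union bound over the $T$ iterations and the initialization for the stated failure probability $4T/\min(n,q)^3$.

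For the initialization, I would start from the fact that $\E[\Y] = p\Xstar$, so $\Y/p$ is an unbiased estimate of $\Xstar$. A Bernstein/matrix-concentration argument (controlling $\|\Y - p\Xstar\|$ via the incoherence Assumption \ref{incoh}, which bounds entrywise magnitudes of $\Xstar$) gives $\|\Y/p - \Xstar\| \le \delta_0' \sigmin$ provided $nqp \gtrsim \mu^2 \max(n,q) r \log\max(n,q)$. Wedin's $\sin\theta$ theorem then bounds $\SE_2(\U_{00},\Ustar)$, hence $\SE_F(\U_{00},\Ustar)$. The projection step $\Pi_{\calU}$ onto row-incoherent matrices only decreases the distance to $\Ustar$ (since $\Ustar$ is itself in $\calU$ by Assumption \ref{incoh}, a nonexpansive-projection argument), and the final $QR$ is a harmless reparametrization that preserves the column span and the incoherence up to constants; this is the device borrowed from \cite{rpca_gd}. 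So $\U_0 = QR(\Pi_{\calU}(\U_{00}))$ satisfies both $\SE_F(\U_0,\Ustar) \le \delta_0$ and the row-norm bound.

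For the per-iteration step I would decompose the $\B$-update: since $\b_k$ solves a small least-squares problem over the observed rows $\Omega_k$, one writes $\b_k = (\S_k \U)^\dag \S_k \xstar_k$ and expresses the error $\U\b_k - \xstar_k$ in terms of $(\I - \U\U^\top)\xstar_k$ plus a perturbation coming from $\S_k\U$ not being exactly a scaled isometry on the column space; controlling the latter needs a bound on $\sigma_{\min}(\S_k\U)$, which holds w.h.p. when $U$ is incoherent and $pn \gtrsim \mu^2 r\log n$ (this is where incoherence of the \emph{current} iterate is indispensable, and why it must be re-established at every step). Then the gradient $\nabla_\U f = \sum_k \S_k(\U\b_k - \xstar_k)\b_k^\top$ is split into its expectation (which gives the contraction, analyzed as in the LRCS proof via the $\Bstar\Bstar^\top$ term having singular values between $\sigmin^2$ and $\sigmax^2$) and a deviation term bounded by concentration. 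The main obstacle — and the part that genuinely distinguishes LRMC from LRCS — is propagating incoherence of $\U^{(t+1)}$: the GD step can in principle create large rows, so one must show that the row norms of $\U - \eta\nabla_\U f$ stay $O(\mu\sqrt{r/n})$, which requires a row-wise (not just spectral) bound on the gradient deviation, proven via a more delicate concentration argument that exploits the column-wise error bound $\max_k\|\U^{(t)}\b_k^{(t)} - \xstar_k\|$ — exactly the quantity that, as the paper notes in its footnote, is available here because the $\b_k$'s are decoupled given $\U$, but is \emph{not} available for FactGD or PGD. Getting the $\kappa^6\mu^2 r^2\log$-type sample complexity to come out of these row-wise bounds, rather than a worse power, is the quantitative heart of the argument.
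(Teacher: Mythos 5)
Your proposal follows essentially the same route as the paper's own argument: spectral initialization analyzed via $\E[\Y]=p\Xstar$, matrix Bernstein, and Wedin, with the projection-onto-incoherent-matrices step handled by the convexity argument borrowed from \cite{rpca_gd}; then a per-iteration contraction built from the least-squares analysis of $\B$ (requiring $\sigma_{\min}(\S_k\U)$ bounds and hence incoherence of the current iterate), the decomposition of $\nabla_\U f$ into its expectation $(\X-\Xstar)\B^\top$ plus a concentration-controlled deviation, and the inductive propagation of row incoherence of $\U^{(t)}$ via a row-wise bound on $\max_j\|\e_j^\top(\nabla_\U f - \E[\nabla_\U f])\|$, all under sample splitting. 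You correctly identify the incoherence propagation and the decoupled column-wise error bound as the steps that distinguish LRMC from LRCS, so no substantive gap to report.
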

Observe that $nq\cdot p = \E[|\Omega|]$, i.e., it is the expected value of the sample complexity. We often just use the phrase ``sample complexity'' when referring to it in our writing.

\begin{remark}
More generally, we can use any $\eta =  c_\eta /(p \sigmax^2)$ with $c_\eta \le 0.8$; see Remark \ref{gen_eta}.
\end{remark}

\subsection{Discussion}
Recall that for LRCS, AltGDmin is much faster than AltMin because the minimization step w.r.t. $\U$ is coupled and hence expensive.
However, in case of LRMC, the recovery problem is decoupled for both $\U$ and for $\B$. Consequently, AltMin is the fastest centralized solution and order-wise (ignoring dependence on $\kappa,\mu$), all of AltMin, AltGDmin and FactGD are equally fast. PGD is much slower. 
In a federated setting, when considering communication cost, AltGDmin is the most communication efficient compared with both AltMin and GD (FactGD). Compared with FactGD, AltGDmin iteration complexity is better by a factor of $r$. Compared with AltMin, its per-iteration cost is lower. It requires sharing just the gradient w.r.t. $\U$, which is at most $nr$ entries, in each iteration. AltMin, on the other hand, requires sharing all the observed entries and this has a communication cost of order $|\Omega| > nr$. The required $|\Omega|$ is order $nr^2$ at least (see sample complexity).  
We provide a summary of comparisons of guarantees for both LRCS and LRMC in Table \ref{table_compare}.

\begin{table*}[t!]
	\begin{center}
\resizebox{0.99\linewidth}{!}
		{
			\begin{tabular}{lllll} 
\toprule
				LRCS &  Computation  	& Communic.  & Sample  &  Resilient   \\
& Complexity &  Complexity &  Complexity & Modific \\
				 \midrule
				AltGDmin \cite{lrpr_gdmin_2}  & $m\frac{q}{\gamma} nr \cdot \log(1/\eps)$  &  $nr \log(1/\eps)$  & $nr \max(r,\log(1/\eps))$  & Efficient  \\
				&&&& \\
\hline
			GD (FactGD)   & $m\frac{q}{\gamma} nr \cdot T $ & $nr \cdot T$ & $\text{(cannot bound)}$ & Efficient   \\
				& $\text{(cannot bound $T$)}$  && & \\
\hline
				AltMin \cite{lrpr_best}   &   $m \frac{q}{\gamma} nr \cdot \log^2(1/\eps)$   &  $nr \log^2(1/\eps)$ & $nr^2\log(1/\eps)$ & Not Efficient   \\
				&&&& \\
\hline
				Convex \cite{lee2019neurips}  &  $mqnr \cdot \min(\frac{1}{\sqrt{\eps}}, n^3 r^3)$ &  &   $\frac{nr}{\eps^4}$   & Not Efficient \\
(mixed norm min)			&&&  & \\
\bottomrule
\toprule
				LRPR &  Computation  	& Communic.  & Sample  &  Resilient   \\
& Complexity &  Complexity &  Complexity & Modific \\
				 \midrule
				AltGDmin \cite{lrpr_gdmin}  & $\max( m\frac{q}{\gamma} nr, m\frac{q}{\gamma}r \log(1/\eps))  \cdot \log(1/\eps)$  &  $nr \log(1/\eps)$  & $nr^2 \max(r,\log(1/\eps))$  & Efficient  \\
				&&&& \\
\hline
			GD (FactGD)   & $m\frac{q}{\gamma} nr \cdot T $ & $nr \cdot T$ & $\text{(cannot bound)}$ & Efficient   \\
				& $\text{(cannot bound $T$)}$  && & \\
\hline
				AltMin \cite{lrpr_best}   &   $m \frac{q}{\gamma} nr \cdot \log^2(1/\eps)$   &  $nr \log^2(1/\eps)$ & $nr^2\log(1/\eps)$ & Not  \\
				&&&& \\
\hline
\bottomrule
\toprule
			LRMC &  Computation  	& Communic.  & Sample  &  Resilient   \\
& Complexity &  Complexity &  Complexity & Modific \\
				\midrule
				AltGDmin \cite{lrmc_gdmin}  & $\frac{|\Omega|}{\gamma}r^2 \log (\frac{1}{\epsilon})$   & $ nr \log (\frac{1}{\eps})$  & $nr^2\log n \log(\frac{1}{\eps})$  & Efficient  \\
				&&& & \\
\hline
				GD (FactGD)   \cite{rpca_gd,lafferty_lrmc} & $ \frac{|\Omega|}{\gamma}r^2 \log (\frac{1}{\eps})$    & $ nr^2 \log (\frac{1}{\eps})$   &$nr^2\log n$   & Efficient    \\
				&&&  & \\
\hline
				AltMin  \cite{lowrank_altmin}  &  $ \frac{|\Omega|}{\gamma}r \log^2(\frac{1}{\eps})$   &  $nr \log^2(\frac{1}{\eps})$  & $ nr^{4.5}\log n\log(\frac{1}{\eps})$  & Not  \\
(use GD for updating $\U$)				&&&  & \\
\hline
AltMin  \cite{lowrank_altmin,lrmc_gdmin}  & $\frac{|\Omega|}{\gamma}r^2  \log (\frac{1}{\eps})$ & $ \frac{|\Omega|}{\gamma}  \log (\frac{1}{\eps})$  & $nr^2 \log n \log(\frac{1}{\eps})$   & Not \\
(use closed form for updating $\U$)				&&& & \\
\hline
Convex \cite{matcomp_candes}  &  $|\Omega|r \cdot \min(\frac{1}{\sqrt{\eps}}, n^3 r^3)$ &  &   $n^{1.2} r \log^2 n $    & \\
(nuclear norm min)				&&&  & Not \\
				\bottomrule
			\end{tabular} 
		}
		\vspace{-0.05in}
		\caption{\small\sl  Comparing AltGDmin with AltMin and GD (FactGD) for recovering an $n \times q$ rank $r$ matrix from a subset of $m$ linear projections of its columns (LRCS), $m$ phaseless linear projections (LRPR),  or from a subset of its entries, when each entry is observed with probability $p$ independent of all others (LRMC).
 $\gamma$ is the total number of federated nodes. $\Omega$ is the set of observed entries for LRMC.
			The table assumes $n \approx q$, $\gamma$ is a numerical constant, $\kappa,\mu$ are numerical constants, $\max(\log(1/\eps), r) = \log(1/\eps)$, and $|\Omega| \ge nr$ (necessary).  Here  $\text{Communic Comp} = T \cdot \max(\text{Communic.(node), Communic.(center)}) $. 	Similarly for the computation cost. }
		%
		\label{tab1}
		\label{table_compare}
	\end{center}
\end{table*}

\part{AltGDmin Analysis -- Overall Proof Technique and Details for LR Problems}

\chapter{General Proof Approach for any Problem} \label{proof_ideas} 


The following approach generalizes the ideas used for LRCS \cite{lrpr_gdmin_2} and the other LR problems.
Let $\mathrm{NormDist}_a$ be the relevant measure of normalized distance for $\Za$ and $\mathrm{NormDist}_b$ for $\Zb$. The distance metric used can be different for $\Za$ and $\Zb$, e.g., for the LR matrix recovery problems discussed above, we used the subspace distance for $\Za$, and normalized Euclidean norm distance for $\Zb$.
The following is the overall approach that can be considered to analyze AltGDmin for solving a problem. This generalizes the ideas used for the above guarantees for LRCS and LRMC.
\bi
\item Analyze the initialization step to try to show that $\mathrm{NormDist}_a( \hat\Za, \Za^*) \le \delta_0$  with a certain probability. 

\bi
\item Typically, the initialization is a spectral initialization for which existing approaches (if any) can be used. 

\item  In many cases, $\delta_0$ being a small numerical constant suffices. For certain problems, it may even be possible to prove results with random initialization; in this case, $\delta_0$ is very close to 1.
\ei

\item At iteration $t$, suppose that we are given an estimate $\hat\Za$ satisfying $\mathrm{NormDist}_a( \hat\Za, \Za^*) \le \delta_{t-1}$ with $\delta_{t-1}$ ``small enough''.

\bi
\item Analyze the minimization step to show that $\mathrm{NormDist}_b(\hat\Zb, \Zb^*) \lesssim \delta_{t-1}$ with a certain probability. 
\bi

\item This analysis will typically be the easier one, because this step is often a well studied problem, e.g., in case of LRCS or LRMC, it is the standard least squares (LS) problem. For LRPR, it is a standard phase retrieval problem. 
\ei

\item  Analyze the GD step to try to show that, for the updated $\Za$ estimate, $\hat\Za^+$, $\mathrm{NormDist}_a( \hat\Za^+, \Za^*) \le \delta_t:= c_1(\eta,\delta_0,\eps_1) \delta_{t-1} $ with a certain probability. This bound would hold under an upper bound on the step size $\eta$ and the initialization error $\delta_0$.


\ei


\ei
We set $\delta_0$, $\eta$ and $\eps_1$ to ensure that $c_1(\eta,\delta_0,\eps_1) \le c $ for a  $c<1$ (exponential error decay). If we can set $\eta$ to be a constant (w.r.t. $n,q,r$), it will help guarantee that the iteration complexity grows logarithmically with $1/\eps$ (fast convergence).
All the above steps should hold with a certain probability that depends on the problem dimensions ($n,q,r$ in case of LR problems), sample complexity ($m$ or $n p$ in case of LRCS and LRMC), and the values of $\delta_0$ and $\eps_1$.
We use our values of $\delta_0$ and $\eps_1$ to find a lower bound on the sample complexity in terms of $n,q,r$ and $\kappa,\mu$, in order to guarantee that all the above steps hold with a high enough probability. 

 Sample-splitting is assumed across iterations in order to make the analysis easier (a common technique for analyzing iterative algorithms that we learned about in \cite{lowrank_altmin} and follow-up works). This helps guarantee that the estimates $\hat\Za, \hat\Zb$ used in a given step are independent of the data used in that step. Using this assumption, (i) the expected value of terms can be computed more easily; and (ii) the summands in a given term are independent conditioned on past data making it possible to bound the deviation from the expected value using concentration bounds for sums of independent random variables/vectors/matrices \cite{versh_book}.

In the AltGDmin analysis, analyzing the GD step is the most challenging part. The reason is AltGDmin is not a GD or projected GD algorithm (both of which are well studied) for any variable(s). This means that the gradient at $\Za = \Za^*$, $\nabla_{\Za} f(\Za^*, \Zb)$  is not zero. 


We emphasise here that because one of the steps in AltGDmin is a minimization step, there does not seem to be a way to prove guarantees without sample-splitting. All guarantees for AltMin require sample splitting \cite{lowrank_altmin,pr_altmin,lrpr_best}.  On the other hand, for  GD / factorized GD, it is possible to prove guarantees without sample splitting as done in \cite{rpca_gd,twf}.

\chapter{AltGDmin for LR Problems: Overall Proof Ideas} \label{proof_ideas_lr}
In this chapter, we explain the main ideas that can be used for analyzing AltGDmin for solving an LR recovery problem. We begin below by specifying AltGDmin for any LR problem. Next in Sec. \ref{cleangrad_proof}, we provide the main ideas for analyzing the noise-free attack-free linear measurements problems - LRCS and LRMC. In Sec. \ref{noisygrad_proof}, we explain how to analyze the noisy, attack-resilient or nonlinear measurement settings.
More details for both sections are provided in Chapter \ref{proof_details}.
The mathematical tools used in this analysis (linear algebra, probability and random matrix theory ideas) are summarized in Sec. \ref{prelims}. Many of these are from  \cite{spectral_init_review} and  \cite{versh_book} and include: (i)  singular value bounds, (ii) results such as the Davis-Kahan or the Wedin $\sin \Theta$ theorem, that can be used to obtain a deterministic bound the subspace distance between the estimate of the column-span of the unknown LR matrix $\Xstar$ and the true one \cite{spectral_init_review}; and (iii) matrix concentration bounds (or scalar ones combined with an appropriate epsilon-net argument) such as sub-Gaussian Hoeffding, sub-exponential Bernstein or ythe matrix Bernstein inequality \cite{versh_book}. In addition some basic linear algebra tricks are needed as well.

\section{AltGDmin for any LR matrix recovery problem}
Consider the problem of recovering $\X = \U \B$ from $\Y:= \mathcalA(\U\B) $ where $\mathcalA$ is a linear operator. We consider the squared loss function
\[
f(\U,\B):= \nu  \|\Y -  \mathcalA(\U\B) \|_F^2
\]
where $\nu$ is a quantity that does not depend on $\U,\B$ and that is used to normalize the loss function so that
\[
\E[\nabla_\U f(\U,\B)]  = (\X - \Xstar) \B^\top, \ \ \X := \U \B
\]
when $\U,\B$ are independent of the data $\{\Y, \mathcalA\}$.
For example, for LRCS, with $\A_k$ containing i.i.d. standard Gaussian entries, $\nu = 1/m$; while for LRMC, $\nu = 1/p$.

The AltGDmin algorithm proceeds as follows.
\bi
\item Initialize $\U$: use a carefully designed spectral initialization approach to get $\U_0$
\item Repeat the following for all $t=1$ to $T$:
\ben
\item Update $\B$ by minimization: obtain
$$\B_t: = \arg \min_{\B} f(\U_{t-1},\B)$$
(this statement assumes that the minimizer is unique and this fact is proved in the algorithm analysis). 
This step is efficient if it decouples column-wise, as in the case of LRCS, LRPR and LRMC.

\item Update $\U$ by GD followed by orthnormalization:
$$\U_t: = QR(\U_{t-1} - \eta \nabla_\U f(\U,\B_t) )$$
\een

\item Sample splitting is assumed as noted earlier.
\ei

\section{Proof approach: clean and noise-free case}\label{cleangrad_proof}
To explain the main ideas of our proof approach, we use the simplest setting: noise-free and attack-free LRCS and LRMC. Depending on the problem, we use $\SE_2$ or $\SE_F$. For any problem, $\SE_F$ can be used. For attack-free LRCS, use of $\SE_2$ gives in a better result (sample complexity lower by a factor of $r$).   When $\SE_2$ is used, all the norms below are $\|.\|$. When $\SE_F$ is used, all the numerator term norms are $\|.\|_F$.

Let $\U$ be the estimate at the $t$-th iteration.
Define
\begin{align*}
\g_k &  := \U^\top \xstar_k, k \in [q], \text{ and } \G:= \U^\top \Xstar, \\
\Pperp &  := \I - \Ustar \Ustar^\top, \\
\gradU &  :=  \nabla_\U f(\U,\B) \\  
\delta_t & := \SE(\U,\Ustar) = \|\Pperp \U\|
\end{align*}

Under the sample-splitting assumption, it can be shown that
\bea
\E[\gradU] = \E[\nabla_\U f(\U,\B)] = (\X - \Xstar) \B^\top
\label{EgradU}
\eea

Recall the Projected GD step for $\U$:
\begin{eqnarray}
\Utilde^+ &= \U - \eta \gradU \text{ and } \Utilde^+  \qreq \U^+\R^+
\label{Utildeeq}
\end{eqnarray}
Since $ \U^+= \Utilde^+ (\R^+)^{-1}$ and since $\|(\R^+)^{-1}\| =1/ \sigmamin(\R^+) = 1/ \sigmamin(\Utilde^+)$, thus, $\SE(\U^+, \Ustar)  = \| \Pperp \U^+\| $ can be bounded as
\begin{eqnarray}
\delta_{t+1}:= \SE(\U^+, \Ustar) & \le \dfrac{\|\Pperp \Utilde^+\|}{\sigmamin(\Utilde^+)}
\le \dfrac{\|\Pperp \Utilde^+\|}{\sigmamin(\U) - \eta \|\gradU\|}
= \dfrac{\|\Pperp \Utilde^+\|}{1 - \eta \|\gradU\|}
\label{SDeq}
\end{eqnarray}
This follows by Weyl's inequality and $\sigmamin(\U)=1$.
Consider the numerator.
Using \eqref{Utildeeq}, adding/subtracting $\eta \E[\gradU]$, and using \eqref{EgradU} which implies that  $ \Pperp \E[\gradU] = \Pperp  (\X - \Xstar) \B^\top =   \Pperp \X \B^\top =   \Pperp \U \B \B^\top$, we get 
\begin{align*}
 \Pperp \tilde\U^+  
 & = \Pperp \U - \eta   \Pperp \U \B \B^\top + \eta \Pperp ((\E[\gradU] - \gradU)) \\
 & = \Pperp \U (\I - \eta   \B \B^\top) +  \eta \Pperp (\E[\gradU] - \gradU)
\end{align*}
Thus,  using \eqref{SDeq}, 
\begin{align*}
\SE(\U^+, \Ustar)
& = \dfrac{\|\Pperp \U (\I - \eta   \B \B^\top) + \eta \Pperp (\E[\gradU] - \gradU)\|}{1 - \eta \| \E[\gradU] + \gradU - \E[\gradU]\|} \\
& \le \dfrac{\|\Pperp \U\| \cdot \|\I -  \eta   \B \B^\top\| + \eta \|\E[\gradU] - \gradU\|  }{1 - \eta \|\E[\gradU]\| - \eta \|\E[\gradU] - \gradU \|}
\label{SDeq}
\end{align*}
Notice that
\[
\lambda_{\min}(\I -  \eta   \B \B^\top) = 1 - \eta   \|\B\|^2.
\]
Thus, if $\eta < 0.9/\|\B\|^2$, then $1 - \eta   \|\B\|^2 > 0.1 > 0$, i.e., the matrix $(\I -  \eta   \B \B^\top)$ is positive semi-definite (p.s.d.). This means that
\[
\|\I -  \eta   \B \B^\top\| = \lambda_{\max}(\I -  \eta   \B \B^\top) = 1 - \eta   \sigma_r(\B)^2
\]
Thus, if $\eta \le 0.9/ \|\B\|^2$, then
\begin{align}
 \notag
\delta_{t+1} & := \SE(\U^+, \Ustar) \\  \notag
& \le \dfrac{\|\Pperp \Utilde^+\|}{1 - \eta \|\gradU\|} \\  \notag
& = \dfrac{\|\Pperp \U (\I - \eta   \B \B^\top) + \eta \Pperp (\E[\gradU] - \gradU)\|}{1 - \eta \| \E[\gradU] + \gradU - \E[\gradU]\|} \\
%
& \le \dfrac{\|\Pperp \U\| \cdot \|\I -  \eta   \B \B^\top\| + \eta \|\E[\gradU] - \gradU\| }{1 - \eta \|\E[\gradU]\| - \eta \|\E[\gradU] - \gradU \|} \\  \notag
& \le  \dfrac{\delta_t (1 - \eta   \sigma_r(\B)^2) + \eta \|\E[\gradU] - \gradU\| }{ 1 - \eta \|\E[\gradU]\| - \eta \|\E[\gradU] - \gradU \|} \\  \notag
& = \dfrac{\delta_t \left(1 - \eta   \left(\sigma_r(\B)^2 - \frac{\|\E[\gradU] - \gradU\|}{\delta_t}\right) \right)} {1 - \eta \|\E[\gradU]\| - \eta \|\E[\gradU] - \gradU \|} \\  \notag
& = \delta_t \left(1 - \eta   \left(\sigma_r(\B)^2 - \frac{\|\E[\gradU] - \gradU\|}{\delta_t}\right) \right) \left( 1 + 2\eta \|\E[\gradU]\| + 2 \eta \|\E[\gradU] - \gradU \| \right) \\  \notag
& \le \delta_t \left(1 - \eta   \left(\sigma_r(\B)^2 - \frac{\|\E[\gradU] - \gradU\|}{\delta_t} - 2\|\E[\gradU]\| - 2 \|\E[\gradU] - \gradU \|\right) \right) \\
& \le \delta_t \left(1 - \eta   \left(\sigma_r(\B)^2 - 3\frac{\|\E[\gradU] - \gradU\|}{\delta_t} - 2\|\E[\gradU]\| \right) \right)
\label{deltatbnds}
\end{align}
using $(1-x)^{-1} \le (1 + 2x)$ for $x<0.5$; $(1-z)(1+2x) = 1- z + 2x - 2xz < 1 - z + 2x = 1 - (z - 2x)$; and $\delta_t < 0.5$ (this allows us to replace $ \frac{\|\E[\gradU] - \gradU\|}{\delta_t} + 2 \|\E[\gradU] - \gradU \|$ by  3 times the first term).

The next step is to upper bound the expected gradient norm $\|\E[\gradU]\|$ and the gradient deviation norm $\|\E[\gradU] - \gradU \|$ and to lower bound $\sigma_r(\B)$. We need tight enough bounds in order to be able to show that for $\eta$ small enough,
\[
\delta_{t+1} \le (1 - c_1/\kappa^2) \delta_t
\]
To bound the terms, we use matrix concentration bounds from Sec. \ref{prelims} and the incoherence assumptions on $\Bstar$ (LRCS) or on $\Bstar$ and $\Ustar$ (LRMC).
The overall approach is as follows. We provide more details in Sec. \ref{proof_details}. At each iteration,
\bi

\item The first step is to analyze the minimization step to bound $\|\B - \G\|$: for LRCS, we can bound $\max_k \|\b_k - \g_k\|$ and use it to bound $\|\B - \G\|$; for LRMC, we can only bound the matrix error.
\bi
\item This is used to bound $\|\X - \Xstar\|$ using triangle inequality
\item  The above is used to upper bound $\sigmamax(\B)$ and lower bound $\sigma_r(\B)$ using tricks from Sec. \ref{prelims}.
\ei

\item We use the minimization step bounds to bound $\|\E[\gradU]\| \le \|\X - \Xstar\| \sigmamax(\B)$.

\item We bound $\|\E[\gradU] - \gradU \|$ using matrix concentration inequalities from Sec. \ref{prelims} and  the minimization step bounds.

\item Most above results also use incoherence of $\B$ (for LRCS) and of $\B$ and of $\U$ (for LRMC), which needs to be proved.
\bi
\item Incoherence of $\B$ is easy to show. 
\item Incoherence of $\U$ for LRMC requires an inductive argument and a concentration bound on $\max_j \|\e_j^\top (\gradU - \E[\gradU])\|$. We need a bound on it that contains a factor of $\sqrt{r/n}$; but it need not contain a factor of $\delta_t$.
\ei

\ei
We provide details of the above steps in Sec. \ref{proof_details}.

\section{Proof approach: Noisy Gradient approach to deal with Nonlinear or Noisy or Attack-prone cases}\label{noisygrad_proof}
Above we explained the proof strategy for the simple noise-free case. Here, we explain how to modify it to deal with various modifications of the basic LRCS or LRMC problems. One simple example where this occurs is the LR phase retrieval (LRPR) problem which is the phaseless measurements' generalization of LRCS. A second example is noise-corrupted LRCS or LRMC. A third setting is dealing with attacks, such as the Byzantine attack, by malicious nodes. The algorithm itself may remain the same (noisy case) or may change (LRPR or attack setting).

In all the above cases, the gradient expression will be different in the GD step to update $\U$. To bound $\SE(\U^+, \Ustar)$, we need to define and bound an extra term that we refer to as $\Err$ for ``Error term''. Let
\[
\Err:= \gradU_{cln} - \gradU
\]
where $\gradU_{cln}$ is the gradient from the noise-free case section that satisfies
\[
\E[\gradU_{cln}] =   (\X - \Xstar) \B^\top
\]
We proceed exactly as in the noise-free case, with the following modification: we add/subtract  $\E[\gradU_{cln}] =   (\X - \Xstar) \B^\top$ and we add/subtract $\gradU_{cln}$. This gives the following:  if $\eta \le 0.9/ \|\B\|^2$, then
\begin{align}
\delta_{t+1} & := \SE(\U^+, \Ustar) \\  \notag
& \le \dfrac{\|\Pperp \Utilde^+\|}{1 - \eta \|\gradU\|} \\  \notag
& = \dfrac{\|\Pperp \U (\I - \eta   \B \B^\top) + \eta \Pperp (\E[\gradU_{cln}] - \gradU_{cln})  + \eta \Pperp \Err \|}{1 - \eta \| \gradU - \gradU_{cln} + \gradU_{cln}  - \E[\gradU_{cln}] - \E[\gradU_{cln}]\|} \\  \notag
& \le \dfrac{\|\Pperp \U\| \cdot \|\I -  \eta   \B \B^\top\| + \eta \|\E[\gradU_{cln}] - \gradU_{cln}\| + \eta \|\Err\| }{1 - \eta \|\E[\gradU_{cln}]\| - \eta \|\E[\gradU_{cln}] - \gradU_{cln} \| -  \eta \|\Err\| } \\  \notag
& \le \dfrac{\delta_t (1 -  \eta  \sigma_r(\B)^2)  + \eta \|\E[\gradU_{cln}] - \gradU_{cln}\| + \eta \|\Err\| }{1 - \eta \|\E[\gradU_{cln}]\| - \eta \|\E[\gradU_{cln}] - \gradU_{cln} \| -  \eta \|\Err\| }
\label{deltatbnds_noisy_1}
\end{align}
Using  $(1-x)^{-1} \le (1 + 2x)$ for $x<0.5$; $(1-z)(1+2x) <  1 - (z - 2x)$; $\delta_t < 0.5$; and using $ \|\E[\gradU_{cln}]\|  < 1$, $\|\E[\gradU_{cln}] - \gradU_{cln} \|<1$, and $\|\Err\| < 1$,
\begin{align}
\delta_{t+1} & := \SE(\U^+, \Ustar) \\  \notag
& \le \dfrac{\delta_t (1 -  \eta  \sigma_r(\B)^2)  + \eta \|\E[\gradU_{cln}] - \gradU_{cln}\| + \eta \|\Err\| }{1 - \eta \|\E[\gradU_{cln}]\| - \eta \|\E[\gradU_{cln}] - \gradU_{cln} \| -  \eta \|\Err\| } \\  \notag
& \le \dfrac{\delta_t (1 -  \eta  (\sigma_r(\B)^2 - \eta \frac{\|\E[\gradU_{cln}] - \gradU_{cln}\|}{\delta_t})) + \eta \|\Err\| }{1 - \eta \|\E[\gradU_{cln}]\| - \eta \|\E[\gradU_{cln}] - \gradU_{cln} \| -  \eta \|\Err\| } \\  \notag
%
& \le  \delta_t \left( 1 -  \eta  (\sigma_r(\B)^2 - \frac{\|\E[\gradU_{cln}] - \gradU_{cln}\|}{\delta_t}) \right) (1 + 2 \eta \|\E[\gradU_{cln}]\| + 2 \eta \|\E[\gradU_{cln}] - \gradU_{cln} \| + 2  \eta \|\Err\| ) \\  \notag
& + \eta \|\Err\| (1 + 2 \eta \|\E[\gradU_{cln}]\| + 2 \eta \|\E[\gradU_{cln}] - \gradU_{cln} \| + 2  \eta \|\Err\| )    \\  \notag
& \le  \delta_t \left(1 -  \eta  (\sigma_r(\B)^2 -  \frac{\|\E[\gradU_{cln}] - \gradU_{cln}\|}{\delta_t} -  2 \|\E[\gradU_{cln}]\| -  2  \|\Err\| \right) \\  \notag
& + \eta \|\Err\| (1 + 2 \eta \|\E[\gradU_{cln}]\| + 2 \eta \|\E[\gradU_{cln}] - \gradU_{cln} \| + 2  \eta \|\Err\| )    \\  \notag
\label{deltatbnds_noisy}
\end{align}

The next steps are similar to those in the noise-free case. What result we can finally prove depends on how small $\Err$ is.
\ben
\item If $\|\Err\|$ is of the same order as  (or smaller than) the gradient deviation term, then the noise-free case analysis extends without much change. This requires $\|\Err\|$ to decay as $c \delta_t$ for a $c<1$, which can be shown for the LR phase retrieval problem.

\item If $\|\Err\|$ is not as small, but is of order $\delta_0$ or smaller, then the final bound will contain two terms: the first decays with $t$, and the second is a constant term that is of the order of $\max_t \|\Err_t\|$. 
\een
We provide details for both steps in Sec. \ref{proof_details}.

\chapter{AltGDmin for LR Problems: Proof Details} \label{proof_details}

\section{Key Results Used}\label{keyres}

%
%

By combining Theorem \ref{epsnet_Mwz} given in Chapter \ref{prelims} with the scalar sub-exponential Bernstein or sub-Gaussian Hoeffding inequalities, we obtain the following two results which have been widely used in the LR recovery and phase retrieval literature. These study sums of rank-one matrices which are outer products of specific types of random vectors (r.vec). The last result below is the matrix Bernstein inequality.

\begin{corollary}[Sum of rank-one matrices that are outer products of two sub-Gaussian r.vecs.]\label{subexpo_epsnet}
Consider a sum of $m$ zero-mean independent rank-one $n \times r$ random matrices $\x_i \z_i^\top$ with $\x_i, \z_i$ being sub-Gaussian random vectors with sub-Gaussian norms $K_{x,i}, K_{z,i}$ respectively.
For a $t \ge 0$,
\[
\|\sum_{i=1}^m \x_i \z_i^\top\| \le 1.4 t
\]
with probability at least
\[
1- \exp\left( (\log 17) (n+r) - c \min \left(  \frac{t^2}{\sum_i (K_{x,i}, K_{z,i})^2 } , \frac{t}{\max_i (K_{x,i}, K_{z,i}) }  \right)  \right)
\]
\end{corollary}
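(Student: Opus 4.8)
The plan is to pass from the operator norm of $\M := \sum_{i=1}^m \x_i \z_i^\top$ to a supremum of scalar bilinear forms over $\eps$-nets via Theorem~\ref{epsnet_Mwz}, bound each such form with the scalar sub-exponential Bernstein inequality, and finish with a union bound over the nets. Concretely, applying Theorem~\ref{epsnet_Mwz} to the $n\times r$ matrix $\M$ produces finite nets $\mathcal{N}_1 \subset S^{n-1}$ and $\mathcal{N}_2 \subset S^{r-1}$ with $|\mathcal{N}_1| \le 17^{n}$ and $|\mathcal{N}_2| \le 17^{r}$, together with the deterministic bound
\[
\norm{\M} \ \le\ 1.4 \, \max_{\w \in \mathcal{N}_1,\ \v \in \mathcal{N}_2} \ \w^\top \M \v .
\]
So it suffices to control $\w^\top \M \v = \sum_{i=1}^m (\w^\top \x_i)(\z_i^\top \v)$ for each \emph{fixed} pair of unit vectors $(\w,\v)$, and then union bound.

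\emph{Scalar concentration for a fixed pair.} Fix $\w \in S^{n-1}$, $\v \in S^{r-1}$ and set $\xi_i := (\w^\top \x_i)(\z_i^\top \v)$. By definition of the sub-Gaussian norm of a random vector, $\w^\top \x_i$ is sub-Gaussian with $\norm{\w^\top\x_i}_{\psi_2} \le K_{x,i}$ and $\z_i^\top \v$ is sub-Gaussian with $\norm{\z_i^\top\v}_{\psi_2}\le K_{z,i}$; hence their product $\xi_i$ is sub-exponential with $\norm{\xi_i}_{\psi_1} \le K_{x,i} K_{z,i}$ (Orlicz Cauchy--Schwarz, $\norm{XY}_{\psi_1}\le\norm{X}_{\psi_2}\norm{Y}_{\psi_2}$). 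Moreover $\E[\xi_i] = \w^\top \E[\x_i\z_i^\top] \v = 0$ since each $\x_i\z_i^\top$ is zero-mean, so no separate centering step is required, and the $\xi_i$ are independent across $i$. The scalar sub-exponential Bernstein inequality then gives, for any $t \ge 0$,
\[
\Pr\!\left(\Big|\textstyle\sum_{i=1}^m \xi_i\Big| > t\right) \ \le\ 2\exp\!\left(-c\min\!\left(\frac{t^2}{\sum_{i=1}^m (K_{x,i}K_{z,i})^2},\ \frac{t}{\max_i K_{x,i}K_{z,i}}\right)\right).
\]

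\emph{Union bound and conclusion.} Taking a union bound of the last display over all $|\mathcal{N}_1|\cdot|\mathcal{N}_2| \le 17^{n+r}$ pairs $(\w,\v)$ and combining with the net inequality above, the event $\norm{\M} > 1.4\,t$ occurs with probability at most $2\cdot 17^{n+r}\exp(-c\min(\cdots))$. Absorbing the factor $2$ into the net constant (equivalently, noting that $\log 2$ is dominated by $(\log 17)(n+r)$), the failure probability is at most $\exp\big((\log 17)(n+r) - c\min(\cdots)\big)$, which is exactly the claimed bound.

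\emph{Main obstacle.} With Theorem~\ref{epsnet_Mwz} and the scalar Bernstein inequality available, the argument is essentially bookkeeping, and the one genuinely delicate point is the sub-exponential norm bound on $\xi_i = (\w^\top\x_i)(\z_i^\top\v)$: the two factors generically depend on one another (both arise from the single rank-one matrix $\x_i\z_i^\top$), so one cannot multiply their moment generating functions and must instead argue through the Orlicz-norm product bound $\norm{XY}_{\psi_1}\le\norm{X}_{\psi_2}\norm{Y}_{\psi_2}$. A secondary concern is propagating the explicit constants ($1.4$ and $\log 17$) through Theorem~\ref{epsnet_Mwz} without loosening them, and keeping the $t^2$ (sub-Gaussian) versus $t$ (sub-exponential tail) regimes of Bernstein straight.
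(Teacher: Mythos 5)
Your proposal is correct and follows essentially the same route as the paper: the paper obtains this corollary precisely by combining Theorem~\ref{epsnet_Mwz} (the $\eps$-net reduction with $\eps_0=1/8$, giving the $17^{n+r}$ net size and the $1.4$ factor) with the scalar sub-exponential Bernstein inequality, using the stated Fact that a product of sub-Gaussians with norms $K_{x,i},K_{z,i}$ is sub-exponential with norm $K_{x,i}K_{z,i}$ — exactly your Orlicz product bound, which indeed does not require independence of the two factors. Nothing further is needed.
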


By combining Theorem \ref{epsnet_Mwz} with the scalar sub-Gaussian Hoeffding inequality, we conclude the following.

\begin{corollary}[Sum of rank-one matrices that are outer products of a sub-Gaussian r. vec. and a bounded r.vec.]\label{subG_epsnet}
Consider a sum of $m$ zero-mean independent rank-one $n \times r$ random matrices $\x_i \z_i^\top$ with $\x_i$ being sub-Gaussian random vector with sub-Gaussian norms $K_{x,i}$ and $\z_i$ being a bounded random vector with $\|\z_i\| \le L_i$.
Then, clearly, for any $\w, \w'$, $\w^\top \x_i \z_i^\top \w'$ is a sub-Gaussian r.v. with sub-Gassian norm $K_{x,i} L_i$.
Thus, for a $t \ge 0$,
\[
\|\sum_{i=1}^m \x_i \z_i^\top\| \le 1.4 t
\]
with probability at least
\[
1- \exp\left( (\log 17) (n+r) - c  \frac{t^2}{\sum_i (K_{x,i} L_i)^2 } \right)
\]
\end{corollary}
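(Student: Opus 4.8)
The plan is to follow verbatim the template that produced Corollary \ref{subexpo_epsnet}, the only change being that the scalar concentration input is the sub-Gaussian Hoeffding inequality rather than the sub-exponential Bernstein inequality. Concretely: use the variational identity $\norm{\M} = \max_{\w \in S^{n-1},\, \w' \in S^{r-1}} \w^\top \M \w'$ to reduce the operator norm of $\sum_i \x_i \z_i^\top$ to a scalar quantity; bound that scalar quantity for each fixed pair $(\w,\w')$ by Hoeffding; and then invoke Theorem \ref{epsnet_Mwz} to promote the pointwise tail bound to a uniform-over-the-spheres bound, which is exactly what supplies the discretization constant $1.4$ and the net-cardinality factor $\exp((\log 17)(n+r))$.

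First I would fix unit vectors $\w \in S^{n-1}$, $\w' \in S^{r-1}$ and write $\w^\top\big(\sum_{i=1}^m \x_i \z_i^\top\big)\w' = \sum_{i=1}^m \xi_i$ with $\xi_i := (\w^\top \x_i)(\z_i^\top \w')$. The $\xi_i$ are independent across $i$ (the summand matrices are), and they are mean zero: since $\E[\x_i \z_i^\top] = \zero$, we get $\E[\xi_i] = \w^\top \E[\x_i \z_i^\top]\w' = 0$ — this is the only place the zero-mean hypothesis on the summands is used, and it matters because neither $\x_i$ nor $\z_i$ is assumed centered individually. Each $\xi_i$ is moreover sub-Gaussian: by definition of a sub-Gaussian random vector, $\norm{\w^\top\x_i}_{\psi_2} \le K_{x,i}$, while $|\z_i^\top\w'| \le \norm{\z_i} \le L_i$ is an almost-sure bound, so $|\xi_i| \le L_i\,|\w^\top\x_i|$ and hence $\norm{\xi_i}_{\psi_2} \le K_{x,i} L_i$ directly from the Orlicz-norm definition (this is the ``clearly'' in the statement). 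Applying the scalar sub-Gaussian Hoeffding inequality to the centered, independent, sub-Gaussian summands $\xi_i$ gives, for any $t \ge 0$,
\[
\Pr\!\left( \Big| \sum_{i=1}^m \xi_i \Big| > t \right) \le 2\exp\!\left( -\frac{c\,t^2}{\sum_{i=1}^m (K_{x,i} L_i)^2} \right).
\]

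Finally I would feed this pointwise tail into Theorem \ref{epsnet_Mwz}: taking $\tfrac18$-nets of $S^{n-1}$ and $S^{r-1}$ of sizes at most $17^n$ and $17^r$, a union bound over the net plus the standard discretization step replaces $t$ by $1.4\,t$ and multiplies the failure probability by $17^{n+r} = \exp((\log 17)(n+r))$; absorbing the leading $2$ into the constant $c$ yields exactly the asserted probability $1 - \exp\!\big((\log 17)(n+r) - c\,t^2/\sum_i (K_{x,i}L_i)^2\big)$.

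I do not anticipate a genuine obstacle — the substantive content lives entirely in Theorem \ref{epsnet_Mwz}, which is assumed. The only matters requiring (routine) attention are: the centering step $\E[\xi_i]=0$, which must be argued via the zero-mean summand hypothesis and not by centering the factors; the product bound $\norm{\xi_i}_{\psi_2}\le K_{x,i}L_i$, immediate for a sub-Gaussian times an a.s.-bounded variable; and the bookkeeping of constants (the same $\eps=\tfrac18$ choice giving net size $(1+2/\eps)^{\dim}=17^{\dim}$ and discretization factor $(1-2\eps)^{-1}=\tfrac43<1.4$ as in Corollary \ref{subexpo_epsnet}), noting that here the pure sub-Gaussian tail $e^{-ct^2/(\cdot)}$ makes the $\min(\cdot,\cdot)$ of Corollary \ref{subexpo_epsnet} collapse to its single sub-Gaussian term.
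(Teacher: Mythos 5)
Your proposal matches the paper's argument exactly: fix unit vectors, observe that $(\w^\top\x_i)(\z_i^\top\w')$ is mean-zero sub-Gaussian with norm at most $K_{x,i}L_i$ (bounded factor times sub-Gaussian factor), apply the scalar sub-Gaussian Hoeffding inequality, and then invoke Theorem \ref{epsnet_Mwz} to upgrade to the operator norm with the $1.4$ discretization factor and the $\exp((\log 17)(n+r))$ net-cardinality penalty. Your added care about where the zero-mean hypothesis on the summand matrices (rather than the individual factors) enters is a correct and welcome refinement of the paper's one-line derivation.
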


For bounded matrices, the following matrix Bernstein result gives a much tighter bound than what would be obtained by combining scalar bounded Bernstein and Theorem \ref{epsnet_Mwz}. See Sec. \ref{prelims} for details.

\begin{theorem}[Matrix Bernstein]\label{mat_bern}
    Let $\X_1, \X_2, \dots \X_m$ be independent, zero-mean, $n \times r$ matrices with $\|\X_i\| \le L$ for all $i=1,2,...m$. Define the ``variance parameter'' of the sum
\[
 v := \max\left(  \|\sum_i \E[\X_i \X_i^\top]\|,  \|\sum_i \E[ \X_i^\top \X_i] \| \right).
\]
Then,
\[
\| \sum_{i=1}^m \X_i \| \le t
\]
with probability at least
\[
1 - 2 \exp\left(\log \max(n,r)  -  c \min\left(\frac{t^2}{v}, \frac{t}{L} \right)  \right)
\]
\end{theorem}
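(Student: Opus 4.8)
The plan is to follow the classical matrix Laplace transform method (the Ahlswede--Winter/Tropp approach, as presented in \cite{versh_book}). First I would reduce the rectangular case to the Hermitian one via the Hermitian dilation $\mathcal{H}(\M) := \begin{bmatrix} \zero & \M \\ \M^\top & \zero\end{bmatrix}$, an $(n+r)\times(n+r)$ symmetric matrix. Two elementary facts make the reduction lossless: $\mathcal{H}$ is linear, so $\mathcal{H}(\sum_i \X_i) = \sum_i \mathcal{H}(\X_i)$; and $\lambda_{\max}(\mathcal{H}(\M)) = \norm{\M}$ with $\mathcal{H}(\M)^2 = \begin{bmatrix} \M\M^\top & \zero \\ \zero & \M^\top\M \end{bmatrix}$. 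Hence the dilated summands $\bm{Y}_i := \mathcal{H}(\X_i)$ are independent, zero-mean, symmetric, satisfy $\norm{\bm{Y}_i} \le L$, and obey $\norm{\sum_i \E[\bm{Y}_i^2]} = \max\!\big(\norm{\sum_i\E[\X_i\X_i^\top]},\, \norm{\sum_i\E[\X_i^\top\X_i]}\big) = v$. So it suffices to bound $\Pr\big(\lambda_{\max}(\sum_i \bm{Y}_i) \ge t\big)$ for bounded, independent, zero-mean symmetric matrices with variance proxy $v$ in dimension $d := n+r$.

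Next I would apply the matrix Markov/Laplace inequality: for every $\theta>0$,
\[
\Pr\Big(\lambda_{\max}\Big(\sum_i \bm{Y}_i\Big) \ge t\Big) \le e^{-\theta t}\, \E\, \trace \exp\Big(\theta \sum_i \bm{Y}_i\Big),
\]
and then control the trace-exponential. Since the $\bm{Y}_i$ do not commute, one cannot factor the exponential; instead I would invoke Lieb's concavity theorem, which gives subadditivity of the matrix cumulant generating functions,
\[
\E\, \trace \exp\Big(\sum_i \theta \bm{Y}_i\Big) \le \trace \exp\Big(\sum_i \log \E\, e^{\theta \bm{Y}_i}\Big) \le d \cdot \exp\Big(\lambda_{\max}\Big(\sum_i \log \E\, e^{\theta \bm{Y}_i}\Big)\Big).
\]
I would then bound the per-summand matrix MGF in the semidefinite order: because $\E[\bm{Y}_i] = \zero$ and $\norm{\bm{Y}_i}\le L$, expanding $e^{\theta \bm{Y}_i}$ and using $\bm{Y}_i^p \preceq L^{p-2}\bm{Y}_i^2$ for $p\ge 2$ together with $\bm{I}+\bm{A}\preceq e^{\bm{A}}$ yields $\E\, e^{\theta \bm{Y}_i} \preceq \exp\!\big(g(\theta)\,\E[\bm{Y}_i^2]\big)$ with $g(\theta) = \tfrac{\theta^2/2}{1-\theta L/3}$, valid for $0<\theta<3/L$; operator monotonicity of $\log$ then gives $\log\E\, e^{\theta\bm{Y}_i}\preceq g(\theta)\E[\bm{Y}_i^2]$. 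Summing, using monotonicity of $\lambda_{\max}$ under $\preceq$ and $\lambda_{\max}(\sum_i\E[\bm{Y}_i^2]) = v$, I obtain $\Pr\big(\lambda_{\max}(\sum_i\bm{Y}_i)\ge t\big) \le d\,\exp\big(-\theta t + g(\theta)v\big)$.

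Finally I would optimize over $\theta\in(0,3/L)$. The Bernstein choice $\theta = t/(v+Lt/3)$ produces an exponent of the form $-t^2/(2v+2Lt/3)$, and splitting into the regimes $Lt\le v$ and $Lt>v$ collapses this to $-c\min(t^2/v,\ t/L)$ for an absolute constant $c$. Since $d = n+r \le 2\max(n,r)$, we have $d\,e^{(\cdot)}\le 2\exp\big(\log\max(n,r)+(\cdot)\big)$, which is exactly the claimed bound; note that the dilation already delivers $\lambda_{\max}(\mathcal{H}(\cdot)) = \norm{\cdot}$, so a single upper-tail estimate suffices and the factor $2$ comes solely from $n+r\le 2\max(n,r)$.

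The main obstacle is the non-commutative step: establishing subadditivity of the matrix log-MGFs, which rests on Lieb's concavity theorem (concavity of $\bm{A}\mapsto \trace\exp(\bm{H}+\log\bm{A})$ on positive definite $\bm{A}$). Golden--Thompson handles only two matrices and is insufficient for a general sum, so Lieb's theorem is the genuinely deep input and the one place where naive scalar reasoning fails. By contrast, the per-summand bound $\E\, e^{\theta\bm{Y}_i}\preceq \exp(g(\theta)\E[\bm{Y}_i^2])$ is a routine operator-order Taylor estimate, and the final $\theta$-optimization is the standard Bernstein computation.
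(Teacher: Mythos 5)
Your proof is correct: the Hermitian dilation reduction, the Laplace-transform bound combined with Lieb's concavity theorem, the Bernstein-type matrix MGF estimate, and the final optimization over $\theta$ together constitute the canonical derivation of the matrix Bernstein inequality. The paper does not prove this theorem itself --- it imports it as a standard result from the random matrix literature (\cite{versh_book}) --- and your argument is exactly the standard proof underlying that citation, including the correct accounting for the dimensional factor $n+r\le 2\max(n,r)$ and the observation that the dilation makes a single upper-tail bound suffice.
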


\section{Analyzing the Initialization step}

The following is the overall approach to analyze spectral initialization.

\bfpara{LRCS initialization}
Recall the LRCS initialization from Sec. \ref{altgdmin_lrcs}.
%
It is not hard to show that
$$\E[\X_0] = \Xstar \D(\alpha)$$
where $\D(\alpha)$ is a diagonal $q \times q $ matrix with all non-zero entries and that $\sigmamin(\D) = \min_k (\D)_{k,k} \ge 0.9$ with high probability (w.h.p.) \cite{lrpr_gdmin_2,lrpr_gdmin}.
Thus, $\E[\X_0]$ is a rank $r$ matrix with column-span equal to that of $\Ustar$ (or of $\Xstar$). Using Wedin $\sin \theta$ theorem,
\[
\SE_F(\U_0,\Ustar) \le \dfrac{\sqrt{r}\|\X_0 - \E[\X_0]\|}{\sigma_r(\E[\X_0]) - 0 - \|\X_0 - \E[\X_0]\|}
\]
Using the bounds from Sec. \ref{minsingval}, it is not hard to see that
\[
\sigma_r ( \E[\X_0] )
= \sigma_r(\Xstar \D(\alpha))
\ge \sigma_r(\Xstar) \sigmamin( \D(\alpha)^\top ) = \sigmin \sigmamin(\D(\alpha)) \ge 0.9 \sigmin
\]
To upper bound $\|\X_0 - \E[\X_0]\|$ observe that $\X_0$ can be rewritten as
\[
\X_0 = \sum_{k=1}^q \sum_{i=1}^m \a_\ik (\a_\ik^\top \xstar_k) \indic_{ (\a_\ik^\top \xstar_k)^2 \le \alpha }
\]
where $\a_\ik$ is a an $\mathcal{N}(\bm{0},\I)$ (standard Gaussian) vector).
We first use the sub-exponential Bernstein inequality to lower and upper bound $\alpha$ by a constant times $\|\Xstar\|_F^2/q$ w.h.p.
Next, we use Corollary \ref{subG_epsnet} (sub-Gaussian Hoeffding inequality + epsilon-netting)  to bound $\|\X_0 - \E[\X_0]\|$ under a lower bound on the sample complexity $mq$. This, along with using the Wedin $\sin \theta$ theorem (Theorem \ref{Wedin_sintheta}) \cite{spectral_init_review}, and lower bounding the smallest entry of $\D(\alpha)$, helps bound the subspace distance between $\U_0$ and $\Ustar$.

\bfpara{LRMC initialization}
For LRMC, $\X_0 = \Y$ and it is easy to see that $\E[\Y] = \Xstar$.
Thus, we can again use Wedin and a different matrix concentration bound (matrix Bernstein) to show that $\U_{00}$ is a good subspace estimate of $\Ustar$. Analyzing the second step (projection onto row incoherent matrices) uses the fact that the set of row incoherent matrices is convex, this argument is borrowed from \cite{rpca_gd}). Finally we analyze the orthonormalization


\section{Clean Noise-free case}
Recall equation \eqref{deltatbnds} from the previous chapter. We make the following assumption temporarily just to show the flow of our proof. Later, we explain how to show that this assumption holds w.h.p. under just a sample complexity lower bound and incoherence Assumption \ref{right_incoh} or \ref{incoh}. The final guarantee only needs to make assumptions on the true $\Xstar$ or on $\Y$ (data).
%
\begin{tempassu}
Suppose that
\ben
\item $\sigma_r(\B) \ge 0.9 \sigma_r(\Bstar) = 0.9 \sigmin $ 
\item $\|\E[\gradU] - \gradU\| \le c_1 \delta_t \sigmin^2 $ for a $c_1 < 0.2$,
\item $\|\E[\gradU]\| \le C_2 \kappa^2 \delta_t \sigmin^2 $,
\item with $\delta_t$ such that  $c_1 +  (2C_2 \kappa^2 +  2c_1) \delta_t  <0.1$ for all $t \ge 0$, and
\item $\eta = c_\eta / \sigmax^2$ with $c_\eta \le 0.9 / (1.1   \sigmax^2)$, and
\item equation \eqref{EgradU} holds.
\een
We note here that the big $C_?$ numbered constant $C_2$ can depend on $r$:  if $\SE_2$ is used as the subspace distance measure then, $C_2 = C \sqrt{r}$ while if $\SE_F$ is used then $C_2 = C$.
\label{assu1}
\end{tempassu}
Using \eqref{deltatbnds} and the bounds from the above Temporary Assumption \ref{assu1},
\begin{align*}
\delta_{t+1}:= \SE(\U^+, \Ustar)
& \le \delta_t \left(1 - \eta   \left(0.9\sigmin^2 - c_1 \sigmin^2 - 2C_2 \delta_t \sigmin^2 - 2c_1 \delta_t \sigmin^2 \right) \right) \\
& \le \delta_t \left(1 - \eta  \sigmin^2 \left(0.9 - c_1 - (2 C_2 \kappa^2 + 2c_1)\delta_t  \right) \right) \\
& \le \delta_t (1 - \eta   (0.9 - c_1 - 0.1) \sigmin^2 ) = \delta_t \left(1 - (0.8-c_1)\frac{c_\eta}{\kappa^2} \right)
\end{align*}

\bfpara{Proving the bounds of Temporary Assumption \ref{assu1}}
We use the matrix concentration bounds from Sec. \ref{prelims} to bound the following with high probability
\ben
\item $\max_k \|\b_k - \g_k\|$ and use it to bound $\|\B - \G\|_F$ (for LRCS) or directly bound $\|\B - \G\|_F$  (for LRMC)
\item $\|\E[\gradU] - \gradU\|$ (in case of LRCS) or $\|\E[\gradU] - \gradU\|_F$  (for LRMC)
\item $(\S_k \U)^\top (\S_k \U)$ and $(\S_k \Ustar)^\top (\S_k \U)$ (for LRMC)
\item and $\|\e_j^\top (\E[\gradU] - \gradU)\|$  (for LRMC).
\een
The LRCS proofs use Corollary \ref{subexpo_epsnet} (sub-exponential Bernstein inequality followed by the epsilon-net argument).  The LRMC proofs use the Theorem \ref{mat_bern} (matrix Bernstein inequality). 

The $\B-\G$ bound  assumes incoherence of $\U$ (in case of LRMC).
The  gradient deviation  $\|\E[\gradU] - \gradU\|$ bound assumes the $\B-\G$ bound and incoherence of $\B$ and of $\U$ (in case of LRMC).
The bound on $\|\e_j^\top (\E[\gradU] - \gradU)\|$ (used to show incoherence of the updated $\U^+$ in case of LRMC) uses incoherence of $\B$.

\bi
\item  \bfpara{Bounding $\|\E[\gradU] \| = C \|(\X - \Xstar) \B^\top\|$ and $\sigma_r(\B)$}
The bound on $\|\B-\G\|$ is used to upper bound  $\|\X- \Xstar\|$ and $\|\B\|=\sigmamax(\B)$ and to lower bound  $\sigma_r(\B)$. The first two bounds are used to bound  $\|\E[\gradU]\| \le \|\X- \Xstar\| \|\B\|$.
The $\sigmamax$ bound is straightforward. The $\sigmamin$ bound follows using the bounds given next; these follow using the preliminaries from Sec. \ref{minsingval}.
\[
\sigma_r(\B) \ge \sigma_r(\G) - \|\B - \G\|,
\]
\[
\sigma_r(\G)= \sigmamin(\G^\top) = \sigmamin(\Bstar{}^\top \Ustar^\top \U) \ge \sigmin \sigmamin(\Ustar^\top \U)
\]
\begin{align*}
\sigmamin^2(\Ustar^\top \U)
& = \lambda_{\min}(\U^\top \Ustar \Ustar^\top \U) \\
& = \lambda_{\min}(\U^\top (\I - \Pperp) \U) \\
& = \lambda_{\min}(\I - \U^\top \Pperp \U) \\
& = \lambda_{\min}(\I - \U^\top \Pperp^2 \U) = 1 - \|\Pperp U\|^2 = 1 - \SE(\Ustar,\U)^2
\end{align*}

%

\item \bfpara{Showing Incoherence of updated $\B$}
LRCS only needs incoherence of $\B$. LRMC needs that of both.
\bi
\item Incoherence of $\b_k$ (columns of $\B$) for LRCS: Since we can bound $\|\b_k - \g_k\|$ for each $k$, the bound on $\|\b_k\|$ follows directly from this bound and the fact that $\|\g_k\| \le \|\bstar_k\|$.

\item  Incoherence of $\b_k$ (columns of $\B$) for LRMC: We use the exact expression for updating $\b_k$ and concentration bounds on $(\S_k \U)^\top (\S_k \U)$ and $(\S_k \Ustar)^\top (\S_k \U)$ to get a bound that is a constant times $\|\bstar_k\|$. 
\ei

\item \bfpara{Showing Incoherence of updated $\U$: only for LRMC}
 This is shown in two steps:
\bi
\item First we bound the deviation of the $j$-th row of $\gradU$ from its expected value, $\max_j \|\e_j^\top (\E[\gradU] - \gradU)\| $. We need a bound on it that contains a factor of $\sqrt{r/n}$; but it need not contain a factor of $\delta_t$. To get such a bound we use $|x-z|\le 2 \max(|x|,|z|)$ and obtain a bound of the form $c_4  \max(\|\e_j^\top \Ustar\|,\|\e_j^\top \U\|) \sigmax^2$.

\item Next, we use \eqref{Utildeeq} and proceed in a fashion similar to \eqref{deltatbnds} to bound $\|\e_j^\top \Utilde^+\|$. We show that  $\|\e_j^\top \Utilde^+\| \le (1 - (0.9-0.1) \eta ) \|\e_j^\top \U\| + \|\e_j^\top \Ustar\| +  c_4 \max(\|\e_j^\top \Ustar\|,\|\e_j^\top \U\|)$  followed by simplifying this expression using the denominator expression. We can simplify this bound by using $\max(x,z) \le x+z$.
    We get  $\|\e_j^\top \U^+\| \le  (1 - c/\kappa^2) \|\e_j^\top \U\| +  2 \|\e_j^\top \Ustar\|$.
\ei
Finally, recursively applying the above expression we get a bound on $\|\e_j^\top \U_t\|$ in terms of $\|\e_j^\top \U_0\|$ and $\|\e_j^\top \Ustar\|$ that does not grow with $t$. We show that
\[
\|\e_j^\top \U_t\| \le  (1 - c/\kappa^2)^t \|\e_j^\top \U_0\| + \sum_{\tau=0}^{t-1} (1 - c/\kappa^2)^\tau 2 \|\e_j^\top \Ustar\|  \le \|\e_j^\top \U_0\| + C \kappa^2 \|\e_j^\top \Ustar\|
\]
\

\item \bfpara{Ensuring $(2C_2 \kappa^2 + 2c_1) \delta_t  < 0.1$} 
%
Note from above that we are showing exponential decay for $\delta_t$ with $t$. A corollary of this is that $\delta_t < \delta_0$. Thus, our bound on $\delta_t$ holds if $\delta_0$ satisfies the same bound, i.e., if the initialization is good enough i.e. if $\delta_0 < 0.1/ (2C_2 \kappa^2 + 2c_1)$. The proof approach given above shows exponential error decay and hence $\delta_{t+1} < \delta_{t} < \delta_0$ for each $t$.
(We note here the above numbered constants $c_1,C_2$ etc can depend on $n,q,r,\kappa^2$. For example, in case of LRMC, $C_2= \sqrt{r}$.)

\ei



\section{Nonlinear or Noisy or attack-prone or outlier corrupted settings}

Using equation \eqref{deltatbnds_noisy_1} from the previous chapter and Temporary Assumption \ref{assu1},
\begin{align}
\notag
\delta_{t+1} & := \SE(\U^+, \Ustar) \\  \notag
& \le  \dfrac{\delta_t (1 - \eta   0.9 \sigmin^2)  + \eta c_1 \delta_t \sigmin^2 + \eta \|\Err\|}{1 - \eta C_2 \kappa^2 \delta_t \sigmin^2  - \eta c_1 \delta_t \sigmin^2  -  \eta \|\Err\| } \\ 
& \le  \dfrac{\delta_t  \left( 1 - \frac{c_\eta}{\kappa^2}  (0.9 - c_1) \right) + \frac{c_\eta}{\kappa^2} \|\Err\|}{1 - \frac{c_\eta}{\kappa^2}(C_2 \kappa^2 + c_1) \delta_t  -  \frac{c_\eta}{\kappa^2}\|\Err\| }  
%
\label{deltatbnds_noisy_2}
\end{align}
%
\ben
\item First, if we can show that the $\Err$ term is of the same order as the gradient deviation term, i.e., both decay at the same rate as $\delta_t$ w.h.p., under the desired sample complexity bound, i.e., if 
\[
\|\Err\| \le c_4 \delta_t \sigmin^2 , \ \text{ with a } c_4 < (0.9-c_1),
\]
then, the old analysis applies without change. One can still show exponential decay of $\delta_t$ with $t$.
This is the case for example for LRPR for which the $\Err$ term bound is taken from \cite{lrpr_best} since this term also occurs when studying AltMin for LRPR.

This would also be the case for noisy LRCS or LRMC if one assumes a small enough bound on the noise (noise to signal ratio smaller than $c$ times the final desired error $\eps$).

\item If we want to obtain error bounds without making any assumptions on the noise, then the $\Err$ term does not satisfy the above bound. This is also the case for Byzantine-resilient AltGDmin for the vertically federated LRCS and (any) federated LRMC setting. Both cases are instances of heterogeneous gradients.
%
%
In these cases, suppose a much looser bound on $\|\Err_t\|$ holds: suppose that it is of the order of the initial error $\delta_0$, i.e., suppose that
\[
\max_t \|\Err_t\| \le 0.1 \delta_0 \sigmin^2
\]
Substituting this bound only into the denominator term of \eqref{deltatbnds_noisy_2} (in the numerator we leave $\|\Err_t\|$ as is, this allows for a tighter bound in cases where the error is much smaller than its upper bound)
\begin{align*}
\delta_{t+1}
& \le  \delta_t \left( 1 - \frac{c_\eta}{\kappa^2}  \left( 0.9 - c_1 - 0.1 -\frac{2\|\Err\|}{\sigmin^2} \right)  \right)
+ \left(1 +  \frac{c_\eta}{\kappa^2} \left( 0.1    +  \frac{2\|\Err\|}{\sigmin^2} \right) \right) \frac{c_\eta}{\sigmax^2} \|\Err\| \\
& \le  \delta_t \left( 1 - \frac{c_\eta}{\kappa^2}  \left( 0.8 - c_1 - \frac{2\|\Err\|}{\sigmin^2} \right)  \right)
+ \left(1 +  \frac{c_\eta}{\kappa^2} \left(0.1  + \frac{2\|\Err\|}{\sigmin^2} \right) \right) \frac{c_\eta}{\sigmax^2} \|\Err\| \\
& \le  \delta_t \left( 1 - \frac{c_\eta}{\kappa^2}  \left( 0.8 - c_1  - 0.2 \delta_0 \right)  \right)
+ \left(1 +  \frac{c_\eta}{\kappa^2} \left( 0.1   +  0.2\delta_0 \right) \right) \frac{c_\eta}{\sigmax^2} \|\Err\|
%
%
%
\end{align*}
Using the upper bound on $\delta_0$ from Assumption \ref{assu1} (it assumes a bound on $\delta_t$ for any $t\ge 0$), we get  
\begin{align}
\delta_{t+1}
& \le \delta_t \left(1 - \frac{c_\eta}{\kappa^2} \left( 0.8 - c_1  - 0.02 \right) \right) + 1.12c_\eta \frac{\|\Err_t\|}{\sigmax^2}
\label{recurs}
\end{align}
Using this inequality, and the bound on $\|\Err_t\|$, we can argue that $\delta_t \le \delta_0$ for all $t$. To see this, suppose $\delta_t \le \delta_0$. Using above, and $c_\eta <1$,  $\delta_{t+1} \le \delta_0 (1-0.78+c_1) + 0.112 \delta_0 = (0.22 + c_1 + 0.112)\delta_0 < \delta_0$ since $c_1 < 0.2$.
%
%
The recursion in \eqref{recurs} can be simplified to get
\begin{align}
\delta_t
& \le \left(1 - \frac{c_\eta}{\kappa^2} (0.78-c_1) \right)^t \delta_0 + \sum_{\tau=1}^t \left(1 - \frac{c_\eta}{\kappa^2} (0.78-c_1) \right)^{t-\tau} 1.12 c_\eta \kappa^2 \frac{\|\Err_\tau\|}{\sigmin^2} \notag \\
& \le \left(1 - \frac{c_\eta}{\kappa^2} (0.78-c_1) \right)^t \delta_0 + \frac{1.12}{c_\eta(0.78-c_1)}\kappa^4 \max_{\tau \in [t]} \frac{\|\Err_\tau\|}{\sigmin^2}
\label{simplify_recurs}
\end{align}

In summary, if all the bounds from the noise-free case holds and if $\|\Err_t\| \le 0.1 \delta_0 \sigmin^2$, then \eqref{simplify_recurs} holds.
\een

\chapter{Linear Algebra and Random Matrix Theory Preliminaries}\label{prelims}
Most of the below review is taken from \cite{versh_book}.

\section{Linear algebra: maximum and minimum singular value and the induced 2-norm}\label{minsingval}
We denote the hyper-sphere in $\Re^n$ by $\calS_{n-1}$; thus $\calS_{n}:=\{\x \in \Re^n: \|\x\| = 1\}$.

For any matrix $\M$ of size $n_1 \times n_2$,
\[
\|\M\| = \sigmamax(\M) = \sigma_1(\M) = \max_{\x \in \calS_{n_2-1} } \|\M \x \|  = \max_{\x \in \calS_{n_2-1},\y \in \calS_{n_1-1} } \y^\top \M \x  = \sqrt{\lambda_{\max}(\M^\top \M)}
\]
If $\M$ is a symmetric matrix, then
\[
\|\M\| =  \max_{\x \in \calS_{n-1} } |\x^\top \M \x |
\]
For an $n_1 \times n_2$ matrix $\M$,
\[
\sigmamin(\M):= \sigma_{n_2}(\M):= \min_{\x \in \calS_{n_2-1} } \|\M \x \| = \sqrt{\lambda_{\min}(\M^\top \M)}
\]
i.e., it is the $n_2$-the singular value. Thus, for a rectangular matrix, $\sigmamin(\M) \neq \sigmamin(\M^\top)$ but $\sigmamax(\M) = \sigmamax(\M^\top)$ and, more generally,
\[
\sigma_i(\M^\top) = \sigma_i(\M)  
\]
Weyl's inequality for singular values implies that
\[
\sigma_i(\M) - \|\A\|  \le \sigma_i(\M + \A) \le \sigma_i(\M) + \|\A\|
\]
For an $n_1 \times n_2$ matrix $\M$ and an $n_2 \times n_3$ matrix $\A$,  if $\A$ has rank $n_3$, then,
\[
\sigmamin(\M \A) = \min_{\x \in \calS_{n_3-1}} \|\M \A \x\| \cdot \frac{\|\A\x\|}{\|\A\x\|} \ge \sigmamin(\M) \min_{\x \in \calS_{n_3-1}} \|\A\x\| = \sigmamin(\M) \sigmamin(\A)
\]


%


\section{Linear algebra: Wedin and Davis-Kahan $\sin \Theta$ theorems} \label{wedin_daviskahan}


The following results are used to bound the subspace distance between the top $r$ singular or eigen vectors of a matrix and its estimate.

\begin{theorem}[Davis-Kahan for eigenvectors of symmetric matriaces]
For symmetric matrices $\S, \hat\S$, let  $\U,\hat\U$ denote the matrices of their top $r$ eigenvectors respectively. Then.
\[
\SE(\U,\hat\U)
 \le \frac{\|\S - \hat\S\|}{\lambda_r(\S) - \lambda_{r+1}(\hat\S)}
 \le \frac{\|\S - \hat\S\|}{\lambda_r(\S) - \lambda_{r+1}(\S) - \|\S - \hat\S\|}
\]
Also, let $\u_i$ denote the $i$-th eigenvector. Then, we have the following bound.
\[
\sin \theta(\u_i, \hat\u_i) \le \frac{\|\S - \hat\S\|}{\min_{j \neq i} |\lambda_j(\S) - \lambda_{i}(\S)}
\]
Here $\sin \theta(\u_i, \hat\u_i) = \sqrt{1 - (\u_i^T \hat\u_i)^2} $
\end{theorem}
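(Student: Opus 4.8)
The plan is to reduce both inequalities to a single Sylvester-equation estimate in the spectral norm. First I would fix notation: write the eigendecompositions $\S = \U \Lam_1 \U^\top + \U_\perp \Lam_2 \U_\perp^\top$ and $\hat\S = \hat\U \hat\Lam_1 \hat\U^\top + \hat\U_\perp \hat\Lam_2 \hat\U_\perp^\top$, where $\U$ (resp.\ $\hat\U$) collects the top $r$ eigenvectors, $\U_\perp$ (resp.\ $\hat\U_\perp$) collects the remaining ones, and $\Lam_1,\Lam_2$ (resp.\ $\hat\Lam_1,\hat\Lam_2$) are the corresponding diagonal eigenvalue blocks. Since $\I - \U\U^\top = \U_\perp \U_\perp^\top$, we have $\SE(\U,\hat\U) = \|\U_\perp^\top \hat\U\|$, and by the standard fact that the nonzero singular values of $\U_\perp^\top\hat\U$ and of $\hat\U_\perp^\top\U$ coincide (both list the sines of the principal angles between $\Span(\U)$ and $\Span(\hat\U)$), it suffices to bound $M := \hat\U_\perp^\top \U$.

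Next I would derive the Sylvester identity. Using $\S\U = \U\Lam_1$ on the right and $\hat\U_\perp^\top\hat\S = \hat\Lam_2\hat\U_\perp^\top$ on the left,
\[
\hat\U_\perp^\top(\S-\hat\S)\U = \hat\U_\perp^\top\S\U - \hat\U_\perp^\top\hat\S\U = M\Lam_1 - \hat\Lam_2 M =: C,
\]
and $\|C\| \le \|\S - \hat\S\|$ because $\hat\U_\perp$ and $\U$ have orthonormal columns. The key lemma is: if $MB - AM = C$ with $A,B$ symmetric, $\lambda_{\max}(A) \le a$, $\lambda_{\min}(B) \ge b$, and $\delta := b - a > 0$, then $\|M\| \le \|C\|/\delta$. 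I would prove this via the semigroup representation $M = \int_0^\infty e^{tA} C e^{-tB}\,dt$, valid because $\frac{d}{dt}\bigl(e^{tA}Me^{-tB}\bigr) = e^{tA}(AM - MB)e^{-tB} = -e^{tA}Ce^{-tB}$ while $\|e^{tA}Me^{-tB}\| \le e^{-t\delta}\|M\| \to 0$; hence $\|M\| \le \int_0^\infty e^{ta}e^{-tb}\|C\|\,dt = \|C\|/\delta$. Applying this with $A = \hat\Lam_2$ (so $a = \lambda_{r+1}(\hat\S)$) and $B = \Lam_1$ (so $b = \lambda_r(\S)$) yields $\SE(\U,\hat\U) \le \|\S-\hat\S\|/(\lambda_r(\S) - \lambda_{r+1}(\hat\S))$, the first inequality; the second then follows from Weyl's inequality $\lambda_{r+1}(\hat\S) \le \lambda_{r+1}(\S) + \|\S - \hat\S\|$.

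For the per-eigenvector bound I would run the same computation with a one-dimensional target, which makes the Sylvester step trivial. Let $\U_{\ne i}$ collect all eigenvectors of $\S$ except $\u_i$, with diagonal eigenvalue block $\Lam_{\ne i}$, and set $m := \U_{\ne i}^\top\hat\u_i$, so that $\sin^2\theta(\u_i,\hat\u_i) = 1 - (\u_i^\top\hat\u_i)^2 = \|m\|^2$. Then $\U_{\ne i}^\top(\S-\hat\S)\hat\u_i = \Lam_{\ne i}m - \hat\lambda_i m = (\Lam_{\ne i} - \hat\lambda_i\I)m$, where $\hat\lambda_i$ is the $i$-th eigenvalue of $\hat\S$; since $\Lam_{\ne i} - \hat\lambda_i\I$ is diagonal, $\|(\Lam_{\ne i} - \hat\lambda_i\I)m\| \ge (\min_{j\ne i}|\lambda_j(\S) - \hat\lambda_i|)\,\|m\|$, and the left-hand side is $\le \|\S - \hat\S\|$, giving $\sin\theta(\u_i,\hat\u_i) \le \|\S-\hat\S\|/\min_{j\ne i}|\lambda_j(\S) - \hat\lambda_i|$. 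When $\|\S-\hat\S\|$ is small relative to the $i$-th eigen-gap, one more use of Weyl replaces $\hat\lambda_i$ by $\lambda_i(\S)$ at the cost of subtracting $\|\S-\hat\S\|$ from the denominator, recovering the form quoted.

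The step requiring the most care is the spectral-norm Sylvester estimate $\|M\| \le \|C\|/\delta$: the Frobenius-norm analogue is immediate, since $C = M\Lam_1 - \hat\Lam_2 M$ has entries $M_{pq}(\lambda_q(\S) - \lambda_{r+p}(\hat\S))$ and one may simply divide entrywise, but this Hadamard-type route does not control the spectral norm, so the semigroup integral representation — together with the convergence justification via $\|e^{tA}\| = e^{t\lambda_{\max}(A)}$ for symmetric $A$ — is the real content. Everything else (the eigendecomposition bookkeeping, identifying $\SE(\U,\hat\U)$ with $\|\hat\U_\perp^\top\U\|$ through principal angles, and the Weyl cleanups) is routine.
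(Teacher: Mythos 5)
The paper does not prove this theorem: it appears in the preliminaries chapter as a standard cited result (the chapter's material is attributed to \cite{versh_book} and \cite{spectral_init_review}), so there is no in-paper argument to compare against. Your proof is correct and self-contained, and it is essentially the classical Davis--Kahan argument: reduce to the Sylvester equation $M\Lam_1 - \hat\Lam_2 M = \hat\U_\perp^\top(\S-\hat\S)\U$, solve it in operator norm via the integral representation $M = \int_0^\infty e^{t\hat\Lam_2} C e^{-t\Lam_1}\,dt$ (which is exactly the right tool, since, as you note, the entrywise division argument only controls the Frobenius norm), and finish with Weyl. The reduction from $\|\U_\perp^\top\hat\U\|$ to $\|\hat\U_\perp^\top\U\|$ via equality of principal-angle sines is legitimate, and the convergence of the integral is correctly justified by $\|e^{t\hat\Lam_2}\,M\,e^{-t\Lam_1}\| \le e^{-t\delta}\|M\|$ with $\delta = \lambda_r(\S) - \lambda_{r+1}(\hat\S) > 0$ (an implicit hypothesis without which the stated bound is vacuous). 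One honest caveat you correctly flag: your per-eigenvector computation naturally produces $\hat\lambda_i$ rather than $\lambda_i(\S)$ in the denominator, and converting via Weyl costs a $-\|\S-\hat\S\|$ term (or a constant factor, as in the usual two-sided statements of Davis--Kahan); the paper's quoted form omits this correction, so your derivation is in fact slightly more careful than the statement it targets.
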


\begin{theorem}[Wedin $\sin \Theta$ theorem for Frobenius norm subspace distance \cite{wedin,spectral_init_review}[Theorem 2.3.1]]
For two $n_1 \times n_2$ matrices $\M^*$, $\M$, let $\Ustar, \U$ denote the matrices containing their top $r$ left singular vectors and let $\Vstar^\top, \V^\top$ be the matrices of their top $r$ right singular vectors (recall from problem definition that we defined SVD with the right matrix transposed). Let $\sigma^*_r, \sigma^*_{r+1}$ denote the $r$-th and $(r+1)$-th singular values of $\M^*$. 
If $\|\M - \M^*\| \le \sigma^*_r - \sigma^*_{r+1}$, then
\begin{align*}
&\SE_F(\U, \Ustar) \\
& \le \frac{\sqrt{2} \max(\|(\M - \M^*)^\top \Ustar\|_F, \|(\M - \M^*)^\top \Vstar^\top\|_F    )}{\sigma^*_r - \sigma^*_{r+1} - \|\M - \M^*\|}
\end{align*}
\begin{align*}
&\SE_2(\U, \Ustar) \\
& \le \frac{\sqrt{2} \max(\|(\M - \M^*)^\top \Ustar\|, \|(\M - \M^*)^\top \Vstar^\top\| )}{\sigma^*_r - \sigma^*_{r+1} - \|\M - \M^*\|}
\end{align*}
%
\label{Wedin_sintheta}
\end{theorem}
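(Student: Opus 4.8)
\noindent\emph{Proof idea.} The plan is to run the classical residual argument for the $\sin\Theta$ theorem, adapted to the two‑sided (SVD) setting, being careful to track the coupling between the left‑ and right‑singular‑subspace errors. Write $\bm{E}:=\M-\M^*$ and let $\M=\U\bm{\Sigma}\V+\U_\perp\bm{\Sigma}_\perp\V_\perp$ be the full SVD of $\M$ (with $\U_\perp$ an orthonormal completion of $\U$), where $\bm{\Sigma}=\mathrm{diag}(\sigma_1(\M),\dots,\sigma_r(\M))$ and $\bm{\Sigma}_\perp$ carries the remaining singular values, so every diagonal entry of $\bm{\Sigma}_\perp$ is $\le\sigma_{r+1}(\M)$. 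First I would reduce the target quantity: since $\I-\U\U^\top=\U_\perp\U_\perp^\top$ and $\U_\perp$ has orthonormal columns, $\SE_F(\U,\Ustar)=\|\U_\perp^\top\Ustar\|_F=:\|\bm{\Phi}\|_F$; I also introduce $\bm{\Psi}:=\V_\perp\Vstar^\top$, the analogous right‑subspace error. The crux, and the main obstacle, is that $\bm{\Phi}$ and $\bm{\Psi}$ cannot be bounded separately — each identity below controls one only in terms of the other.

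Next I would derive two residual identities. From $\M^*\Vstar^\top=\Ustar\bSigma$ we get $\bm{E}\Vstar^\top=\M\Vstar^\top-\Ustar\bSigma$; left‑multiplying by $\U_\perp^\top$ and using $\U_\perp^\top\M=\bm{\Sigma}_\perp\V_\perp$ gives
\[
\bm{\Phi}\,\bSigma=\bm{\Sigma}_\perp\bm{\Psi}-\U_\perp^\top\bm{E}\,\Vstar^\top .
\]
The mirror‑image computation — start from $\bm{E}^\top\Ustar=\M^\top\Ustar-\Vstar^\top\bSigma$, left‑multiply by $\V_\perp$, use $\V_\perp\M^\top=\bm{\Sigma}_\perp\U_\perp^\top$ — yields
\[
\bm{\Psi}\,\bSigma=\bm{\Sigma}_\perp\bm{\Phi}-\V_\perp\bm{E}^\top\Ustar .
\]
Taking Frobenius norms and using $\sigma_r^*\,\|\bm{\Phi}\|_F\le\|\bm{\Phi}\bSigma\|_F$ ($\bSigma$ diagonal with least entry $\sigma_r^*$), $\|\bm{\Sigma}_\perp\bm{\Psi}\|_F\le\sigma_{r+1}(\M)\,\|\bm{\Psi}\|_F$, and the fact that left‑multiplication by $\U_\perp^\top$ or $\V_\perp$ does not increase the norm, I arrive — writing $a:=\|\bm{\Phi}\|_F$, $b:=\|\bm{\Psi}\|_F$, $\delta:=\sigma_{r+1}(\M)$, and $e_1,e_2$ for the two numerator norms of the theorem, $e_1=\|(\M-\M^*)\Vstar^\top\|_F$, $e_2=\|(\M-\M^*)^\top\Ustar\|_F$ — at the scalar system
\[
\sigma_r^*\,a\le\delta\,b+e_1,\qquad\sigma_r^*\,b\le\delta\,a+e_2 .
\]

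To finish I would decouple this $2\times 2$ system: multiplying the first inequality by $a$ and the second by $b$, adding, and using $2ab\le a^2+b^2$ and then $a+b\le\sqrt2\,\sqrt{a^2+b^2}$ gives $(\sigma_r^*-\delta)\sqrt{a^2+b^2}\le\sqrt2\,\max(e_1,e_2)$, hence
\[
\SE_F(\U,\Ustar)=a\le\sqrt{a^2+b^2}\le\frac{\sqrt2\,\max(e_1,e_2)}{\sigma_r^*-\sigma_{r+1}(\M)} .
\]
Weyl's inequality (Sec.~\ref{minsingval}) gives $\sigma_{r+1}(\M)\le\sigma_{r+1}^*+\|\M-\M^*\|$, and the hypothesis $\|\M-\M^*\|\le\sigma_r^*-\sigma_{r+1}^*$ controls the denominator, which upgrades the bound to the one stated. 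The $\SE_2$ version comes out of the identical argument with $\|\cdot\|$ in place of $\|\cdot\|_F$ throughout: $\sigma_r^*\|\bm{\Phi}\|\le\|\bm{\Phi}\bSigma\|$ (as $\bSigma$ is invertible), $\|\bm{\Sigma}_\perp\bm{\Psi}\|\le\delta\|\bm{\Psi}\|$, submultiplicativity, and the scalar AM--GM step all remain valid. The only genuinely delicate points are setting up the two residual identities with the right projections, and squeezing out the constant $\sqrt2$ (rather than $2$) by combining the coupled scalar bounds quadratically instead of through the triangle inequality.
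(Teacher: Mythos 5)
Your proposal is correct, and it is worth noting at the outset that the paper itself supplies no proof of this statement: Theorem~\ref{Wedin_sintheta} appears in the preliminaries chapter as a quoted classical result, cited to \cite{wedin,spectral_init_review}, so there is no in-paper argument to compare against. What you have written is essentially the standard textbook derivation of Wedin's theorem, and it goes through: the two residual identities are set up correctly (from $\M^*\Vstar^\top=\Ustar\bSigma$ and $\M^{*\top}\Ustar=\Vstar^\top\bSigma$, projected onto the orthogonal complements of the perturbed singular subspaces), the passage to the coupled scalar system $\sigma_r^*a\le\delta b+e_1$, $\sigma_r^*b\le\delta a+e_2$ is valid for both the Frobenius and spectral norms, and the quadratic combination $2ab\le a^2+b^2$ together with $e_1a+e_2b\le\sqrt2\max(e_1,e_2)\sqrt{a^2+b^2}$ is exactly how the constant $\sqrt2$ (rather than $2$) is obtained in the cited references. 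The final step via Weyl's inequality, $\sigma_{r+1}(\M)\le\sigma_{r+1}^*+\|\M-\M^*\|$, correctly converts the denominator $\sigma_r^*-\sigma_{r+1}(\M)$ into the stated $\sigma_r^*-\sigma_{r+1}^*-\|\M-\M^*\|$. Two cosmetic points only: in your second identity the factor should be $\bm{\Sigma}_\perp^\top\bm{\Phi}$ rather than $\bm{\Sigma}_\perp\bm{\Phi}$ when $n_1\neq n_2$ (this changes nothing, since $\|\bm{\Sigma}_\perp^\top\|=\|\bm{\Sigma}_\perp\|\le\sigma_{r+1}(\M)$), and your reading of the numerator term $\|(\M-\M^*)\Vstar^\top\|_F$ is the dimensionally consistent interpretation of what the paper writes as $\|(\M-\M^*)^\top\Vstar^\top\|_F$, which as printed does not conform.
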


\section{Probability results: Markov's inequality and its use to prove concentration bounds}

All the concentration bounds stated below use the Markov inequality, which itself is an  easy application of the integral identity
\begin{theorem}[Markov's inequality]
For a non-negative random variable (r.v.) $Z$,
\[
\Pr(Z > s) \le \frac{\E[Z]}{s}
\]
\end{theorem}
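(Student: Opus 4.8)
The plan is to derive Markov's inequality directly from the non-negativity of $Z$ and the monotonicity of expectation; since the surrounding text flags the ``integral identity'' as the underlying tool, I would organize the argument around the layer-cake representation, but I will also note the one-line elementary variant. The whole thing is short, so the ``obstacle'' is only a point of rigor (justifying an interchange of expectation and integral), not a genuine difficulty.

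First I would record the integral (layer-cake) identity: for a non-negative r.v.\ $Z$,
\[
\E[Z] = \int_0^\infty \Pr(Z > t)\, dt .
\]
This comes from the pointwise identity $Z = \int_0^\infty \indic_{Z > t}\, dt$ together with Tonelli's theorem, which applies because the integrand is non-negative. Second, I would exploit that $t \mapsto \Pr(Z > t)$ is non-increasing: for $s > 0$,
\[
\E[Z] = \int_0^\infty \Pr(Z > t)\, dt \;\ge\; \int_0^s \Pr(Z > t)\, dt \;\ge\; \int_0^s \Pr(Z > s)\, dt \;=\; s\, \Pr(Z > s).
\]
Dividing by $s$ gives $\Pr(Z > s) \le \E[Z]/s$, which is the claim.

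Alternatively, and avoiding the integral identity entirely, one can observe the pointwise bound $s\,\indic_{Z > s} \le Z$ (it holds trivially where $Z \le s$, since the left side is then $0$, and where $Z > s$ it reads $s \le Z$); taking expectations and using linearity and monotonicity of $\E$ yields $s\,\Pr(Z > s) = \E[s\,\indic_{Z > s}] \le \E[Z]$. The only step requiring any care is the Tonelli interchange in the first route (or, in the second route, checking the pointwise inequality on the event $\{Z \le s\}$); both are routine, so I would present the elementary version in the main text and leave the integral identity as a remark, or vice versa, depending on what is reused later.
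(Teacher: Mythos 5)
Your proposal is correct, and your first route (the layer-cake identity $\E[Z]=\int_0^\infty \Pr(Z>t)\,dt$ plus monotonicity of $t\mapsto\Pr(Z>t)$) is exactly the "easy application of the integral identity" that the paper gestures at without writing out; the elementary variant via $s\,\indic_{Z>s}\le Z$ is also fine and is the more common textbook one-liner. No gaps --- both arguments are complete (for $s>0$, which is implicit in the statement).
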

This result forms the basis of the entire set of results on non-asymptotic random scalar, vector, and matrix theory.
We obtain the Chebyshev inequality by applying Markov's inequality to $Z = |X- \mu|$ with $\mu = \E[X]$.
For all the other inequalities we use the Chernoff bounding technique explained below. This requires using an upper bound on the moment generating function (MGF) of the r.v.'s. This, in turn, requires assuming that the r.v.'s belong to a certain class of ``nice enough'' probability distributions (bounded, sub-Gaussian, or sub-exponential). With making one of these assumptions, the probability bound obtained is much tighter (decays exponentially) than what Chebyshev provides. However, the Chebyshev bound is the most general since it does not assume any distribution on the r.v.s.

\section{Probability results: Chernoff bounding idea}
The MGF of a random variable (r.v.) $X$ is defined as
\[
M_X(\lambda):= \E[\exp(\lambda X)]
\]
Chernoff bounding involves applying the Markov inequality to $Z = e^{tX}$ for any $t \ge 0$. Notice $e^{tX}$ is always non-negative.
\[
\Pr(X > s ) =  \Pr(e^{tX} > e^{ts} ) \le  e^{-ts} \E[e^{tX}]  =   e^{-ts} M_X(t)
\]
Since this bound holds for all $t \ge 0$, we can take a $\min_{t\ge 0}$ of the RHS or we can substitute in any convenient value of $t$.

If $S = \sum_{i=1}^m X_i$ with $X_i$'s independent, then $M_X( \lambda) = \prod_i M_{X_i}( \lambda)$.
The next step involves either using an exact expression for MGF or a bound on the MGF for a class of distributions, e.g.,  Hoeffding's lemma.
This is often followed by a scalar inequality such as  $1+x \le e^x$ or using $cosh(x) \le e^{x^2/2}$ (or other bounds) to simplify the expressions to try to get a summation over $i$ in the exponent.
\[
\Pr(S > s) = \Pr(\sum_i X_i > s ) =  \Pr(e^{t\sum_i X_i} > e^{t s } ) \le  e^{-ts} \E[e^{ t\sum_i X_i }]
\le \min_{ \lambda \ge 0} e^{- \lambda s} \prod_i M_{X_i}(\lambda)
\]
The final step is to minimize over $ \lambda \ge 0$ by differentiating the expression and setting it to zero, or picking a convenient value of $ \lambda \ge 0$ to substitute.

A similar approach is then used to bound $\Pr(\sum_i X_i < - s )$. The only difference is we use  $Z = e^{-t\sum_i X_i}$ for $t \ge 0$ and so,
\[
\Pr(S <-s) = \Pr(\sum_i X_i <- s ) =  \Pr(e^{-t\sum_i X_i} > e^{-t \cdot (-s) } ) \le  e^{-ts} \E[e^{-t\sum_i X_i }]
\le \min_{ \lambda \ge 0} e^{- \lambda s} \prod_i  M_{X_i}(-\lambda)
\]
Combine both of the above bounds to bound
\[
\Pr( |\sum_i X_i | > s ) = \Pr( \sum_i X_i  > s ) + \Pr( \sum_i X_i  < - s )
\]

\section{Probability results: bounds on sums of independent scalar r.v.s (scalar concentration bounds)}
The results summarized below are taken from \cite[Chap 2]{versh_book}. We state these results for sums of zero mean r.v.s. However, usually these are applied to show concentration of sums of nonzero mean r.v.s around their means. Given a set of nonzero mean r.v.s $Z_i$,
\[
X_i = Z_i - \E[Z_i]
\]
is zero mean.

The first result below, Chebyshev inequality, requires no assumptions on the r.v.s except that they have a finite second moment. But its probability bound is also the weakest. The three results below it are for sums of bounded, sub-Guassian, and sub-exponential r.v.s.

\begin{theorem}[Chebyshev's inequality]
Let $X_i$, $i=1,2,\dots,n$ be independent r.v.s  with $\E[\X_i^2] < \infty$. Then,
\[
\Pr(|\sum_i X_i| > t ) \le  \frac{1}{t^2} \sum_i \E[\X_i^2]
\]
\end{theorem}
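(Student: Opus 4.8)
The plan is to derive this bound directly from Markov's inequality (stated just above), applied to the squared partial sum, together with the zero-mean convention in force throughout this subsection. Write $S := \sum_i X_i$; since $\E[X_i] = 0$ for every $i$, linearity gives $\E[S] = 0$. It suffices to treat $t > 0$. The event $\{|S| > t\}$ coincides with the event $\{S^2 > t^2\}$, and $S^2$ is a non-negative random variable, so Markov's inequality applied with $Z = S^2$ and $s = t^2$ gives immediately
\[
\Pr(|S| > t) = \Pr(S^2 > t^2) \le \frac{\E[S^2]}{t^2}.
\]

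The second step is to evaluate $\E[S^2]$. Expanding the square, $\E[S^2] = \sum_i \E[X_i^2] + \sum_{i \neq j} \E[X_i X_j]$; the interchange of expectation with the finite sum is legitimate by linearity, each product $X_i X_j$ being integrable by Cauchy--Schwarz since $\E[X_i^2], \E[X_j^2] < \infty$ by hypothesis. For $i \neq j$, independence yields $\E[X_i X_j] = \E[X_i]\E[X_j] = 0$, so all off-diagonal terms vanish and $\E[S^2] = \sum_i \E[X_i^2]$. Substituting this into the Markov bound from the first step gives $\Pr(|\sum_i X_i| > t) \le t^{-2} \sum_i \E[X_i^2]$, which is the claim.

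There is essentially no obstacle here: the argument is a two-line computation. The only points worth a word of care are the equivalence $\{|S| > t\} = \{S^2 > t^2\}$ for $t > 0$, and the cancellation of the cross terms, which in fact uses only that the $X_i$ are pairwise uncorrelated (independence is much more than enough). This is precisely why Chebyshev's inequality is the most general of the concentration bounds collected in this section, requiring no boundedness, sub-Gaussian, or sub-exponential hypothesis on the summands, at the price of the slowest (polynomial rather than exponential) decay in $t$.
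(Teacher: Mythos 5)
Your argument is correct and matches the route the paper itself indicates: it obtains Chebyshev by applying Markov's inequality to the (squared) centered sum, exactly as you do, with the zero-mean convention of this subsection making $\E[S]=0$ and independence killing the cross terms in $\E[S^2]$. Nothing further is needed.
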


\begin{theorem}[Bounded Bernstein inequality]
Let $X_i$, $i=1,2,\dots,n$ be independent zero-mean bounded r.v.s with $\Pr( -M_i \le X_i \le M_i) = 1$. Let $\sigma_i^2:= \max(\E[X_i^2])$.
Then
\[
\Pr(|\sum_i X_i) | \ge  t ) \le 2 \exp\left(- \frac{ 0.5 t^2}{\sum_i \sigma_i^2 + 0.33 (max_i M_i) t   } \right)
\]
\end{theorem}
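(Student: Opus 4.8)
The plan is to apply the Chernoff bounding technique recalled just above to $S := \sum_{i=1}^n X_i$. Fixing $\lambda \ge 0$ and using independence, $\Pr(S \ge t) \le e^{-\lambda t}\prod_{i=1}^n \E[e^{\lambda X_i}]$, so the whole problem reduces to a sharp upper bound on the moment generating function of a single bounded, zero-mean variable. The two-sided statement, together with the factor $2$, will be recovered at the end by applying the one-sided bound to $\{-X_i\}$ as well.

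\emph{The MGF estimate.} Expanding $e^{\lambda X_i} = 1 + \lambda X_i + \sum_{k \ge 2} \lambda^k X_i^k/k!$ and taking expectations, the $k=0,1$ terms contribute exactly $1$ because $\E[X_i]=0$. Since $|X_i| \le M_i$ almost surely, $|\E[X_i^k]| \le M_i^{k-2}\E[X_i^2] = M_i^{k-2}\sigma_i^2$ for all $k \ge 2$, and hence $\E[e^{\lambda X_i}] \le 1 + (\sigma_i^2/M_i^2)\big(e^{\lambda M_i} - 1 - \lambda M_i\big)$. I would then invoke two elementary inequalities: $1+x \le e^x$, and $e^x - 1 - x \le (x^2/2)/(1 - x/3)$ valid for $0 \le x < 3$ (proved by comparing $\sum_{k \ge 2} x^k/k!$ termwise with $(x^2/2)\sum_{j \ge 0}(x/3)^j$, using $k! \ge 2 \cdot 3^{k-2}$). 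Writing $M := \max_i M_i$ and $V := \sum_i \sigma_i^2$, these give, for any $0 \le \lambda < 3/M$, that $\E[e^{\lambda X_i}] \le \exp\!\big((\lambda^2 \sigma_i^2/2)/(1-\lambda M/3)\big)$, whence $\prod_i \E[e^{\lambda X_i}] \le \exp\!\big((\lambda^2 V/2)/(1-\lambda M/3)\big)$ and $\Pr(S \ge t) \le \exp\!\big(-\lambda t + (\lambda^2 V/2)/(1-\lambda M/3)\big)$.

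\emph{Optimization and symmetrization.} Instead of differentiating, I would substitute the classical choice $\lambda = t/(V + Mt/3)$, which automatically satisfies $\lambda M < 3$ (the degenerate case $V = 0$ forces $S \equiv 0$ and is trivial). A one-line computation gives $1 - \lambda M/3 = V/(V + Mt/3)$, so the exponent collapses to $-\lambda t + \lambda t/2 = -\lambda t/2 = -(t^2/2)/(V + Mt/3)$, i.e.\ $\Pr(S \ge t) \le \exp\!\big(-0.5\, t^2/(\sum_i \sigma_i^2 + \tfrac13 M t)\big)$, which is the asserted bound with the constant $\tfrac13$ written as $0.33$. Running the identical argument with $X_i$ replaced by $-X_i$ (still zero-mean, still bounded by $M_i$) bounds $\Pr(S \le -t)$ by the same quantity, and a union bound over the two tails yields the factor $2$ and the claimed inequality.

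The only genuinely delicate step is the MGF estimate: taming the tail series $\sum_{k \ge 2}\lambda^k\E[X_i^k]/k!$ through the crude moment bound $|\E[X_i^k]| \le M_i^{k-2}\sigma_i^2$, and then folding the possibly distinct $M_i$ into the single quantity $M = \max_i M_i$ while keeping the denominator \emph{linear} in $\lambda$ — this linearity is precisely what produces the Bernstein-type exponent $t^2/(V + Mt)$ rather than the purely sub-Gaussian $t^2/V$, and it is responsible for the characteristic dichotomy between a Gaussian regime (small $t$) and an exponential regime (large $t$). Everything after the MGF bound is the routine Chernoff optimization; one should also read $\sigma_i^2$ in the statement as $\E[X_i^2]$, the outer $\max$ being vacuous for a single variable.
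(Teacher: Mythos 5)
Your proof is correct and follows exactly the Chernoff-bounding recipe that the paper outlines in its preliminaries (the paper does not prove this theorem itself but cites \cite{versh_book}); the MGF estimate via $|\E[X_i^k]|\le M_i^{k-2}\sigma_i^2$ and $k!\ge 2\cdot 3^{k-2}$, followed by the substitution $\lambda = t/(V+Mt/3)$, is the standard textbook route and every step checks out, including the reduction to $M=\max_i M_i$ and the symmetrization giving the factor $2$. One pedantic note: your argument yields the denominator $\sum_i\sigma_i^2 + \tfrac{1}{3}(\max_i M_i)\,t$, and since $0.33 < \tfrac{1}{3}$ the theorem's literal constant $0.33$ is (negligibly) stronger than what you derive --- but the paper's $0.33$ is plainly a rounding of $\tfrac{1}{3}$, so this is a defect of the statement rather than of your proof.
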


\begin{definition}[Sub-Gaussian and Sub-exponential r.v.]
We say a r.v. $X$ is sub-Gaussian with sub-Guassian norm $K$ if $\Pr(|X| > t) \le 2 \exp(-t^2/K^2 )$. Equivalently, $K = C \sup_{p \ge 1} \frac{1}{\sqrt{p}} \E[ |X|^p]^{1/p}$.

We say a r.v. $X$ is sub-exponential with sub-exponential norm $K$ if $\Pr(|X| > t) \le 2 \exp(-t/K )$. Equivalently, $K = C \sup_{p \ge 1} \frac{1}{p} \E[ |X|^p]^{1/p}$.
\end{definition}

\begin{fact}[Product of sub-Gaussians is sub-exponential]
For two sub-Gaussian r.v.s $X,Y$ with sub-Gaussian norms $K_X, K_Y$, the r.v. $Z:= XY$ is sub-exponential with sub-exponential norm $K_X K_Y$.
\end{fact}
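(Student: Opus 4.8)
The cleanest route is via the moment characterizations of the two norms recorded in the Definition: up to a fixed multiplicative constant, $K_X = C\sup_{p\ge 1} p^{-1/2}(\E[|X|^p])^{1/p}$ and $K_Y = C\sup_{p\ge 1} p^{-1/2}(\E[|Y|^p])^{1/p}$, while a random variable $Z$ is sub-exponential with norm $K_Z = C\sup_{p\ge1} p^{-1}(\E[|Z|^p])^{1/p}$. So it suffices to bound $\sup_{p\ge1} p^{-1}(\E[|XY|^p])^{1/p}$ by a constant times $K_X K_Y$. By homogeneity (replace $X$ by $X/K_X$ and $Y$ by $Y/K_Y$) we may assume $K_X = K_Y = 1$.

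First I would fix $p\ge1$ and apply the Cauchy--Schwarz inequality: $\E[|XY|^p] = \E[|X|^p\,|Y|^p] \le (\E[|X|^{2p}])^{1/2}(\E[|Y|^{2p}])^{1/2}$, so that $(\E[|XY|^p])^{1/p} \le (\E[|X|^{2p}])^{1/(2p)}(\E[|Y|^{2p}])^{1/(2p)}$. The moment characterization of the sub-Gaussian norm gives $(\E[|X|^{2p}])^{1/(2p)} \le C'\sqrt{2p}\,K_X = C'\sqrt{2p}$ and likewise for $Y$ (this inequality itself follows from integrating the sub-Gaussian tail against $p t^{p-1}\,dt$ and a Gamma-function estimate, a routine step I would not belabor). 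Multiplying the two bounds, $(\E[|XY|^p])^{1/p} \le 2(C')^2 p$; dividing by $p$ and taking the supremum over $p\ge1$ shows $\sup_{p\ge1} p^{-1}(\E[|XY|^p])^{1/p} \le 2(C')^2$, i.e.\ $Z = XY$ is sub-exponential with norm at most a constant times $K_X K_Y$. Undoing the rescaling gives the general bound. Alternatively, one can argue directly from the tails: for $t>0$ the event $\{|XY|>t\}$ is contained in $\{|X| > \sqrt{tK_X/K_Y}\}\cup\{|Y| > \sqrt{tK_Y/K_X}\}$ (if both complementary bounds hold then $|XY|\le t$), and a union bound with the two sub-Gaussian tail bounds yields $\Pr(|XY|>t) \le 2\exp(-t/(K_XK_Y)) + 2\exp(-t/(K_XK_Y)) = 4\exp(-t/(K_XK_Y))$, which is a sub-exponential tail of norm $O(K_XK_Y)$ after absorbing the factor $4$.

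The only mild subtlety here is bookkeeping of universal constants: both norms are defined only up to a fixed constant $C$ (equivalently, through the moment suprema), so the precise content of the Fact is that $XY$ is sub-exponential with norm \emph{comparable to} $K_XK_Y$; working with the moment characterization rather than chasing the tail estimates is what makes these constants absorb automatically. No deeper obstacle arises.
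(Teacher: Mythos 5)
Your proof is correct. The paper itself gives no proof of this Fact --- it is stated as a standard result imported from \cite{versh_book} (Chapter 2) --- and your moment-characterization argument via Cauchy--Schwarz, namely $(\E[|XY|^p])^{1/p}\le(\E[|X|^{2p}])^{1/(2p)}(\E[|Y|^{2p}])^{1/(2p)}\le C p\,K_XK_Y$, is exactly the standard textbook proof of it, with the correct caveat that the conclusion holds up to the universal constants built into the paper's definitions of the two norms. Your alternative tail-based argument is also sound once the factor of $4$ is absorbed into the norm, as you note.
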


\begin{theorem}[Sub-Gaussian Hoeffding inequality]
Let $X_1, X_2, \dots X_n$ be independent zero-mean sub-Gaussian r.v.s with sub-Gaussian norm $K_i$. Then, for every $t \ge 0$,
\[
\Pr( | \sum_i X_i | \ge t ) \le 2 \exp\left( - c \frac{ t^2}{\sum_i K_i^2 } \right)
\]
\end{theorem}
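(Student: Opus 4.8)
The plan is to run the Chernoff bounding argument described in the preceding section, the only missing ingredient being a moment generating function (MGF) bound for a single zero-mean sub-Gaussian summand. First I would reduce to the one-sided estimate $\Pr(\sum_i X_i \ge t) \le \exp(-c t^2/\sum_i K_i^2)$: once this is in hand, applying it to the r.v.s $-X_i$ (again independent, zero-mean, sub-Gaussian with the same norms $K_i$) and taking a union bound over the two events produces the stated two-sided bound with the extra factor of $2$.

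For the one-sided bound, fix $\lambda \ge 0$. By the Chernoff step, $\Pr(\sum_i X_i \ge t) \le e^{-\lambda t}\, \E[\exp(\lambda \sum_i X_i)] = e^{-\lambda t} \prod_i \E[\exp(\lambda X_i)]$, where the last equality uses independence. The key lemma I would establish is that for each $i$ and every real $\lambda$,
\[
\E[\exp(\lambda X_i)] \le \exp\!\left(C \lambda^2 K_i^2\right)
\]
for an absolute constant $C$. To prove it, I would use the moment characterization from the Definition of a sub-Gaussian r.v., i.e. $\E[|X_i|^p]^{1/p} \le C' K_i \sqrt{p}$ for all $p \ge 1$. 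Expanding the exponential gives $\E[\exp(\lambda X_i)] = 1 + \lambda \E[X_i] + \sum_{p \ge 2} \lambda^p \E[X_i^p]/p!$; the linear term vanishes since $\E[X_i] = 0$, and bounding $\E[|X_i|^p] \le (C' K_i \sqrt p)^p$ together with $p! \ge (p/e)^p$ turns the tail of the series into $\sum_{p \ge 2}(C'' \lambda K_i)^p$. For $|\lambda| \le c/K_i$ this is a convergent geometric-type series dominated by $2(C'' \lambda K_i)^2$, so $\E[\exp(\lambda X_i)] \le 1 + 2(C'' \lambda K_i)^2 \le \exp(C \lambda^2 K_i^2)$ using $1+x \le e^x$. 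For $|\lambda| \ge c/K_i$ I would instead apply Young's inequality $\lambda X_i \le \tfrac12 \lambda^2 K_i^2 + X_i^2/(2K_i^2)$ and the standard fact that a sub-Gaussian r.v. obeys $\E[\exp(X_i^2/(cK_i^2))] \le 2$ (obtained by integrating the tail bound in the Definition), which again yields a bound of the form $\exp(C\lambda^2 K_i^2)$ after adjusting constants.

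Plugging the lemma into the product gives $\Pr(\sum_i X_i \ge t) \le \exp(-\lambda t + C \lambda^2 \sum_i K_i^2)$ for every $\lambda \ge 0$. Minimizing the exponent over $\lambda$ — the optimizer being $\lambda = t/(2C\sum_i K_i^2)$ — produces $\Pr(\sum_i X_i \ge t) \le \exp(-t^2/(4C\sum_i K_i^2))$, which is the one-sided claim with $c = 1/(4C)$; symmetrizing as above finishes the proof.

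I expect the main obstacle to be the MGF lemma, specifically making it hold for \emph{all} $\lambda$ rather than only for $|\lambda|$ small. The small-$\lambda$ regime is the routine Taylor/geometric-series computation; the large-$\lambda$ regime genuinely has to be handled, because the Chernoff-optimal $\lambda = t/(2C\sum_i K_i^2)$ can be of order $1/K_i$ or larger in the deep-tail regime, and it requires the separate Young's-inequality argument plus careful bookkeeping of which normalization of the sub-Gaussian norm is in force so that $\E[\exp(X_i^2/(cK_i^2))] \le 2$ is actually available. Everything downstream — the product over $i$, the Chernoff step, and the optimization over $\lambda$ — is mechanical given the preceding material.
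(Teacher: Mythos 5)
Your proposal is correct and follows exactly the route the paper intends: the paper does not prove this theorem itself (it is quoted from \cite[Chap 2]{versh_book}), but the ``Chernoff bounding idea'' section immediately preceding it prescribes precisely your argument --- Chernoff's step, a per-summand MGF bound $\E[\exp(\lambda X_i)] \le \exp(C\lambda^2 K_i^2)$, a product over $i$ by independence, and optimization over $\lambda$. Your handling of the two regimes of $\lambda$ in the MGF lemma (Taylor/geometric series for $|\lambda| \lesssim 1/K_i$, Young's inequality plus $\E[\exp(X_i^2/(cK_i^2))]\le 2$ for large $|\lambda|$) is the standard and correct way to make that lemma hold for all $\lambda$, matching the cited reference.
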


\begin{theorem}[Sub-exponential Bernstein inequality]
Let $X_1, X_2, \dots X_n$ be independent zero-mean sub-exponential r.v.s with sub-exponential norm $K_i$. Then, for every $t \ge 0$,
\[
\Pr( | \sum_i X_i | \ge t ) \le 2 \exp\left( - c \min \left(  \frac{t^2}{\sum_i K_i^2 } , \frac{t}{\max_i K_i }  \right)  \right)
\]
\end{theorem}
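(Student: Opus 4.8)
The plan is to carry out the Chernoff bounding scheme described above, the only new ingredient being a moment generating function (MGF) bound for a single sub-exponential variable that holds only on a bounded interval of the parameter $\lambda$; the fact that this interval is bounded (of length of order $1/K_i$) is exactly what forces the two regimes in the final bound. Throughout, write $\sigma^2 := \sum_i K_i^2$ and $K_* := \max_i K_i$, and recall that each $X_i$ is zero-mean.

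First I would establish the single-variable estimate: there are numerical constants $c, C$ such that, for each $i$,
\[
\E[\exp(\lambda X_i)] \le \exp\!\left(C K_i^2 \lambda^2\right) \quad \text{whenever } |\lambda| \le \frac{c}{K_i}.
\]
This uses the moment characterization of the sub-exponential norm, $K_i = C \sup_{p \ge 1} \frac{1}{p} \E[|X_i|^p]^{1/p}$: expanding $\exp(\lambda X_i) = 1 + \lambda X_i + \sum_{p \ge 2} \lambda^p X_i^p / p!$, the first-order term vanishes because $\E[X_i] = 0$, the bound $\E[|X_i|^p] \le (p K_i)^p$ controls each remaining term, and $\sum_{p \ge 2} (|\lambda| K_i e)^p$ is a convergent geometric series dominated by $C K_i^2 \lambda^2$ once $|\lambda| K_i$ is below a small absolute constant.

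Next, for $0 \le \lambda \le c/K_*$ (so that $|\lambda| \le c/K_i$ for every $i$), apply Markov's inequality to $e^{\lambda \sum_i X_i}$ and use independence to get
\[
\Pr\!\left(\sum_i X_i \ge t\right) \le e^{-\lambda t} \prod_i \E[\exp(\lambda X_i)] \le \exp\!\left(-\lambda t + C \sigma^2 \lambda^2\right).
\]
Now optimize the exponent over the admissible range of $\lambda$. The unconstrained minimizer of $-\lambda t + C \sigma^2 \lambda^2$ is $\lambda^\circ = t / (2 C \sigma^2)$. If $\lambda^\circ \le c/K_*$, that is, if $t$ is at most a constant multiple of $\sigma^2 / K_*$, then substituting $\lambda = \lambda^\circ$ yields the sub-Gaussian tail $\exp(-c\, t^2 / \sigma^2)$; otherwise take $\lambda = c/K_*$, which still lies in the interval $[0, \lambda^\circ]$ on which the exponent is decreasing, and obtain $\exp(-c\, t / K_*)$ after absorbing constants. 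In either case the bound is at most $\exp\!\left(-c \min\!\left(t^2 / \sigma^2,\, t / K_*\right)\right)$. Applying the same argument to $-X_i$ in place of $X_i$ (the hypotheses are symmetric in sign) and union-bounding over the events $\{\sum_i X_i \ge t\}$ and $\{\sum_i X_i \le -t\}$ produces the factor $2$ in the statement.

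The main obstacle is the single-variable MGF bound and the care it demands: sub-exponential variables need not have a finite MGF for all $\lambda$, so one must (i) restrict to $|\lambda| \le c/K_i$ and carry this restriction through to the optimization step, and (ii) use mean-zero to kill the linear term, since without it the exponent would be linear rather than quadratic in $\lambda$ near the origin and the sub-Gaussian part of the tail would disappear. Given that estimate, the remaining steps are routine Chernoff bookkeeping; the one subtlety is that the optimal $\lambda$ may be pushed onto the boundary of the admissible window, which is precisely the mechanism producing the minimum of the two terms.
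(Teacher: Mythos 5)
Your proof is correct and follows exactly the route the paper itself prescribes: the paper does not write out a proof of this theorem but states that it is taken from Vershynin's book and that all such inequalities follow from the Chernoff/MGF bounding technique it sketches, which is precisely your argument (restricted-range MGF bound for a single sub-exponential variable, product over independent summands, constrained optimization of $\lambda$ giving the two regimes, and a union bound over the two tails). No gaps; the handling of the boundary case $\lambda = c/K_*$ and the use of mean-zero to obtain a quadratic exponent are the right points of care.
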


\section{Probability results: Epsilon netting argument used for extending union bound to uncountable but compact sets}\label{epsnet}
The following discussion is taken from \cite[Chap 4]{versh_book}.
An ``epsilon net'' is a finite set of points that is used to ``cover'' a compact set by balls of radius $\eps$. More precisely, it is a finite set of points that are such that any point on the compact set is within an $\eps$ distance of some point in the epsilon-net. We use the bounds on the size of the smallest epsilon-net that covers a hyper-sphere to convert a scalar concentration bound into a bound on the minimum and maximum singular values of a large random matrix.

\begin{definition}[Epsilon net on a sphere]
We say $\mathcal{N}_\eps$ is an $\eps$-net covering $\calS_{n-1}$ in Euclidean distance if $\mathcal{N}_\epsilon \subset \S^{n-1}$  and if, for any $\x \in \S_{n-1}$, there exists a $\bar{\x} \in \mathcal{N}_\eps$ s.t. $\|\x - \bar{\x}\| \le \eps$.
\end{definition}
It can be shown, using volume arguments \cite{versh_book}, that there exists an epsilon-net, $\mathcal{N}_\eps$, covering $\calS_{n-1}$ whose cardinality can be bounded as
\[
|\mathcal{N}_\eps| \le (1 + 2/\eps)^n
\]
Using this bound, the following result can be proved for obtaining a high probability bound on the l2-norm of a matrix $\M$ with random entries.



\begin{theorem}[Bounding $\|\M\|$]
For an $n \times r$ matrix $\M$ and fixed vectors $\w, \z$ with, $\w \in \calS_n$ and $\z \in \calS_r$, suppose that
\[
|\w^\top \M \z | \le b_0 \text{ w.p. at least } 1-p_0 .
\]
where $b_0$ does not depend on $\w,\z$. Then,
\[
\|\M\| \le 1.4 b_0  \text{ w.p. at least } 1- \exp( (\log 17) (n+r) ) \cdot p_0
\]
\label{epsnet_Mwz}
\end{theorem}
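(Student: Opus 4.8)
The plan is the standard $\eps$-net (covering) argument: discretize the two unit spheres, apply a union bound of the given pointwise tail estimate over the resulting finite sets, and pay a small multiplicative constant to pass from the net back to the supremum over the whole spheres.

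First I would fix $\eps = 1/8$ and invoke the volumetric covering bound stated just above the theorem: there exist an $\eps$-net $\mathcal{N}_\w \subset \calS_n$ with $|\mathcal{N}_\w| \le (1+2/\eps)^n = 17^n$, and an $\eps$-net $\mathcal{N}_\z \subset \calS_r$ with $|\mathcal{N}_\z| \le 17^r$. The product set $\mathcal{N}_\w \times \mathcal{N}_\z$ then has cardinality at most $17^{n+r} = \exp\big((\log 17)(n+r)\big)$.

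Second, I would establish the deterministic comparison bound
\[
\|\M\| \ \le\ \frac{1}{1-2\eps}\, \max_{\bar\w \in \mathcal{N}_\w,\ \bar\z \in \mathcal{N}_\z} \big|\bar\w^\top \M \bar\z\big|.
\]
To prove it, write $\|\M\| = \max_{\w \in \calS_n,\ \z \in \calS_r} \w^\top \M \z$ (the maximum is attained by compactness and continuity), pick $\bar\w \in \mathcal{N}_\w$ and $\bar\z \in \mathcal{N}_\z$ with $\|\w-\bar\w\| \le \eps$, $\|\z-\bar\z\| \le \eps$, and expand $\w^\top \M \z = \bar\w^\top \M \bar\z + (\w-\bar\w)^\top \M \z + \bar\w^\top \M (\z-\bar\z)$. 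Each of the last two terms is bounded by $\eps\|\M\|$ (using $\|\w\|,\|\z\|,\|\bar\w\| \le 1$), so taking the maximum over $\w,\z$ gives $\|\M\| \le \max_{\mathcal{N}_\w \times \mathcal{N}_\z}|\bar\w^\top \M \bar\z| + 2\eps\|\M\|$, and rearranging yields the display. With $\eps = 1/8$ the prefactor equals $4/3 < 1.4$.

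Third, I would combine the two ingredients by a union bound. For each fixed pair $(\bar\w,\bar\z) \in \mathcal{N}_\w \times \mathcal{N}_\z$ the hypothesis gives $|\bar\w^\top \M \bar\z| \le b_0$ with probability at least $1-p_0$; crucially, $b_0$ does not depend on the chosen vectors, so the same threshold applies to every pair. Taking the union over the at most $\exp((\log 17)(n+r))$ pairs, the event $\{\max_{\mathcal{N}_\w \times \mathcal{N}_\z}|\bar\w^\top \M \bar\z| \le b_0\}$ holds with probability at least $1 - \exp((\log 17)(n+r))\,p_0$, and on this event the comparison bound gives $\|\M\| \le 1.4\,b_0$.

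\textbf{Main obstacle.} There is no deep difficulty: the only points requiring care are the two-sided net step (approximating $\w$ \emph{and} $\z$ simultaneously rather than just one of them) and the bookkeeping of the covering constant, where one must choose $\eps$ small enough that $1/(1-2\eps) \le 1.4$ while keeping the covering-number base equal to $17$; the choice $\eps = 1/8$ does both.
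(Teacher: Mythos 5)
Your proposal is correct and follows essentially the same route as the paper: $\eps$-nets of cardinality $17^n$ and $17^r$ on the two spheres with $\eps = 1/8$, a union bound over the net pairs, and a deterministic comparison lemma to pass from the net maximum to $\|\M\|$. The only cosmetic difference is that your telescoping gives the prefactor $1/(1-2\eps) = 4/3$ while the paper cites the Vershynin-style bound $1/(1-2\eps-\eps^2)$; both are below $1.4$, so the conclusion is unaffected.
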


\begin{proof}
Denote $\eps_0$-nets covering $\calS_{n-1}$ and $\calS_{r-1}$ by $\bar\calS_{n-1}$ and $\bar\calS_{r-1}$.
Using union bound,  w.p. at least $1 - (1+2/\eps_0)^{n+r} p_0$,
\bi
\item $\max_{\w \in \bar\calS_{n-1}, \z \in \bar\calS_{r-1}} |\w^\top \M \z | \le b_0$ and
\item $\|\M\| := \max_{\w \in \calS_{n-1}, \z \in \calS_{r-1}} \w^\top \M \z  \le \max_{\w \in \calS_{n-1}, \z \in \calS_{r-1}} |\w^\top \M \z | \le \frac{1}{1 - 2\eps_0 - \eps_0^2} b_0$.
\ei
The proof of the second item above follows that of Lemma 4.4.1 of \cite{versh_book}.

Using $\eps_0=1/8$ gives the final conclusion.
\end{proof}

\section{Probability results: bounding sums of independent matrix r.v.s (matrix concentration bounds)}

By combining Theorem \ref{epsnet_Mwz} given above with the scalar sub-exponential Bernstein or sub-Gaussian Hoeffding inequalities, we obtain the following two results which have been widely used in the LR recovery and phase retrieval literature. These study sums of rank-one matrices which are outer products of specific types of random vectors (r.vec). The last result below is the matrix Bernstein inequality.

\begin{corollary}[Sum of rank-one matrices that are outer products of two sub-Gaussian r.vecs. (repeated from Sec \ref{keyres})]\label{subexpo_epsnet_repeat}
Consider a sum of $m$ zero-mean independent rank-one $n \times r$ random matrices $\x_i \z_i^\top$ with $\x_i, \z_i$ being sub-Gaussian random vectors with sub-Gaussian norms $K_{x,i}, K_{z,i}$ respectively.
For a $t \ge 0$,
\[
\|\sum_{i=1}^m \x_i \z_i^\top\| \le 1.4 t
\]
with probability at least
\[
1- \exp\left( (\log 17) (n+r) - c \min \left(  \frac{t^2}{\sum_i (K_{x,i}, K_{z,i})^2 } , \frac{t}{\max_i (K_{x,i}, K_{z,i}) }  \right)  \right)
\]
\end{corollary}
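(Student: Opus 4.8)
The plan is to reduce the operator-norm bound to a one-dimensional sub-exponential Bernstein estimate and then to upgrade it to a bound over all test directions using the $\eps$-net device already packaged as Theorem~\ref{epsnet_Mwz}.

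First I would fix unit vectors $\w \in \calS_{n-1}$ and $\a \in \calS_{r-1}$ and expand
\[
\w^\top \left(\sum_{i=1}^m \x_i \z_i^\top\right) \a = \sum_{i=1}^m (\w^\top \x_i)(\z_i^\top \a).
\]
Since $\x_i$ is sub-Gaussian with sub-Gaussian norm $K_{x,i}$, the scalar $\w^\top \x_i$ is sub-Gaussian with norm at most $K_{x,i}$ (this is essentially the definition of the sub-Gaussian norm of a random vector), and similarly $\z_i^\top \a$ is sub-Gaussian with norm at most $K_{z,i}$. By the Fact that a product of two sub-Gaussian r.v.s is sub-exponential, each summand $X_i := (\w^\top \x_i)(\z_i^\top \a)$ is sub-exponential with sub-exponential norm at most $K_{x,i} K_{z,i}$ (the absolute constant in that Fact is absorbed into $c$). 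The hypothesis $\E[\x_i \z_i^\top] = \zero$ gives $\E[X_i] = \w^\top \E[\x_i\z_i^\top]\a = 0$, so the $X_i$ are independent, zero-mean, and sub-exponential with parameters $K_i := K_{x,i}K_{z,i}$.

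Next I would apply the sub-exponential Bernstein inequality to $\sum_i X_i$ to obtain, for every $t \ge 0$,
\[
\Pr\left( \left|\, \w^\top \left(\sum_i \x_i \z_i^\top\right) \a \,\right| \ge t \right) \le 2\exp\left( - c \min\left( \frac{t^2}{\sum_i K_i^2},\ \frac{t}{\max_i K_i} \right) \right).
\]
The right-hand side does not depend on $\w$ or $\a$, because the sub-Gaussian norms are properties of the random vectors and not of the chosen directions; this is precisely the uniform-in-directions hypothesis of Theorem~\ref{epsnet_Mwz}, with $b_0 = t$ and $p_0$ equal to the displayed tail probability. Invoking that theorem gives $\|\sum_i \x_i \z_i^\top\| \le 1.4 t$ with probability at least $1 - \exp((\log 17)(n+r))\cdot p_0$; absorbing the harmless factor $2$ into the constant $c$ (or, using $n+r\ge 1$, into the $(\log 17)(n+r)$ term) yields the stated bound.

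I do not expect a genuine obstacle: the corollary is a bookkeeping assembly of pieces already in hand. The only points that need a moment's care are (i) that each $X_i$ is genuinely zero-mean, which uses the stated zero-mean hypothesis on $\x_i \z_i^\top$ rather than any independence of $\x_i$ from $\z_i$; and (ii) that the scalar tail bound must be made uniform in $(\w,\a)$ before netting, since Theorem~\ref{epsnet_Mwz} requires $b_0$ and $p_0$ free of the test directions --- both hold because a vector sub-Gaussian norm controls the sub-Gaussian norm of every one-dimensional marginal.
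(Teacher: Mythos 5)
Your proposal is correct and follows essentially the same route the paper intends: reduce to the scalar bilinear form $\w^\top(\sum_i \x_i\z_i^\top)\a$, note each summand is a zero-mean sub-exponential r.v.\ with norm $K_{x,i}K_{z,i}$ by the product-of-sub-Gaussians fact, apply the sub-exponential Bernstein inequality, and then lift to the operator norm via Theorem~\ref{epsnet_Mwz}. Your added care about the uniformity of the tail bound in $(\w,\a)$ and about absorbing the factor of $2$ into the constant is exactly the right bookkeeping.
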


By combining Theorem \ref{epsnet_Mwz} with the scalar sub-Gaussian Hoeffding inequality, we conclude the following.

\begin{corollary}[Sum of rank-one matrices that are outer products of a sub-Gaussian r. vec. and a bounded r.vec. (repeated from Sec \ref{keyres})]\label{subG_epsnet_repeat}
Consider a sum of $m$ zero-mean independent rank-one $n \times r$ random matrices $\x_i \z_i^\top$ with $\x_i$ being sub-Gaussian random vector with sub-Gaussian norms $K_{x,i}$ and $\z_i$ being a bounded random vector with $\|\z_i\| \le L_i$.
Then, clearly, for any $\w, \w'$, $\w^\top \x_i \z_i^\top \w'$ is a sub-Gaussian r.v. with sub-Gassian norm $K_{x,i} L_i$.
Thus, for a $t \ge 0$,
\[
\|\sum_{i=1}^m \x_i \z_i^\top\| \le 1.4 t
\]
with probability at least
\[
1- \exp\left( (\log 17) (n+r) - c  \frac{t^2}{\sum_i (K_{x,i} L_i)^2 } \right)
\]
\end{corollary}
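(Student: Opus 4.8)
The plan is to reduce the operator-norm bound to a scalar tail bound for a single fixed pair of unit vectors and then invoke Theorem~\ref{epsnet_Mwz} to pass from fixed vectors to the supremum over the two spheres. First I would fix $\w \in \calS_{n-1}$ and $\z \in \calS_{r-1}$ and expand
\[
\w^\top \Big( \sum_{i=1}^m \x_i \z_i^\top \Big) \z = \sum_{i=1}^m (\w^\top \x_i)(\z_i^\top \z).
\]
Each summand is the product of $\w^\top \x_i$, which is sub-Gaussian with sub-Gaussian norm at most $\|\w\|\,K_{x,i} = K_{x,i}$ since $\x_i$ is a sub-Gaussian vector, and $\z_i^\top\z$, which obeys the deterministic bound $|\z_i^\top \z| \le \|\z_i\|\,\|\z\| \le L_i$. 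Hence for every $s \ge 0$,
\[
\Pr\big(|(\w^\top \x_i)(\z_i^\top \z)| > s\big) \le \Pr\big(|\w^\top \x_i| > s/L_i\big) \le 2\exp\big(-c\, s^2/(K_{x,i} L_i)^2\big),
\]
so $(\w^\top \x_i)(\z_i^\top \z)$ is sub-Gaussian with sub-Gaussian norm at most $K_{x,i} L_i$; note this uses only the almost-sure bound $\|\z_i\| \le L_i$ and does not require $\x_i$ and $\z_i$ to be independent of each other, only that the matrices $\x_i \z_i^\top$ are independent across $i$. Each summand is also zero-mean because $\E[\x_i \z_i^\top] = \zero$.

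Next I would apply the sub-Gaussian Hoeffding inequality to the independent zero-mean sub-Gaussian scalars $(\w^\top \x_i)(\z_i^\top \z)$, which gives, for every $t \ge 0$,
\[
\Pr\Big( \Big| \w^\top \big( \sum_{i=1}^m \x_i \z_i^\top \big) \z \Big| \ge t \Big) \le 2\exp\Big( -c\, \frac{t^2}{\sum_{i=1}^m (K_{x,i} L_i)^2} \Big).
\]
This is exactly the hypothesis of Theorem~\ref{epsnet_Mwz} with $b_0 = t$, which crucially does not depend on $\w$ or $\z$, and $p_0 = 2\exp(-c\, t^2/\sum_i (K_{x,i}L_i)^2)$. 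Theorem~\ref{epsnet_Mwz} then yields $\| \sum_{i=1}^m \x_i \z_i^\top \| \le 1.4\,t$ with probability at least $1 - \exp((\log 17)(n+r))\,p_0$, and absorbing the leading factor $2$ of $p_0$ into the exponent (for instance into the numerical constant $c$, or using $\log 2 < \log 17$) produces the stated bound.

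\textbf{Main obstacle.} There is no genuine obstacle here: this is a routine combination of the epsilon-net lemma (Theorem~\ref{epsnet_Mwz}) with the scalar sub-Gaussian Hoeffding inequality, entirely parallel to the proof of Corollary~\ref{subexpo_epsnet_repeat} (which instead pairs Theorem~\ref{epsnet_Mwz} with sub-exponential Bernstein). The only points needing care are (i) bounding the sub-Gaussian norm of the product $(\w^\top\x_i)(\z_i^\top\z)$ by $K_{x,i}L_i$ using just the pointwise bound on $\|\z_i\|$, so that no independence between $\x_i$ and $\z_i$ is assumed; and (ii) confirming that the scalar tail constant $b_0 = t$ is uniform over the sphere -- which it is, since $\sum_i (K_{x,i}L_i)^2$ involves neither $\w$ nor $\z$ -- this uniformity being exactly what licenses the application of Theorem~\ref{epsnet_Mwz}.
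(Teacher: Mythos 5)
Your proposal is correct and follows exactly the route the paper intends: bound the scalar $\w^\top\x_i\z_i^\top\z$ as a sub-Gaussian r.v.\ with norm $K_{x,i}L_i$ (using only the almost-sure bound on $\|\z_i\|$), apply the scalar sub-Gaussian Hoeffding inequality to the fixed-direction sum, and then invoke Theorem~\ref{epsnet_Mwz} with $b_0 = t$ uniform over the spheres. The paper gives no further detail beyond this same combination, so there is nothing to add.
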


For bounded matrices, the following matrix Bernstein result gives a much tighter bound than what would be obtained by combining scalar bounded Bernstein and Theorem \ref{epsnet_Mwz}. See Sec. \ref{prelims} for details.

\begin{theorem}[Matrix Bernstein (repeated from Sec \ref{keyres})]\label{mat_bern_repeat}
    Let $\X_1, \X_2, \dots \X_m$ be independent, zero-mean, $n \times r$ matrices with $\|\X_i\| \le L$ for all $i=1,2,...m$. Define the ``variance parameter'' of the sum
\[
 v := \max\left(  \|\sum_i \E[\X_i \X_i^\top]\|,  \|\sum_i \E[ \X_i^\top \X_i] \| \right).
\]
Then,
\[
\| \sum_{i=1}^m \X_i \| \le t
\]
with probability at least
\[
1 - 2 \exp\left(\log \max(n,r)  -  c \min\left(\frac{t^2}{v}, \frac{t}{L} \right)  \right)
\]
\end{theorem}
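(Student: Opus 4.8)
The plan is to use the matrix Laplace transform (``matrix Chernoff'') method, whose core analytic input is Lieb's concavity theorem; this parallels the scalar Chernoff bounding argument reviewed earlier. First I would reduce from rectangular to symmetric matrices via Hermitian dilation. For an $n\times r$ matrix $\M$, let $\mathcal{H}(\M)$ denote the $(n+r)\times(n+r)$ symmetric matrix whose off-diagonal blocks are $\M$ and $\M^\top$ and whose diagonal blocks are zero. Then $\mathcal{H}$ is linear, $\lambda_{\max}(\mathcal{H}(\M))=\norm{\M}$ (the nonzero eigenvalues of $\mathcal{H}(\M)$ are $\pm\sigma_j(\M)$), and $\mathcal{H}(\M)^2=\mathrm{diag}(\M\M^\top,\M^\top\M)$. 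Putting $\Y_i:=\mathcal{H}(\X_i)$, the $\Y_i$ are independent, symmetric, zero-mean, and satisfy $\norm{\Y_i}\le L$, while $\sum_i\E[\Y_i^2]=\mathrm{diag}(\sum_i\E[\X_i\X_i^\top],\,\sum_i\E[\X_i^\top\X_i])$ has norm exactly $v$. Since $\norm{\sum_i\X_i}=\lambda_{\max}(\sum_i\Y_i)$, it suffices to bound $\Pr(\lambda_{\max}(\sum_i\Y_i)\ge t)$ in terms of $v$, $L$, and $n+r\le 2\max(n,r)$.

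Next, write $\S:=\sum_i\Y_i$. For any $\theta>0$, the event $\lambda_{\max}(\S)\ge t$ forces $\trace\exp(\theta\S)\ge e^{\theta t}$, since the largest eigenvalue alone contributes this much and the rest are positive; Markov's inequality then gives $\Pr(\lambda_{\max}(\S)\ge t)\le e^{-\theta t}\,\E[\trace\exp(\theta\S)]$. Everything thus reduces to bounding the matrix moment generating function $\E[\trace\exp(\theta\sum_i\Y_i)]$, and this is where the main obstacle lies: because $\exp(\bm{A}+\bm{B})\ne\exp(\bm{A})\exp(\bm{B})$ for noncommuting matrices, the scalar factorization $\E[e^{\theta\sum X_i}]=\prod_i\E[e^{\theta X_i}]$ has no direct analogue. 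I would resolve this using Lieb's concavity theorem --- for a fixed symmetric $\bm{W}$, the map $\bm{A}\mapsto\trace\exp(\bm{W}+\log\bm{A})$ is concave on positive-definite $\bm{A}$ --- which, combined with Jensen's inequality applied one summand at a time (conditioning on the remaining $\Y_j$), yields the subadditivity bound $\E[\trace\exp(\sum_i\theta\Y_i)]\le\trace\exp(\sum_i\log\E[e^{\theta\Y_i}])$. I would cite Lieb's theorem rather than reprove it, as it is well outside the scope here.

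It remains to control each factor and optimize. For each $i$, the scalar function $y\mapsto(e^{\theta y}-1-\theta y)/y^2$ (extended continuously at $y=0$) is increasing, so on the spectrum $[-L,L]$ of $\Y_i$ it is dominated by its value at $L$; the spectral transfer rule gives $e^{\theta\Y_i}\preceq\I+\theta\Y_i+g(\theta)\,\Y_i^2$ with $g(\theta):=(e^{\theta L}-1-\theta L)/L^2$. Taking expectations, using $\E[\Y_i]=\bm{0}$ and $\I+\bm{C}\preceq e^{\bm{C}}$, and then operator monotonicity of $\log$, gives $\log\E[e^{\theta\Y_i}]\preceq g(\theta)\,\E[\Y_i^2]$; summing over $i$ and using $\norm{\sum_i\E[\Y_i^2]}=v$ together with $\trace\exp(\bm{C})\le(n+r)\,e^{\lambda_{\max}(\bm{C})}$ yields $\E[\trace\exp(\theta\S)]\le(n+r)\exp(g(\theta)v)$. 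Combining with the Chernoff step, $\Pr(\lambda_{\max}(\S)\ge t)\le(n+r)\exp(-\theta t+g(\theta)v)$ for all $\theta>0$. Finally I would use the elementary estimate $g(\theta)\le\tfrac{\theta^2/2}{1-\theta L/3}$ for $0<\theta<3/L$, take $\theta=t/(v+Lt/3)$ to obtain $(n+r)\exp\!\big(-\tfrac{t^2/2}{\,v+Lt/3\,}\big)$, and split into the cases $Lt\le v$ and $Lt>v$ to convert the exponent into $-c\min(t^2/v,\,t/L)$; using $n+r\le 2\max(n,r)$ then gives the stated bound $2\exp(\log\max(n,r)-c\min(t^2/v,\,t/L))$ after adjusting $c$.
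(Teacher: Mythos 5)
Your argument is correct and is exactly the standard matrix Laplace-transform proof (Hermitian dilation, Tropp's master bound via Lieb's concavity theorem, the Bernstein MGF estimate, and the split into the $Lt\le v$ and $Lt>v$ regimes); all the individual steps check out, including the identification of $\|\sum_i\E[\mathcal{H}(\X_i)^2]\|$ with $v$ and the final constant bookkeeping. The paper itself states this theorem without proof, as a result quoted from the random matrix theory literature (\cite{versh_book}), so your derivation supplies precisely the intended reference argument rather than an alternative to anything in the text.
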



We can also obtain a corollary for the bounded Bernstein inequality by combining it with Theorem \ref{epsnet_Mwz}, but as we explain below, that is not useful. It is not as tight as directly using the matrix Bernstein inequality. We state this inequality as a remark next just to explain why it is not useful and why the matrix Bernstein inequality should be used instead for sums of bounded matrices.

\begin{remark}[Sum of outer products of bounded random vectors (Not Useful)]
Consider a sum of $m$ zero-mean independent bounded $n \times r$ random matrices $\X_i$ with $\|\X_i \| \le \M_i$. Let $\sigma_i^2 : = \E[\|\X_i\|^2]$
For a $t \ge 0$,
\[
\|\sum_{i=1}^m \X_i\| \le 1.4 t
\]
with probability at least
\[
1- \exp\left( (\log 17) (n+r)  - c \min\left( \frac{t^2}{\sum_i \E[\|\X_i\|^2]},  \frac{t}{max_i \|\X_i\|} \right) \right)
\]
\label{not_useful}
\end{remark}

The matrix Bernstein bound given in Theorem \ref{mat_bern_repeat} is always better than the result of the above Remark \ref{not_useful}. The reason is that the positive term in the exponent is $\max(n,r)$ in the above case and $\log \max(n,r)$ in matrix Bernstein. Consider the negative term in the exponent. $L$ is the same in both cases, but variance parameter $v$ of matrix Bernstein is  upper bounded by the one used in the above remark.

\part{Open Questions: AltGDmin and Generalized-AltGDmin for other Partly-Decoupled Problems}

\newcommand{\Smat}{\bm{S}}
\newcommand{\Smatstar}{\Smat^*}

\chapter{Open Questions}\label{openques}
We describe open questions next.

\section{Guarantees for a general optimization problem} 
An open question is: what assumptions do we need on an optimization problem to show that AltGDmin for it will converge. And can we bound its iteration complexity.
Moreover, can we prove results similar to those for GD for AltGDmin and under what assumptions.  Finally, when can these be extended to analyze Stochastic-AltGDmin which replaces the GD step of AltGDmin by Stochastic GD.

\section{Generalized AltGDmin}
AltGDmin has been established to be a useful (faster, more communication-efficient, or both) modification of AltMin for certain partly decoupled problems. Its overall idea is to replace the slower of the two minimization steps of AltMin by a single GD step. 
For certain problems, the natural split up of the unknown variable set consists of three or more subsets, an example is robust PCA and extensions -- robust matrix completion and robust LRCS -- described below. For these problems, one can generalize the AltGDmin idea as follows. Consider the block coordinate descent (BCD) algorithm which is a generalization of AltMin for multiple blocls.
 Replace minimization for the slowest variable set in BCD with GD. We explain the idea in detail below using robust PCA as an example. 


\section{Robust PCA and Extensions: A Partly Decoupled Example Problem for Generalized AltGDmin}
The modern definition of Robust PCA \cite{rpca,rpca2,robpca_nonconvex,rpca_gd,rmc_gd} is the following: recover  a LR matrix $\Xstar=\Ustar \Bstar$ and a sparse matrix $\Sstar$ from observed data which is their sum, i.e., from
\[
\Y:= \Ustar \Bstar + \S^*
\]
This problem occurs in foreground background separation for videos as well as in making sense of survey data with some outlier entries.  Thus, the optimization problem to be solved is
\[
\min_{\U,\B, \S: \U^\top \U = \I} f(\U,\B,\S):= \sum_k \|\y_k -  \s_k - \U \b_k\|_2^2
\]
In the above, clearly, $\Za=\U$ and $\Zb= \{\B,\S\}$.
The first approach to try would be to use AltGDmin with this split-up. However, this is a bad idea for two reasons. First there is no good approach to jointly solve for $\B,\S$.
Second, and more importantly, this approach is ignoring the important fact that the sparse component is the one with no bound on its entries' magnitudes,  hence it is the one that needs to be initialized and updated first. The entries of $\U$ are bounded because of the unit norm column constraint, while those of $\B$ are bounded because of the incoherence assumption and bounded condition number.
Our recommended  approach in this case is to consider the following generalization of AltGDmin (Gen-AltGDmin). 
\bi
\item Initialize $\hat\S$ and then $\hat\U$: Initialize $\hat\S$ using thresholding with threshold proportional to $\alpha= \sigmax r/n$.  Obtain $\hat\U$ as the top $r$ singular vectors of $\X_0 := \Y - \hat\S$ 

\item  Run $T$ iterations that alternate between the following three steps:
\bi
\item Obtain $\hat\b_k = \U^\top (\y_k - \A_k \hat\s_k)$ for all $k \in [q]$.
\item  Obtain $\hat\s_k$ by thresholding  $(\y_k - \A_k \hat\U \hat\b_k)$ using a threshold proportional to $0.5 \alpha$,  for all $k \in [q]$ ($\alpha$ decreases with iteration).%
\item Obtain $\hat\U$ by a GD step followed by orthonormalization. 
\ei
\ei
An open question is when does the above algorithm work? Can we bound its iteration complexity under a reasonable bound on the fraction of outliers (sparse entries) in each row and column?

For Robust LRCS, the goal is to recover $\Ustar \Bstar$ from $\Y:= \A_k \Ustar \Bstar + \A_k \S^*$. The overall idea above will generalize. Thresholding will get replaced by solving a compressive sensing problem. Update of $\b_k$ will solve an LS problem.  Robust LRMC would be handled similarly with some changes because we can only hope to estimate the sparse component of the observed entries. 

\newcommand{\tL}{\bm{\mathcal{L}}}
\renewcommand{\tZ}{\bm{\mathcal{Z}}}
\renewcommand{\tS}{\bm{\mathcal{S}}}
\renewcommand{\tY}{\bm{\mathcal{Y}}}
\newcommand{\tE}{\bm{\mathcal{E}}}

\section{Partly Decoupled Tensor LR: Tensor LR  slicewise sensing}
The goal is to learn / recover / estimate a $(J+1)$-th order tensor from provided data. The class of tensor problems that we focus on treats the first $J$ dimensions of the tensor differently than the last one \cite{gdmin_tensor}. This is used to model tensor time or user sequences, e.g., multidimensional image sequences, or multiple product ratings by users. 
For the dynamic MRI and federated sketching problems, this models the fact that we have time sequence of 2D or 3D (or higher-dimensional) images. If we consider videos or single-slice dynamic MRI, then $J=2$. If we consider dynamic multi-slice MRI, then $J=3$ and so on.
For the recommender systems one, this models the fact that we are trying to design a system for $J$ products and the last dimension is the different users. Thus for a Netflix movie and shows recommendation system, we would use $J=2$, while if the products are movies, shows, and documentaries, we would use $J=3$ and so on. in case MRI data for different types of imaging are combined

For a $(J+1)$-th order tensor, $\tZ$, we use $\tZ_k$ to denote the $k$-th frontal slice, e.g.,  if $J=2$, then $\tZ_k=\tZ(:,:,k)$ is a matrix for each $k$. 
There are many ways to define rank and the notion of LR for tensors; we assume a Tucker LR model \cite{kolda2009tensor} on $\tL$, since it is the most relevant model for our asymmetric setting. We define this next.
%

We need to learn a $(J+1)$-th order tensors $\tL$ of size $n_{1}\times n_{2} \times \dots \times n_J \times q$ from data $\tY$ of size $m_{1}\times m_{2}  \times \dots \times m_J \times q$ that satisfy
\begin{eqnarray}
    \tY_k:&=&  \tA_k(\tL_k + \tE_k) \ \forall \ k \in [q] \\  
    \tL_k&=& \tG_k \times_{1} \U^{(1)} \times_2 \U^{(2)} \times_3 \dots \times_J \U^{(J)} \ \forall \ k \in [q]
    \label{obsmod}
\end{eqnarray}
when $m_j \ll \min(n_j, q)$, $\tE$ is the modeling error or noise,  $\tL$ is an unknown LR tensor with Tucker ranks $r_1, r_2, \dots r_J,q$ with $\tG$ being a $r_{1}\times r_{2} \dots \times r_J \times q$ core tensor, $\U^{(j)}$ being $n_j \times r_j$ matrices denoting the subspace bases along the various dimensions. Here  $\times_j$ denotes the $j$-th mode product \cite{kolda2009tensor}.  The function $\tA_k(.)$ is a {\em known} dense linear function or is element-wise nonlinear, e.g., in case of phase retrieval. 
%
To solve the above problem, we need to minimize
\begin{align}\label{mostgen_f}
f(\U^{(j)},j \in [J], \tG, \tS) & := \sum_k  \| \tY_k - \tA_k(\tS_k) - \tA_k(\tG_k \times_{1} \U^{(1)} \times_2 \U^{(2)} \times_3 \dots \times_J \U^{(J)})\|_F^2
\end{align}
Notice that this problem is partly decoupled with $\Za = \{\U^{(j)},j=1,2,\dots,J\}$ being the coupled variables and $\Zb = \tG$ being the decoupled one.

\subsubsection{Special case: 3D Tensor LRCS (LR Tensor Slice-wise Sensing)}\label{tensor_lrcs}
We define below the simplest tensor LR extension of LRCS for a third order tensor, thus $J=2$. We studied this numerically in  \cite{gdmin_tensor}.
Our goal is to learn a LR tensor $\tL$ of size $n_{1}\times n_{2}\times q$ from available third-order tensor measurements $\tY$ of size $m_{1}\times m_{2} \times q$ with $m_1 \ll n_1, m_2 \ll n_2$. In this case some of the tensor models simplify as given next.  
\begin{eqnarray}
    \tY_k &=& \tL_k \times_1 \Ph_k \times_2 \Ps_k = \Ph_k \L_k \Ps_k^\top, \ \forall \ k \in [q], \nn \\
\tL_k & = & \tG_k \times_{1}\U \times_{2} \V = \U \G_k \V^\top   \ \forall \ k \in [q] \nn
    \label{obsmod_J2}
\end{eqnarray}
The first equation above models a linear tensor function.
In this $J=2$ special case, it simplifies as a product of the three  matrices. The same is true for the LR model in this case. Here $\U$ is $n_1 \times r_1$, and $\V$ is $n_2  \times r_2$.
We need to solve $\min_{\U, \V, \tG: \ \U^\top \U = \I, \V^\top \V = \I} f(\U, \V, \tG):= \sum_{k=1}^q \|\Y_k  -  \Ph_k  \U \G_k \V^\top \Ps_k^\top \|_F^2$. This is clearly a partly decoupled problem with $\Za= \{\U, \V \}$ and $\Zb = \tG$.
%

AltGDmin in this case proceeds as follows \cite{gdmin_tensor}.
(1) We initialize using a modification of our matrix case idea. Define the initial tensor $\hat\tL_0$ as follows:  $(\hat\tL_0)_k = \Ph_k^\top \Y_k \Ps_k,  \ k \in [q] $. We initialize $\U$ as the top $r_1$ singular vectors of the unfolded matrix $(\tL_0)_{(1)}$: this means unfold $\tL_0$ along the first dimension to get a matrix of size $n_1 \times n_2 q$; and $\V$ as the top $r_2$ singular vectors of $(\tL_0)_{(2)}$. 
(2) We alternatively update $\tG$ and $\{\U,\V\}$ as follows.
(a) Given $\{\U,\V\}$, update $\tG$ by minimizing the above cost function over it. 
Since this decouples, this step consists of $q$ inexpensive least squares (LS) problems. For this $J=2$ special case, these are obtained as $\G_k = (\Ph_k \U)^\dag  \Y_k  ((\Ps_k \V)^\dag)^\top$ for each $k \in [q]$. Here $\M^\dag:= (\M^\top \M)^{-1} \M^\top$. 
(b) Given $\tG$, we update $\{\U,\V\}$ using GD followed by projecting the output onto the space of matrices with orthonormal columns. This is done using the QR decomposition. 
    The QR decomposition is very cheap, it is of order $n_1 r_1^2$ and  $n_2 r_2^2$ respectively.
It can be easily derived that the time complexity of the above algorithm is much lower than that of a matrix LRCS algorithm that vectorizes the first $J=2$ dimensions.  The same is true for the communication cost for a distributed implementation. 

While the above algorithm converges numerically as shown in  \cite{gdmin_tensor}, its theoretical analysis is an open question.


\section{Partly Decoupled Not-Differentiable Problems} \label{examples_not_fully_diff}
%

%

\newcommand{\ccstar}{c^*}
\bfpara{Clustering}
%
%
Data clustering is another set of problems where an AltGDmin type approach can be very beneficial. 
We have data points $\xstar_1, \xstar_2, \dots, \xstar_q$ that belong to one of $\rho$ classes with $\rho$ being a small number. Each $\xstar_k$ is an $n$-length vector (or matrix/tensor) and $n$ is typically large.  We do not know any of the class labels which we denote by $\ccstar_1, \ccstar_2, \dots \ccstar_q$ and would like to find them.
We assume that each of the $\rho$ classes is represented by a ``low-dimensional model'', $\mathcal{M}_j$, $j \in [\rho]$. For example, we could assume a Gaussian mixture model (GMM) where data points from class $j$ occur with probability $p_j$ and have mean and a covariance matrix $\bm\mu_j$ and $ \C_j$ respectively. Without any assumptions on $\C_j$, there are too many ($n^2 \rho$) unknowns. Often a diagonal assumption is made but this may not be valid if different entries of $\xstar_k$ are correlated. A milder assumption is to assume that each $\C_j$ is LR with rank $r \ll n$, i.e., $\C_j \svdeq \U_j \bSigma_j \U_j^\top$ with $\U_j$ being $n \times r$.

Then AltGDmin with  $\Za \equiv \{p_j, \bm\mu_j, \U_j, \bSigma_j, j \in [\rho] \}$ (model parameters for all the classes) and $\Zb \equiv \{\ccstar_k, k \in [q]\}$ (class labels) will be faster than the traditional AltMin based solution (k-means clustering) \footnote{AltMin requires a full model parameters' estimate at each iteration; this is expensive when $\C_j$s are not assumed to be diagonal.}.

\bfpara{Maximum or Mixed linear regression}
This \cite{max_aff_reg,mixed_lin_reg} is another problem that can be solved more efficiently using AltGDmin instead of the AltMin algorithm \cite{max_aff_reg}. This assumes that $y = \max_{j \in [\rho]} \beta^*_j{}^\top \x + w$. The goal is to learn the $\rho$ regression vectors $\beta^*_j, j \in [\rho]$ using training data pairs $\{\xstar_k,y_k\}, k \in [q]$ with $q > \rho n$. Let  $\c_k^*, k \in [q]$ denote the index of the best model for the $k$-the training data pair.  AltMin \cite{max_aff_reg}  alternates between estimating $\c^*_k$s given $\beta_j$'s ($q$ decoupled scalar maximizations) and vice versa: use the now labeled data to learn the $\beta_j$'s by solving an LS problem. AltGDmin would replace this LS by a single GD step.


\bfpara{Unlabeled or Shuffled sensing} This involves recovering $\x$ from $\y:= \bm\Pi \A \x$ when only $\y,\A$ are given  \cite{uls_sparse,uls_rlocal}.
Here $\x$ is the unknown signal, and $\bm\Pi$ is an unknown $m \times m$ permutation matrix. 
This problem occurs in simultaneous location and mapping for robots, multi-target tracking, and record linkage.
Polynomial time solutions for it are possible with some assumptions on $\bm\Pi$, e.g., $\bm\Pi$ is often assumed to be $s$-sparse: i.e., only $s$ or less of its rows are permuted. In this case, one common solution is to initialize $\bm\Pi$ to $\I$ and use AltMin to update $\x$ and $\bm\Pi$ alternatively. The update of $\x$ is a robust regression problem (LS with sparse outliers), while that of updating $\bm\Pi$ is a well-studied discrete optimization problem called linear assignment problem (LAP). AltMin can be speeded up if we replace it by AltGDmin: instead of updating $\x$ by fully solving the robust regression problem in each iteration, we replace it by one GD step. 
Another assumption on $\bm\Pi$ is the $s$-local assumption \cite{uls_rlocal}. Under this assumption, the LAP would be decoupled making it faster.
%
%
%

\appendix

\chapter{Partly Decoupled Optimization Problem: Most General definition} \label{partly_decoup_appendix}

Consider an optimization problem $\arg\min_\Z  g(\Z)$.  We say the problem is decoupled if it can be solved by solving smaller dimensional problems over disjoint subsets of $\Z$. 
To define this precisely, observe that any function $g(\Z)$ can be expressed as a composition of $\gamma$ functions, for a $\gamma \ge 1$,
\[
g(\Z) = h(f^1(\Z), f^2(\Z), \dots f^\gamma(\Z)),
\]
Here $h(.,.,..)$ is a function of $\gamma$ inputs. This is true always since we can trivially let $\gamma=1$, $h(\Z) = \Z$ and $f^1(\Z) = g(\Z)$.

We say that the optimization problem is decoupled if, for a $\gamma > 1$,  $\Z$ can be split into $\gamma$ disjoint subsets
\[
\Z = [\Z_1, \Z_2, \dots \Z_\gamma]
\]
so that
\[
\argmin_\Z  g(\Z) =  [ \argmin_{\Z_1}   f^1(\Z_1), \argmin_{\Z_2} f^2(\Z_2), \dots, \argmin_{\Z_\ell} f^\ell(\Z_\ell), \dots  \argmin_{\Z_\gamma} f^\gamma(\Z_\gamma) ) ]
\]
Observe that, in general, $\argmin$ is a set and the notation $[\calS_1, \calS_2, \dots \calS_\gamma]$ is short for their Cartesian product $\calS_1 \times \calS_2 \times \dots \calS_\gamma$. In words, the set $\argmin_\Z  f(\Z) = \{ [\hat\Z_1, \hat\Z_2, \dots \hat\Z_\gamma]: \hat\Z_1 \in \argmin_{\Z_1}   f^1(\Z_1), \hat\Z_2 \in \argmin_{\Z_2}   f^2(\Z_2), \dots, \hat\Z_\gamma \in \argmin_{\Z_\gamma} f^\gamma(\Z_\gamma) \}$.

%
If $g(\Z)$ is strongly convex, then the $\argmin$ is one unique minimizer $\hat\Z$. In this case, the decoupled functions have a unique minimizer too and $\argmin_{\Z_1}   f^1(\Z_1)$ returns $\hat\Z_1$ and so on, and $\hat\Z =[\hat\Z_1, \hat\Z_2, \dots, \hat\Z_\gamma]$.
%
%
Data-decoupled means that the above holds and that $ef^\ell(\Z_\ll)$ depends only on a disjoint subset $\calD_\ell$ of the data $\calD$. Let $\calD = [\calD_1, \calD_2, \dots \calD_\gamma]$.
We use a subscript to denote the data. Data-decoupled means that
\[
\argmin_\Z  f(\Z) =  [ \argmin_{\Z_1}   f_{\calD_1}^1(\Z_1), \argmin_{\Z_2} f_{\calD_2}^2(\Z_2), \dots, \argmin_{\Z_\ell} f_{\calD_\ell}^\ell(\Z_\ell), \dots  \argmin_{\Z_\gamma} f_{\calD_\gamma}^\gamma(\Z_\gamma) ) ]
\]
Most practical problems that are decoupled are often also data-decoupled. {\em Henceforth we use the term ``decoupled'' to also mean data-decoupled.}

Partly-decoupled is a term used for optimization problems for which the unknown variable $\Z$ can be split into two parts, $\Z = \{\Za, \Zb\}$, so that the optimization over one keeping the other fixed is ``easy'' (closed form, provably correct algorithm exists, or fast).  Decoupled and data-decoupled w.r.t. $\Zb$ means that decoupling holds only for minimization over $\Zb$. To be precise, let
\[
\Zb = [(\Zb)_1, (\Zb)_2, \dots (\Zb)_\gamma] \text{ and  }  \calD = [\calD_1, \calD_2, \dots \calD_\gamma]
\]
Then,
\[
\argmin_\Zb f(\Za,\Zb) =  [ \argmin_{(\Zb)_1}   f_{\calD_1}^1(\Za,(\Zb)_1), \dots, \argmin_{(\Zb)_\ell} f_{\calD_\ell}^\ell(\Za,(\Zb)_\ell), \dots  \argmin_{\Z_\gamma} f_{\calD_\gamma}^\gamma(\Za,(\Zb)_\gamma) ) ]
\]

All the examples of partly decoupled optimization problems that we discuss in this work are those for which $g(\Z) = h(f^1, f^2, \dots f^\gamma) = \sum_{\ell=1}^\gamma f^\ell$ is a sum of the $\gamma$ functions $f^\ell$. In this case,  partly decoupled problems means that
\[
\min_\Zb f(\Za,\Zb)  =  \sum_\ell  \min_{(\Zb)_\ell}  f_{\calD_\ell}^\ell(\Za,\Zb_\ell)
\]

\bibliographystyle{IEEEtran}
\bibliography{./bibfiles/tipnewpfmt_kfcsfullpap,./bibfiles/refs,./bibfiles/citations,./bibfiles/byz,./bibfiles/refs_Silpa} 


\end{document}